\documentclass[twoside,11pt]{article}

\usepackage[preprint]{jmlr2e}

\usepackage{amsmath,amsfonts,bm,mathtools}
\usepackage[colorlinks=true, citecolor=teal, linkcolor=blue, urlcolor=cyan, allbordercolors={1 1 1}]{hyperref}
\usepackage{url}
\usepackage{subcaption}
\usepackage{algorithm}
\usepackage{algpseudocode}
\usepackage{float}
\usepackage{physics}
\usepackage{booktabs}

\def\gI{{\mathcal{I}}}
\def\gM{{\mathcal{M}}}
\def\gR{{\mathcal{R}}}
\def\gX{{\mathcal{X}}}
\def\gY{{\mathcal{Y}}}
\def\gZ{{\mathcal{Z}}}

\def\sE{{\mathbb{E}}} 
\def\sP{{\mathbb{P}}} 
\def\sR{{\mathbb{R}}} 
\def\vw{{\bm{w}}}
\def\bz{{\bm{z}}}

\def\bve{{\boldsymbol{\varepsilon}}}
\def\vz{{\varepsilon_z}}
\def\vy{{\varepsilon_y}}
\def\ve{{(\boldsymbol{\varepsilon})}}
\def\vez{{(\varepsilon_z)}}
\def\vey{{(\varepsilon_y)}}
\newcommand{\validb}{\mathcal{D}^{(b)}_{\mathrm{eval}}}

\newcommand{\trainb}{\mathcal{D}^{(b)}_{\mathrm{train}}}
\newcommand{\train}{\mathcal{D}_{\mathrm{train}}}
\newcommand{\valid}{\mathcal{D_{\mathrm{eval}}}}

\usepackage{xcolor}
\newcommand{\blue}[1]{\textcolor{black}{#1}}
\newcommand{\subfig}[4]{
  \begin{minipage}[b]{#4\textwidth}
    \centering
    \includegraphics[width=\textwidth]{#1}
    \text{(#2)} #3
  \end{minipage}
}

\usepackage{lastpage}
\jmlrheading{23}{2022}{1-\pageref{LastPage}}{1/21; Revised 5/22}{9/22}{21-0000}{Caihong Qin and Yang Bai}

\ShortHeadings{Classification Under LDP}{Qin and Bai}
\firstpageno{1}

\begin{document}

\title{Classification Under Local Differential Privacy with Model Reversal and Model Averaging}

\author{\name Caihong Qin \email caihqin@iu.edu \\
       \addr Department of Epidemiology and Biostatistics\\
       School of Public Health, Indiana University\\
       Indiana, USA
       \AND
       \name Yang Bai \email statbyang@mail.shufe.edu.cn \\
       \addr School of Statistics and Management\\
       Shanghai University of Finance and Economics\\
       Shanghai, China}

\editor{}

\maketitle

\begin{abstract}
Local differential privacy (LDP) has become a central topic in data privacy research, offering strong privacy guarantees by perturbing user data at the source and removing the need for a trusted curator. However, the noise introduced by LDP often significantly reduces data utility. To address this issue, we reinterpret private learning under LDP as a transfer learning problem, where the noisy data serve as the source domain and the unobserved clean data as the target. We propose novel techniques specifically designed for LDP to improve classification performance without compromising privacy: (1) a noised binary feedback-based evaluation mechanism for estimating dataset utility; (2) model reversal, which salvages underperforming classifiers by inverting their decision boundaries; and (3) model averaging, which assigns weights to multiple reversed classifiers based on their estimated utility. We provide theoretical excess risk bounds under LDP and demonstrate how our methods reduce this risk. Empirical results on both simulated and real-world datasets show substantial improvements in classification accuracy.
\end{abstract}

\begin{keywords}
    dataset utility, excess risk, functional data, private learning, transfer learning
\end{keywords}

\section{Introduction}\label{sec:intro}
Data privacy has become a paramount concern as technological innovations continue to permeate daily life. Differential Privacy \citep[DP;][]{dwork2006calibrating} has emerged as a leading framework for privacy-preserving statistical analyses, providing a rigorous definition that limits the information an adversary can learn from released summaries. However, classical DP often assumes a {trusted data curator} who receives the raw data and applies privacy mechanisms. This assumption often fails in scenarios involving untrusted data collectors or highly sensitive information \citep{liu2022gdp}. 
Alternative privacy-preserving approaches such as cryptographic methods \citep{salami2023cryptographic} and Homomorphic Encryption \citep{marcolla2022survey} have been considered. While these methods can protect raw data through encryption and secure computation, they typically introduce high computational overhead and do not inherently address all privacy risks from published statistics \citep{gong2024practical,peng2019danger}. By contrast, Local Differential Privacy \citep[LDP;][]{kasiviswanathan2011can} addresses the problem at the data source. Under LDP, each individual perturbs their own data before sending any information to the collector, thereby removing the need for a trusted curator. This decentralized approach can significantly mitigate privacy leakage and is relatively scalable in real-world systems. Currently, multiple variants of LDP exist \citep{geumlek2019profile,erlingsson2020encode,meehan2022privacy,huang2022lemmas,qin2023survey}, with pure $\varepsilon$-LDP being the most widely used and well-established. Its formal definition is given as follows:
\begin{definition}[$\varepsilon$-LDP, \citealt{kasiviswanathan2011can}]
A randomized mechanism $\gM$ is said to satisfy $\varepsilon$-local differential privacy (LDP) if, for any pair of input values $u_1,u_2$ in the domain of $\gM$ and any possible output $v$, it holds that
\[
  \sP\bigl(\gM(u_1) = v \bigr)
  \;\le\;
  e^{\,\varepsilon}
  \;\sP\bigl(\gM(u_2) = v \bigr).
\]
\end{definition}

LDP has already been deployed by major technology companies such as Apple \citep{appleprivacy}, Google \citep{erlingsson2014rappor}, and Microsoft \citep{ding2017collecting}. While it offers stronger privacy protects against untrusted data collectors, LDP often adds more noise per user than standard DP, potentially harming data utility. Balancing the privacy-utility trade-off is a key challenge, particularly in private learning scenarios, which involve training machine learning models under LDP constraints \citep{yang2020local}. This contrasts with much of the existing LDP literature, which focuses on simpler statistical queries (e.g., frequency counts or sums) \citep{wang2017locally}.

Private learning under LDP faces two major difficulties \citep{ye2020local}. First, preserving {correlations} among features \citep{wang2019locally} and between features and labels \citep{yilmaz2020naive} is critical for accurate model training. The noise required by LDP can disrupt these relationships, reducing model accuracy. Second, when data are {high-dimensional}, one may split the privacy budget $\varepsilon$ across dimensions or randomly select a single dimension to report \citep{nguyen2016collecting,arcolezi2021random,arcolezi2022improving}. As the dimensionality $d$ grows, the effective privacy budget per dimension drops drastically, deteriorating utility even further. Meanwhile, omitting certain dimensions altogether may cause incomplete information for learning.

These challenges motivate our work. We focus on the classification problem, aiming to fully exploit data utility and enhance classifier performance while maintaining the same level of LDP protection. By better leveraging noisy data, we aim to boost predictive accuracy on the true (unperturbed) distribution without sacrificing privacy. Equivalently, for a desired performance target, stronger privacy guarantees can be offered to data owners.
We draw inspiration from {transfer learning} \citep{weiss2016survey,hosna2022transfer}, which typically uses {source} data to aid learning on a {target} domain. Here, the {noised} (LDP-perturbed) data play the role of source data, and the unobserved true data serve as the target. However, two key distinctions arise:
\begin{itemize}
    \item Unlike standard transfer learning, we lack direct observations from the target distribution, as only noised data are available under LDP.
    \item Negative transfer scenarios  \citep{li2022transfer} (where some source datasets could degrade performance) may be more frequent under LDP, because the noise injection can substantially distort the correlation information.
\end{itemize}
Although existing transfer learning techniques cannot be applied directly, we adapt several ideas to measure and exploit the transferability of noised datasets. Doing so enables us to assess dataset utility and design robust private-learning techniques. Below, we outline how our methods address the classic transfer learning questions \citep{pan2009survey}—{what} to transfer, {when} to transfer, and {how} to transfer—under LDP settings:
\begin{enumerate}
    \item What to transfer. Our goal is to learn the relationship between the response and covariates. One standard approach under LDP collects noised feature-label pairs from clients for model training. In addition to this, we introduce a novel LDP mechanism in which a subset of clients provides privatized binary evaluations of model performance. This mechanism preserves correlation structure and achieves higher utility by more effectively leveraging the underlying relationship between features and responses.
    
    \item When to transfer. We quantify the transferability, i.e., the utility of an LDP-perturbed dataset, to assess whether a source is beneficial or detrimental. Our proposed LDP-tailored evaluation process provides an unbiased estimate of this utility, enabling more effective identification of negative datasets or classifiers.
    
    \item How to transfer. We introduce two new techniques, guided by our utility evaluation, to improve performance of classifiers obtained under LDP:
    \begin{enumerate}
        \item Model Reversal (MR): When a noised classifier performs worse than random guessing, we reverse its decision boundary. This effectively corrects negative datasets without discarding them entirely.
        \item Model Averaging (MA): We adapt ensemble-learning ideas to the LDP context, assigning weight to each reversed classifier based on its estimated utility. The resulting averaged classifier tends to outperform any single classifier, especially in the presence of high noise.
    \end{enumerate}
\end{enumerate}
We address key challenges in private learning under LDP by introducing a novel framework inspired by transfer learning. Our main contributions are summarized below:
\begin{enumerate}
    \item \textbf{Linking LDP and Transfer Learning.} We recast private learning under LDP as a transfer learning problem, interpreting the noisy dataset as a source and the true data as a target. Building on \cite{qin2024adaptive}, we use their proposed definition of transferability to measure dataset utility under a given privacy budget.
    
    \item \textbf{Techniques for Private Learning.} We develop three {LDP-tailored} tools:
    \begin{enumerate}
        \item An evaluation scheme that requests privatized binary feedback on a classifier’s performance, based on which we provide unbiased accuracy estimates, effectively addressing the lack of target data under LDP.
        \item Model reversal, which salvages weak classifiers with below-50\% accuracy by inverting their decision boundaries, effectively leveraging negative datasets.
        \item Model averaging, which averages multiple reversed weak classifiers, weighting them by their estimated utility under LDP.
    \end{enumerate}
    
    \item \textbf{Theoretical Guarantees.} We provide excess risk bounds for our proposed methods, illustrating how noise affects classification performance under LDP and demonstrating how MR and MA help reduce this bound.
\end{enumerate}
Beyond these contributions, our proposed techniques are broadly applicable across various data structures and classification algorithms. The framework can be easily adapted to different privacy protection mechanisms by modifying the noise injection strategy. As a demonstration, we present a functional data classification example that, to our knowledge, represents the first LDP-based approach for modeling functional covariates.

In the following sections, we formalize these ideas and demonstrate their effectiveness both theoretically and empirically. Section~\ref{sec:refs} reviews related work. Section~\ref{sec:meth} introduces our utility evaluation process and the proposed techniques tailored for the LDP setting. Section~\ref{sec:thm} establishes excess risk bounds for the proposed classifiers. Section~\ref{sec:func} presents the complete process of building a functional classifier under LDP using our framework. Section~\ref{sec:exp} reports experimental results that demonstrate the effectiveness of the proposed techniques. Section~\ref{sec:real} presents real-data applications, and Section~\ref{sec:conc} concludes the paper.

\section{Related work}\label{sec:refs}
In this section, we review the literature on supervised learning under LDP, with a particular focus on classification methods. We then discuss key concepts from transfer learning that motivate our approach, followed by a brief overview of general classification techniques. Finally, we highlight relevant developments in functional data analysis that inform our application setting.

\textbf{Supervised Learning Under LDP.}
Given challenges arising from data correlations and high dimensionality, existing research on supervised learning under LDP is limited. Although a few studies have explored learning under LDP, preserving both privacy and utility remains a significant challenge. 
Some research focuses on empirical risk minimization (ERM), treating the learning process as an optimization problem solved through defining a series of objective functions. \cite{wang2019collecting} constructed a class of machine learning models under LDP that can be expressed as ERM, and solved by stochastic gradient descent (SGD). To address the high dimensionality issue, \cite{liu2020fedsel} privately selected the top-$k$ dimensions according to their contributions in each iteration of federated SGD. Deep learning models under LDP have also been studied; for example, \cite{sun2020ldp} employed adaptive data perturbation and parameter shuffling to mitigate privacy loss. Nevertheless, balancing privacy guarantees with competitive performance is still an open problem, particularly in large-scale, high-dimensional settings.

\textbf{Classification Under LDP.}
Currently, only a few classification algorithms have been developed under LDP. Among the most relevant are the works of \cite{yilmaz2020naive, berrett2019classification, ma2024optimal}.
\cite{yilmaz2020naive} proposed a naive Bayes classifier using LDP-based frequency and mean estimators. Their method assumes independence among features, which may limit its applicability in real-world settings. While they suggest dimensionality reduction techniques such as principal component analysis and discriminant component analysis for high-dimensional data, they do not detail how these can be implemented in the LDP context, where raw data are not directly accessible.
\cite{berrett2019classification} introduced a histogram-based classifier and established minimax rates for excess risk. Their approach applies Laplace noise with a scale parameter that grows exponentially with the number of features, potentially introducing substantial distortion, even in moderately low-dimensional settings, which may severely degrade classification performance.
\cite{ma2024optimal} considered the setting where additional public data are available and proposed an LDP decision tree classifier that achieves minimax-optimal rates under this setup. As discussed in Section~\ref{sec:intro}, leveraging public data effectively transforms the problem into a transfer learning task, aligning with the framework in \cite{cai2021transfer}. 
\blue{Their algorithm also remains implementable in the absence of public data, in which case the procedure reduces to a standard locally private tree classifier.}

\textbf{Transfer Learning.}
Transfer learning \citep{torrey2010transfer} enhances a target task by leveraging knowledge acquired from related source tasks. Its success depends on accurately quantifying the relationship between source and target domains to assess how informative the source is and to detect negative sources. Negative transfer occurs when a poorly matched source degrades target performance \citep{weiss2016survey}. The identification of informative or negative sources critically depends on the chosen metric for transferability.
In a series of parameter-based transfer learning works, it is common to measure the transferability of a source dataset by quantifying the distance between the parameters of interest in the source and target datasets; see  \cite{li2022transfer,lin2022transfer, tian2022transfer}. 
Yet, parameter-based measures alone can be misleading in classification settings, as smaller parameter distances do not necessarily imply higher transferability \citep{qin2024adaptive}. Alternative approaches assess the distributional similarity through a relative signal exponent \citep{cai2021transfer} or label-matching probabilities \citep{qin2024adaptive}.
In the context of LDP, available noised datasets, which serve as source data, are often prone to negative transfer because severe perturbations can distort key information and hinder reliable transferability assessment.

\textbf{Standard Classification Methods.}
Classification tasks appear across numerous real-world scenarios, ranging from medical diagnoses to spam detection. Classic methods include discriminant analysis \citep{hall2001functional}, Naïve Bayes classification \citep{dai2017optimal}, and logistic regression \citep{leng2006classification}, while contemporary research often employs more advanced approaches such as support vector machines \citep[SVM,][]{svm2008} or neural-network-based classifier \citep{saravanan2014review}. Comprehensive overviews of classification methods can be found in \cite{kotsiantis2007supervised,kumari2017machine}. From a theoretical perspective, many works investigate excess risk bounds to establish statistical guarantees. For instance, \cite{svm2008} derived non-asymptotic bounds for SVM-based classifiers, \cite{sang2022reproducing} developed similar results for distance-weighted discrimination (DWD), and \cite{ko2023excessriskconvergencerates} studied the excess risk convergence rates of neural network-based classifiers. In transfer learning contexts, \cite{cai2021transfer} and \cite{qin2024adaptive} analyzed the excess risk of transfer-learning-based classifiers. 
However, results on classification under LDP remain limited, particularly in understanding how noise injection impacts both model training and theoretical guarantees.

\textbf{Functional Data Analysis.}
Functional data, such as curves or surfaces defined over continuous domains, are inherently infinite-dimensional \citep{ramsay2005,horvath2012inference}. Common examples include time-continuous measurements collected by wearable devices, which are often associated with sensitive health information. To facilitate statistical analysis, recent research has focused on functional projection techniques that map functional observations into finite-dimensional representations \citep{horvath2015,pomann2016,kraus2019classification,Bai2023}. 
Several differential privacy methods have been developed for functional data \citep{hall2013differential,mirshani2019formal,jiang2023functional,lin2023differentially}, but most are based on the global DP setting, where a trusted data curator has access to the entire dataset. In contrast, LDP requires that data be perturbed at the individual level prior to collection, introducing unique challenges for functional data. As a result, training predictive models on functional data under LDP remains an open and underexplored problem, leaving a gap in privacy-preserving techniques for infinite-dimensional data.

\section{Privacy-Preserving Classification under LDP}\label{sec:meth}
Under local differential privacy, observed data are perturbed to protect sensitive information, which often leads to a loss in utility. In this section, we introduce a utility measure to evaluate the quality of perturbed data and propose novel techniques to improve classifier performance under LDP without compromising privacy.

\subsection{Data and Notation}\label{ssec:nota}
In practice, original data collected from clients can be of diverse forms, including high-dimensional data \citep{araveeporn2021higher}, functional data \citep{kraus2019classification}, images \citep{rawat2017deep}, and text \citep{blitzer2006domain}. To derive usable features from these data under privacy constraints, one often applies suitable representation methods and then adds random noise to the resulting features or labels. In this subsection, we detail the representation process and introduce the notations used throughout.

\blue{Let $\gX$ be an instance set, i.e., the space of original inputs (e.g., the diverse data types mentioned above). Let $\gZ$ be a feature space, with $\gZ=\sR^d$ as a common example, where elements in $\gZ$ correspond to finite-dimensional representations of the original input. Let $\gY=\{-1, 1\}$ denote the label set for binary classification.}
A mapping function is characterized by a distribution over $\gX$ and a (stochastic) mapping function $\zeta:\gX\to\gY$. For an instance $X \in \gX$, the value $\zeta(X)$ corresponds to the probability that its label $Y$ is $1$. Let $Q$ denote the joint distribution of $\{X,Y\}$, and let $Q_X$ be the marginal distribution of $X$. We write $Q_{Y|X}(x) = \zeta(x)$ to represent the conditional distribution $\sP(Y=1\mid X=x)$.

A representation function $\gR: \gX \to \gZ$ maps instances to features, thereby inducing a distribution over $\gZ$ and a (stochastic) mapping function $\eta:\gZ\to\gY$. Specifically, for any Borel set $B\subseteq\gZ$ such that $\gR^{-1}(B)$ is $Q_X$-measurable,
\begin{equation*}
    \sP(Z\in B) \;=\; Q_X\bigl(\gR^{-1}(B)\bigr).
\end{equation*}
Let $P$ denote the joint distribution of $\{Z,Y\}$, and let $P_Z$ be the marginal distribution of $Z$. 
We denote by \(\eta(z)\triangleq \sP\bigl(Y=1 \mid Z=z\bigr)\) the conditional distribution of $Y$ given $Z$, where
\begin{equation*}
    \eta(Z) \;=\; \sE_{Q_X}\bigl[\zeta(X) \mid \gR(X) = Z \bigr].
\end{equation*}
Note that $\eta(Z)$ may be a stochastic function even if $\zeta(X)$ is not. This is because the function $\gR$ can map two instances with different labels to the same feature representation \citep{ben2006analysis}.

Next, consider an $\varepsilon$-LDP mechanism that transforms $(Z,Y)$ into $(Z^{\vez},Y^{\vey})$, where $\bve = (\vz,\vy)$ denotes the privacy budget allocation with $\vz,\vy > 0$. When both $Z$ and $Y$ require protection, the budget is split such that $\vz + \vy = \varepsilon$. Notable cases include protecting only covariates ($\vz = \varepsilon$, $\vy = \infty$, meaning no privacy constraint on $Y$), or only labels ($\vy = \varepsilon$, $\vz = \infty$), where the summation constraint does not apply as the entire budget is assigned to one component. One way to achieve $\varepsilon$-LDP is by setting:
\begin{align}\label{eq:eLDP}
\begin{split}
    Z^{\vez} &= Z \;+\; \delta^{\vez},\\
    \sP\bigl(Y^{\vey} = Y\bigr) &= q^{\vey},
\end{split}
\end{align}
\blue{where the noise $\delta^{\vez}$ follows a distribution $D^{\vez}$ whose scale is calibrated to the sensitivity of $Z$ under the chosen $\vz$-LDP mechanism, and smaller $\vz$ corresponds to a larger noise scale. For example, if $Z\in[a,b]^d$, then under the Laplace mechanism \citep{dwork2006calibrating} one can add i.i.d.\ $\mathrm{Laplace}(0,\lambda_z)$ noise to each coordinate with $\lambda_z=d(b-a)/\vz$.}
A common choice for $q^{\vey}$ is $e^{\vy}/(1 + e^{\vy}) \in [1/2, 1)$, corresponding to the classical randomized response mechanism \citep{warner1965randomized}. This added noise weakens the dependency between $Z$ and $Y$, thereby degrading data utility. In Section~\ref{ssec:utlt}, we show how utility is affected by the allocation of the privacy budget.

Parallel to the original distribution $P$, let $P^\ve$ be the joint distribution of $(Z^{\vez},Y^{\vey})$. We denote its marginal distribution by $P_Z^\ve$ and the conditional distribution by $P^\ve_{Y|Z}(z) = \sP(Y^\vey=1 \mid Z^\vez = z) := \eta^\ve(z)$.
Under LDP, we only observe the noised data $(Z^{\vez},Y^{\vey})$, which defines the source distribution $P^\ve$. Our goal, however, is to learn a model that performs well on the unobserved target distribution $P$ (and ultimately $Q$), in order to gain insights into the real world. This discrepancy between the source and target distributions highlights a fundamental challenge in the LDP setting.

In this work, we focus on developing techniques that improve classification performance under a given LDP mechanism. The methods proposed in the following sections can be adapted to different variants of LDP \citep{qin2023survey} by appropriately modifying the noise injection procedure in Equation~\eqref{eq:eLDP}. For concreteness, we use the standard $\varepsilon$-LDP framework to illustrate our approach throughout this paper.

\subsection{Measure of Utility}\label{ssec:utlt}
In this subsection, we introduce a utility measure for the perturbed dataset $(Z^{\vez},Y^{\vey})$ by adopting the concept of dataset transferability proposed by \citet{qin2024adaptive}. 
\blue{For any feature point $z_0 \in \gZ$, we define the pointwise utility function using two independent draws: $(Y,Z)\sim P$ from the original distribution and $(Y^{\vey},Z^{\vez})\sim P^{\ve}$ from the privatized distribution. Specifically,
\begin{equation}\label{eq:gz}
    g^{\ve}(z_0)
    \;=\;
    \sP\!\bigl(Y^{\vey} = Y \,\big|\, Z = z_0,\ Z^{\vez} = z_0\bigr)
    \in [0,1],
\end{equation}
which represents the probability that the label from the original distribution with $Z=z_0$ agrees with the label from the privatized distribution with $Z^{\vez}=z_0$.}
Proposition \ref{prop:gfun} below clarifies how $g^{\ve}(z_0)$ relates to the mapping functions $\eta$ and $\eta^{\ve}$, and how it is affected by the privacy budget allocation $(\vz, \vy)$.

\begin{proposition}\label{prop:gfun}
With the mapping functions $\eta$ and $\eta^{\ve}$ defined above, we have
\begin{equation}\label{eq:gfun}
    g^{\ve}(z_0) \;=\; 2\Bigl(\eta(z_0) - \tfrac12\Bigr)\Bigl(\eta^{\ve}(z_0) - \tfrac12\Bigr)\;+\;\tfrac12,
\end{equation}
where
\begin{equation}\label{eq:etainfo}
    \eta^{\ve}(z_0) - \tfrac12
    \;=\;
    2\Bigl(q^{\vey} - \tfrac12\Bigr)\;
    \sE_{P_Z}\!\Bigl[\omega\bigl(z \mid z_0,\vz\bigr)\,\Bigl(\eta(z)-\tfrac12\Bigr)\Bigr]
\end{equation}
and
\begin{equation}\label{eq:wz}
    \omega\bigl(z \mid z_0,\vz\bigr)
    \;=\;
    \frac{D^{\vez}\bigl(\delta_z^{\vez} = z_0 - z\bigr)}{\sE_{P_Z}\!\Bigl\{D^{\vez}\bigl(\delta_z^{\vez}=z_0 - z\bigr)\Bigr\}}.
\end{equation}
\end{proposition}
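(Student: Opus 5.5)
The plan is to prove the two displays in sequence. First comes \eqref{eq:gfun}, which uses only the independence of the two draws. Then comes \eqref{eq:etainfo}, which is a Bayes computation for the additive-noise mechanism \eqref{eq:eLDP}.

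\textbf{The identity \eqref{eq:gfun}.} Since $(Y,Z)\sim P$ and $(Y^{\vey},Z^{\vez})\sim P^{\ve}$ are independent, conditioning on $Z=z_0$ and $Z^{\vez}=z_0$ makes $Y$ and $Y^{\vey}$ conditionally independent. Their conditional success probabilities are $\eta(z_0)$ and $\eta^{\ve}(z_0)$. Hence
\[
g^{\ve}(z_0)=\eta(z_0)\eta^{\ve}(z_0)+\bigl(1-\eta(z_0)\bigr)\bigl(1-\eta^{\ve}(z_0)\bigr).
\]
Writing $a=\eta(z_0)-\tfrac12$ and $b=\eta^{\ve}(z_0)-\tfrac12$, the right-hand side is $(\tfrac12+a)(\tfrac12+b)+(\tfrac12-a)(\tfrac12-b)=\tfrac12+2ab$. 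This is exactly \eqref{eq:gfun}.

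\textbf{The identity \eqref{eq:etainfo}.} I first make the mechanism's implicit assumptions explicit. The noise $\delta^{\vez}$ is independent of $(Z,Y)$ and has density $D^{\vez}$. The label flip is independent of $(Z,\delta^{\vez})$ given $Y$, with $\sP(Y^{\vey}=Y)=q^{\vey}$.

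Under these assumptions, the label step gives
\[
\sP(Y^{\vey}=1\mid Z=z,\,Z^{\vez}=z_0)=q^{\vey}\eta(z)+(1-q^{\vey})(1-\eta(z)).
\]
Subtracting $\tfrac12$ gives $2(q^{\vey}-\tfrac12)(\eta(z)-\tfrac12)$. Note that given $Z=z$, the event $Z^{\vez}=z_0$ is the noise event $\delta^{\vez}=z_0-z$, which is independent of $Y$. So conditioning on it does not alter $\eta(z)$.

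Next, by Bayes' rule, the conditional law of $Z$ given $Z^{\vez}=z_0$ has density with respect to $P_Z$ proportional to $D^{\vez}(\delta^{\vez}=z_0-z)$. The normalizer is $\sE_{P_Z}\{D^{\vez}(z_0-Z)\}$, which is the marginal density of $Z^{\vez}$ at $z_0$. This Radon--Nikodym derivative is precisely $\omega(z\mid z_0,\vz)$ from \eqref{eq:wz}.

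Applying the tower property then gives
\[
\eta^{\ve}(z_0)-\tfrac12=\sE_{P_Z}\bigl[\omega(Z\mid z_0,\vz)\cdot 2(q^{\vey}-\tfrac12)(\eta(Z)-\tfrac12)\bigr],
\]
which is \eqref{eq:etainfo}. I use the fact that $\sE_{P_Z}[\omega]=1$ to move the constant $\tfrac12$ inside the expectation.

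\textbf{Main obstacle.} The main difficulty is rigor in conditioning on the null events $\{Z=z_0\}$ and $\{Z^{\vez}=z_0\}$. I would handle this by interpreting every conditional probability as a regular conditional distribution, defined via densities. For this I assume $D^{\vez}$ admits a density, as the Laplace mechanism does, and that the normalizer in \eqref{eq:wz} is positive at $z_0$. The rest of the argument is algebra.
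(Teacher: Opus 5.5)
Your proposal is correct and is essentially the paper's proof: the first identity comes from the same expansion $g^{\ve}(z_0)=\eta(z_0)\eta^{\ve}(z_0)+(1-\eta(z_0))(1-\eta^{\ve}(z_0))$. Your Bayes-plus-tower-property step is the paper's ratio $P^{\ve}(Z^{\vez}=z_0,Y^{\vey}=1)/P^{\ve}_Z(Z^{\vez}=z_0)$, with numerator $\sE_{P_Z}\{D^{\vez}(\delta_z^{\vez}=z_0-z)(q^{\vey}\eta(z)+(1-q^{\vey})(1-\eta(z)))\}$ and denominator $\sE_{P_Z}\{D^{\vez}(\delta_z^{\vez}=z_0-z)\}$ rewritten as an $\omega$-weighted average, and you additionally make explicit the independence assumptions and the algebra that the paper compresses into ``some algebraic manipulation.''
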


In Proposition \ref{prop:gfun}, $\omega\bigl(z \mid z_0,\vz\bigr)$ is analogous to a continuous weighting scheme, satisfying $\sE_{P_Z}\bigl(\omega(z\mid z_0,\vz)\bigr)=1$. As discussed in \cite{qin2024adaptive}, Equation \eqref{eq:gfun} indicates that $g^{\ve}(z_0)$ depends on how $\eta$ and $\eta^{\ve}$ deviate from $1/2$. When they deviate in the same direction, $g^{\ve}(z_0)\ge 1/2$, making the noised dataset informative; otherwise, it can be {negative}. 
Equation~\eqref{eq:etainfo} illustrates how the privacy budget parameters $(\vz,\vy)$ jointly influence $g^{\ve}(z_0)$. To gain clearer insight into their roles, we consider the following two notable cases:
\begin{enumerate}
    \item Protecting only $Y$: Let $\vy = \varepsilon$ and $\vz=\infty$. We then have $P_Z^\ve = P_Z$ but $P^\ve_{Y|Z}\neq P_{Y|Z}$, which matches the classic {posterior drift} scenario \citep{qin2024adaptive}. In this setting,
    \[
        \eta^{\ve}(z_0)-\tfrac12 \;=\; 2\Bigl(q^{\vey}-\tfrac12\Bigr)\Bigl(\eta(z_0)-\tfrac12\Bigr),
    \]
    and the dataset remains informative when $q^{\vey}\ge 1/2$, meaning the perturbed data and the original data share the same Bayes classifier; otherwise, it becomes negative.
    
    \item Protecting only $Z$: Let $\vz = \varepsilon$ and $q^{\vey}=1$. Then $\eta^\ve(z_0)-1/2$ primarily depends on the weight $\omega\bigl(z\mid z_0,\vz\bigr)$ defined in Equation \eqref{eq:wz}. 
    To examine the impact of $\vz$ on this weight, we consider a special case where \( z \sim U(-1, 1) \), the uniform distribution on the interval \((-1, 1)\). The noise \( \delta_z^{\vez} \) is drawn from a Laplace distribution with mean zero and scale parameter \( 2/\vz \), which satisfies \( \varepsilon \)-LDP.
    We then compute the weight for \( z_0 \in [-2,2] \) with \( \vz \in \{10, 1, 0.1, 0.01\} \).
    Figure~\ref{fig:wz} illustrates how the weight function \( \omega\bigl(z \mid z_0, \vz\bigr) \) varies with \( z \) for different \( z_0 \) and \( \vz \), where the range of weight becomes increasingly concentrated around $1$ as \( \vz \) approaches 0.
    Given that $\int_z \omega\bigl(z \mid z_0, \vz\bigr) dz=1$,
    these heatmaps indicate that as $\vz$ decreases (i.e., stronger privacy constraints and more noise), the weight function becomes highly diffuse, spreading nearly uniformly over $z \in [-1,1]$ rather than concentrated around the diagonal $z = z_0$ for $z_0 \in [-1,1]$ or at the boundaries $z = -1$ and $z = 1$ when $z_0 < -1$ and $z_0 > 1$, respectively.
    Consequently, as privacy constraints tighten and the weight function grows more diffuse, the deviation between $\eta^{(\varepsilon)}(z) - \tfrac{1}{2}$ and $\eta(z) - \tfrac{1}{2}$ becomes larger.
\end{enumerate}

\begin{figure}[htbp]
    \centering
    \includegraphics[width=1\textwidth]{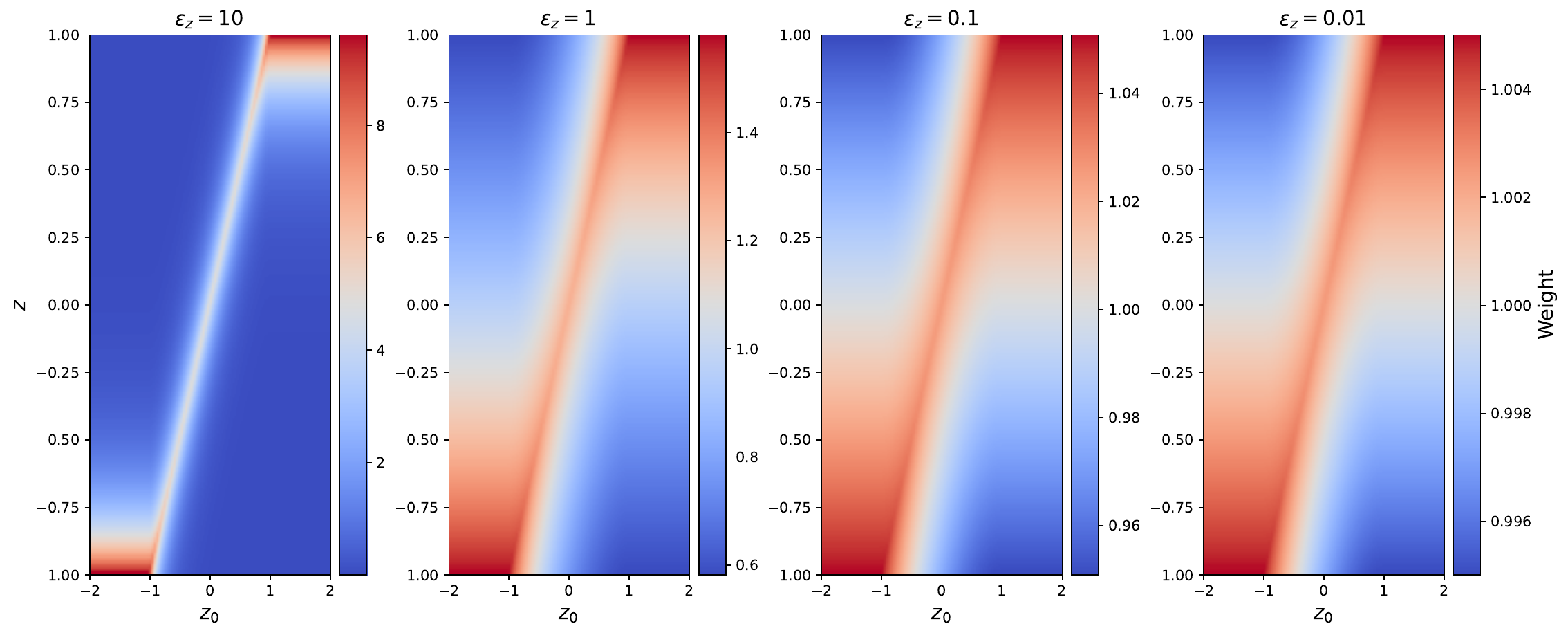}
    \caption{Heatmaps of the weight function $\omega\bigl(z \mid z_0, \vz\bigr)$ changing with $z$ for various values of $z_0$ and $\vz$, where we assume $z \sim U(-1,1)$ and $z_0 = z + \delta_z^{\vez}$ with $\delta_z^{\vez}$ drawn from a Laplace distribution having mean zero and scale $2/\vz$. As \(\vz\) decreases, the value intervals of the color bars become increasingly concentrated around 1.}
    \label{fig:wz}
\end{figure}

\subsection{Evaluation of Utility}\label{ssec:eval}
Utility assessment is crucial when integrating information from multiple data sources. Unlike in transfer learning, where one can directly evaluate source-trained models on target data, the LDP setting lacks access to unperturbed target data. To address this, we propose a novel alternative approach that requires only a noised binary evaluation from each client. Based on these privatized responses, we provide an unbiased estimate of the utility of a model trained on perturbed data.

Given a perturbed dataset $\mathcal{D}_0=\{(z_i^{\vez},y_i^{\vey})\}_{i=1}^{n_0}$, our goal is to estimate its utility $g^\ve(z_0)$, as defined in Equation \eqref{eq:gz}, for any given $z_0$ drawn from the distribution $P_Z$. A key challenge is that both $y(z_0)$ and $y^\vey(z_0)$ are unobservable. To overcome this, instead of requesting noisy feature-label pairs from each client, we apply the classifier $f^\ve(z_0)$, trained on $\mathcal{D}_0$, to each client in evaluation set and request a privatized binary response indicating whether $y(z_0)=\hat y^\vey(z_0)$, where $\hat y^\vey(z_0)=\text{sign}(f^\ve(z_0))$. This approach enables a more efficient form of querying, as it preserves the correlation between the response and the functional covariate in the target distribution while injecting significantly less noise, since only a single binary response is privatized per client. Furthermore, measuring the utility of $\mathcal{D}_0$ in this way is equivalent to estimating the classification accuracy of $f^\ve(z_0)$ trained on it. This interpretation aligns with Proposition~\ref{prop:gfun}, which indicates that higher dataset utility corresponds to better Bayes classification performance achievable from that dataset.

In what follows, we present the detailed procedures for classifier training and dataset utility evaluation. These involve two distinct types of client-side information access: collecting noised observation pairs and obtaining privatized binary evaluations. We assume a total of $N$ clients, partitioned into a training set $\train$ of size $N_0$ and an evaluation set $\valid$ of size $N_1$, where $N_0 + N_1 = N$.
\begin{enumerate}
    \item \textbf{Classifier Training.} The server collects perturbed data $\{(z_i^{\vez},y_i^{\vey})\}_{i \in \train}$ from the training clients to fit or construct multiple weak classifiers. Specifically, for each $b=1,\dots,B$, we randomly sample $n_0 (<N_0)$ pairs from $\train$, denoted as $\trainb$. Using these samples, we construct $B$ weak classifiers $\{f^{(b)}\}_{b=1}^B$. (Here and after, we ignore the $\bve$ in $f^{(b)}$ for simple representation.) Each client only uploads once perturbed observation, thus preserving their privacy guarantee.
    \item \textbf{Utility Evaluation.} For the $N_1$ clients in the evaluation set, we do not collect their perturbed observations. Instead, each evaluation client provides a noised binary indicator of misclassification, which is used to evaluate a classifier’s performance. Specifically, we split the evaluation set into $B$ subsets: $\validb$, $b=1,\dots,B$. For a client $i \in \validb$ with true observation $(x_i,y_i)$, we compute
    \begin{equation}\label{eq:yeval}
        \hat{y}_i := I\bigl[f^{(b)}\{\gR(x_i)\} > 0\bigr]\quad \text{and} \quad r_i := I(\hat{y}_i = y_i).
    \end{equation}
    Each client then reports a privatized value \( r_i' \) using randomized response, where
    \begin{equation}\label{eq:rprime}
    \sP(r_i' = r_i) = q = \frac{e^{\varepsilon_v}}{1 + e^{\varepsilon_v}}\quad \text{and} \quad \sP(r_i' \ne r_i) = 1 - q.
    \end{equation}
\end{enumerate}
Theorem \ref{thm:ldp2} shows that this evaluation mechanism satisfies $\varepsilon$-LDP when $\varepsilon_v=\varepsilon$, and provides an unbiased estimate of the accuracy of the classifier $f^{(b)}$. This estimate can be taken as an evaluation of the utility of the dataset $\trainb$, for $b=1,\dots,B$.

\begin{theorem}\label{thm:ldp2}
    Let $\gM$ be the privacy mechanism mapping $(x,y)$ to $r'$ as described above. 
    Then $\gM$ satisfies $\varepsilon_v$-LDP. Furthermore, let $r^{(b)}$ be the true classification accuracy of $f^{(b)}$ under the target distribution $P$, and let $n_1^{(b)} = |\validb|$. Define
    \begin{equation}\label{eq:r}
        \tilde r^{(b)} \;=\;\frac{\hat r^{(b)} + q - 1}{2q - 1}, 
        \quad\text{where}\quad
        \hat r^{(b)} \;=\;\frac{\sum_{i\in\validb} r'_i}{n_1^{(b)}}.
    \end{equation}
    Then $\sE\bigl(\tilde r^{(b)}\bigr)=r^{(b)}$, and 
    \(\mathrm{Var}\bigl(\tilde r^{(b)}\bigr) \;\le\; \bigl(\tfrac{e^{\varepsilon_v}+1}{e^{\varepsilon_v}-1}\bigr)^{2}(4\,n_1^{(b)})^{-1}.\)
\end{theorem}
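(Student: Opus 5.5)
The plan has three parts: the privacy claim, unbiasedness, and the variance bound.

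\textbf{Privacy.} The mechanism $\gM$ first maps $(x,y)$ deterministically to the bit $r=I(\hat y=y)\in\{0,1\}$, where $\hat y = I[f^{(b)}\{\gR(x)\}>0]$. Here $f^{(b)}$ is fixed, since it is built only from $\trainb$, which is disjoint from the evaluation clients. The bit $r$ is then passed through randomized response with parameter $q=e^{\varepsilon_v}/(1+e^{\varepsilon_v})$. For any two inputs $u_1,u_2$ and any output $v\in\{0,1\}$, the probability $\sP(\gM(u_j)=v)$ equals either $q$ or $1-q$. Hence the ratio of the two probabilities is at most $q/(1-q)=e^{\varepsilon_v}$, which gives $\varepsilon_v$-LDP. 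Equivalently, randomized response is $\varepsilon_v$-LDP on bits, and post-processing of the input by a fixed deterministic map preserves this.

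\textbf{Unbiasedness.} Condition on $f^{(b)}$. Each evaluation client's $(x_i,y_i)$ is an independent draw from $Q$, which induces $P$ through $\gR$. Therefore $r_i\sim\mathrm{Bernoulli}(r^{(b)})$, where $r^{(b)}=\sP(\mathrm{sign}\, f^{(b)}(Z)\text{ matches }Y)$ is the target accuracy. I would read the indicator in \eqref{eq:yeval} consistently with the labels in $\{-1,1\}$; this is only a coding convention. Conditioning on $r_i$,
\[
\sE(r_i')=q\,r^{(b)}+(1-q)(1-r^{(b)})=(2q-1)r^{(b)}+1-q .
\]
Averaging over $i$ gives $\sE(\hat r^{(b)})=(2q-1)r^{(b)}+1-q$. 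Since $q>1/2$, the map $\hat r\mapsto(\hat r+q-1)/(2q-1)$ is affine and well defined, so $\sE(\tilde r^{(b)})=r^{(b)}$. If $f^{(b)}$ is random, the same identity holds conditionally and then unconditionally by the tower property.

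\textbf{Variance.} The $r_i'$ are i.i.d.\ Bernoulli with mean $p=(2q-1)r^{(b)}+1-q$, so $\mathrm{Var}(\hat r^{(b)})=p(1-p)/n_1^{(b)}\le 1/(4n_1^{(b)})$. It follows that $\mathrm{Var}(\tilde r^{(b)})\le (2q-1)^{-2}(4n_1^{(b)})^{-1}$. Finally,
\[
2q-1=\frac{e^{\varepsilon_v}-1}{e^{\varepsilon_v}+1},
\]
which yields the stated bound.

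The only delicate point is independence. It requires that the evaluation clients are disjoint from the training clients, so that $f^{(b)}$ is independent of the $(x_i,y_i)$, $i\in\validb$, and that the randomized-response coins are independent across clients. Otherwise the argument is a direct computation. I would state the variance conditionally on $f^{(b)}$, which is the natural reading of the theorem.
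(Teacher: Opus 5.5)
Your proof is correct and follows the same route as the paper. You bound the randomized-response likelihood ratio by $q/(1-q)=e^{\varepsilon_v}$, compute $\sE(\hat r^{(b)})=q\,r^{(b)}+(1-q)(1-r^{(b)})$ to get unbiasedness, and divide $\mathrm{Var}(\hat r^{(b)})\le 1/(4n_1^{(b)})$ by $(2q-1)^2$; your added remarks on conditioning on $f^{(b)}$, independence of the evaluation clients, and the $\{0,1\}$ versus $\{-1,1\}$ coding of $\hat y_i$ make explicit what the paper leaves implicit.
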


Theorem~\ref{thm:ldp2} indicates that as $n_1^{(b)}$ increases, $\tilde{r}^{(b)}$ converges more closely to the true classification accuracy of $f^{(b)}$ under LDP. Consequently, enlarging the evaluation set substantially enhances the precision of accuracy estimates and assists in selecting the better-performing weak classifiers. Since heavily perturbed data may limit the gains from training on more samples, these findings suggest allocating a greater proportion of clients to evaluation rather than training. Simulation results in Appendix~\ref{appsim:size} provide evidence supporting this strategy. In addition, during the evaluation step, each client in \( \validb \) evaluates only a single classifier, ensuring that their privacy budget remains \( \varepsilon \) by setting \( \varepsilon_v = \varepsilon \). If a client were to evaluate multiple classifiers, the budget would need to be divided among them, potentially reducing the precision of each evaluation. Specifically, when evaluating \( k \) classifiers, the privacy budget must be allocated as \( \varepsilon_v = \varepsilon / k \) for each evaluation, leading to an increase in \( \mathrm{Var}(\tilde{r}^{(b)}) \) as \( k \) grows.

\subsection{Model Reversal and Model Averaging}\label{ssec:mrma}
Under LDP, the added noise can be substantial, causing the weak classifiers \( \{f^{(b)}\}_{b=1}^B \) to perform poorly. As shown in Proposition~\ref{prop:gfun}, it is possible for \( g^{\ve}(z) \) to fall below \( 1/2 \), indicating a negative dataset that performs worse than random guessing. We propose model reversal and model averaging to extract useful information from such seemingly uninformative data and improve classification performance. The implementation of these techniques builds on the unbiased utility evaluation established in Theorem \ref{thm:ldp2}.

\textbf{Model Reversal.} We first propose a novel procedure to enhance the performance of each weak classifier. Formally, for a weak classifier \( f^{(b)} \), we define its reversed classifier as
\begin{equation}\label{eq:mr}
   f^{*(b)} =
   \begin{cases}
       -f^{(b)}, & \text{if its estimated accuracy } \tilde{r}^{(b)} < 0.5,\\
       f^{(b)}, & \text{otherwise}.
   \end{cases}
\end{equation}
The model reversal process defined in Equation \eqref{eq:mr} means that for a given classifier $f^{(b)}$, its sign is reversed if its estimated accuracy $\tilde r^{(b)} < 0.5$; otherwise, it remains unchanged. Consequently, the estimated accuracy of the reversed classifier $f^{*(b)}$ becomes $\tilde r^{*(b)} = \max\{\tilde r^{(b)}, 1 - \tilde r^{(b)}\}$, ensuring it is larger than $0.5$.
This idea exploits the fact that classification primarily depends on the direction of the coefficient vector relative to the decision boundary, rather than its magnitude. By identifying whether a classifier is less than random guessing, we can flip its decision boundary to ensure it surpasses $50\%$ accuracy.
\blue{Here we focus on binary classification, where model reversal reduces to a simple ``flip or keep'' decision. Extending the MR idea to multi-class problems is an interesting direction for future work. For example, one could reduce a $K$-class problem to $K$ one-vs-rest binary subproblems, or apply permutation-based corrections to the predicted class labels of each weak classifier, guided by confusion matrices estimated from suitably designed LDP-protected queries in the evaluation step.}

\textbf{Model Averaging.} Next, given $B$ reversed weak classifiers $\{f^{*(b)}\}_{b=1}^B$ with estimated accuracies $\{\tilde r^{*(b)}\}_{b=1}^B$, we combine them by weighted averaging:
\begin{equation}\label{eq:wsingle}
    w_b \;=\; \frac{\max\{\tilde r^{*(b)} - r_0,\,0\}}{\sum_{j=1}^B \max\{\tilde r^{*(j)} - r_0,\,0\}},
\end{equation}
where $r_0 \in (0.5, 1)$ is a server-specified cutoff. Classifiers with $\tilde r^{*(b)} \le r_0$ receive weight $0$, and the final averaged classifier is
\begin{equation}\label{eq:fsingle}
    f^\dagger(\cdot) \;=\;\sum_{b=1}^B w_b\,f^{*(b)}(\cdot).
\end{equation}
\blue{In Equation~\eqref{eq:fsingle}, we aggregate reversed weak classifiers by a weighted average.
For parametric classifiers where $f^{*(b)}(\cdot)$ is linear in the model parameters, Equation~\eqref{eq:fsingle} is equivalent to averaging the corresponding parameter vectors.
More generally, when the weak classifiers are nonlinear in their parameters or do not share a common parameterization, the same utility-based weights $\{w_b\}$ from Equation~\eqref{eq:wsingle} can be combined with alternative aggregation rules, such as weighted voting or averaging predicted probabilities or decision scores.}

The effectiveness of MR and MA relies on accurate utility estimation. Theorems~\ref{thm:ermr} and \ref{thm:erma} show that these techniques improve excess risk bounds. Our experiments further confirm that MR and MA significantly enhance classification accuracy.

\subsection{Single-Server Classification with MRMA}\label{ssec:single}

This subsection integrates the previously introduced components, privacy-preserving data collection, utility evaluation, model reversal, and model averaging, into a unified procedure for classifier construction on a single server. 
The full process is presented in Algorithm~\ref{alg:single}, which takes as input the randomized mechanism, privacy budget, and learning parameters, and returns a final classifier trained under LDP.

\begin{algorithm}[ht]
\caption{Single-Server Classification with MRMA}\label{alg:single}
\begin{algorithmic}[1]
\Procedure{Server}{$\gR,N_0,N_1,n_0,B,\varepsilon,\vz,\vy,r_0$}

\State Divide clients into a training set $\train$ and an evaluation set $\valid$ with $|\train|=N_0, |\valid|=N_1$, and spilt the evaluation set into $B$ subsets, $\{\validb\}_{b=1}^B$.

\For{each client in the training set}
    \State Obtain \( (z_i^{\vez}, y_i^{\vey}) \) by Equation~\eqref{eq:eLDP}. \Comment{privacy-preserving data collection}
\EndFor

\For{$b=1,\dots,B$}
    \State Randomly draw $n_0$ samples from $\train$ without replacement and denote as $\trainb$.
    \State Build a classifier $f^{(b)}$ based on $\{(z_i^\vez,y_i^\vey)\}_{i\in\trainb}$. \Comment{server-specified classifier}
    \For{each client in the evaluation set $\validb$}
        \State Compute $r_i'$ by Equations \eqref{eq:yeval} and \eqref{eq:rprime}.
    \EndFor
    \State Estimate $\tilde r^{(b)}$ by Equation \eqref{eq:r}. \Comment{utility evaluation}
    \State Compute $f^{*(b)}$ by Equation \eqref{eq:mr}. \Comment{model reversal}
\EndFor

\State Estimate $\vw=(w_1,\dots,w_B)^{\top}$ by Equation \eqref{eq:wsingle} with cutoff value $r_0$.

\State \Return The final estimated classifier $f^\dagger$ by Equation \eqref{eq:fsingle}. \Comment{model averaging}
\EndProcedure
\end{algorithmic}
\end{algorithm}

\textbf{Sample Size Balancing.} 
In practice, when the total sample size \( N \) is large, increasing the training size \( N_0 \), the evaluation size \( N_1 \), and the number of classifiers \( B \) can improve performance. However, for a fixed \( N \), it is important to carefully choose \( N_0,N_1 \) and \( B \). Larger values of \( N_0 \) and \( B \) help train more effective weak classifiers, while smaller values ensure sufficient data in each evaluation subset. Section~\ref{sec:thm} provides theoretical guidance on setting these parameters. We also empirically study their impact in Appendix~\ref{appsim:paras} and~\ref{appsim:size}, offering practical recommendations.

\textbf{Broad Applicability.}
While this work focuses on a specific algorithmic structure, the proposed evaluation process and MRMA techniques are broadly applicable. Regarding data types, our framework accommodates vectors, functional data, images, or text, depending on the representation function and classifier specified at line 8 in Algorithm~\ref{alg:single}. Regarding privacy mechanisms, MRMA is compatible with various LDP variants by adjusting how noise is added in the training and evaluation phases, as implemented in lines 4 and 10 of Algorithm~\ref{alg:single}, respectively. This flexibility enables the method to adapt to a wide range of practical applications with different data modalities and privacy requirements.

\textbf{Online data.} 
Although our primary focus is on protecting static information such as individual profiles, many real-world systems produce continuous data streams from sensors or digital interactions. Our proposed procedure naturally extends to such online settings. On one hand, accuracy estimates for existing weak classifiers can be updated incrementally, allowing dynamic adjustment of their weights in the aggregated model. On the other hand, if performance deteriorates on new data, additional weak classifiers can be trained on newly collected, privatized samples. As a result, the final classifier can adapt over time by incorporating fresh models and reweighting existing ones, effectively tracking changes in the underlying data distribution.

\subsection{Multi-Server Classification with MRMA under Heterogeneity}\label{ssec:multi}
In many real-world scenarios, data are distributed across multiple servers. However, these servers may exhibit varying degrees of heterogeneity in their local data, which existing methods do not always handle effectively. Our proposed techniques extend naturally to the multi-server setting under LDP. 
Assume there are \( K \) servers, each with a locally trained classifier \( f_k(x) \). These servers share their models with one another. Each server can then re-evaluate the shared classifiers and apply MRMA, as described in Sections~\ref{ssec:eval} and \ref{ssec:mrma}, treating classifiers from different servers as a collection of weak classifiers. Details of model evaluation and MRMA in this setting are provided in the following two paragraphs.

\textbf{Perturbation and Evaluation.} A server with sufficient clients can split its evaluation set into $B+K$ parts, using $B$ parts for evaluating its internal weak classifiers and $K$ parts for evaluating the externally received classifiers $\{f_k\}_{k=1}^K$. Each evaluation is performed via randomized response at privacy level $\varepsilon$, as in Section~\ref{ssec:eval}. When data are limited, one can sequentially evaluate internal and external classifiers, dividing the privacy budget accordingly (by splitting $\varepsilon$ appropriately).

\textbf{MRMA.}
Upon receiving performance assessments for all $K$ classifiers from multiple servers, each server can independently apply MR and MA with a cutoff $r_0^*$ (see Equation \eqref{eq:wsingle} in Section~\ref{ssec:mrma}) and obtain the final classifier $f_k^{\dagger}$ for the $k$-th server. Because the servers and their data distributions may be highly heterogeneous, employing our MRMA approach can mitigate negative transfer effects by assigning small or zero weights to less relevant models. Overall, by integrating LDP with MR and MA, each server can capitalize on the shared knowledge from others, achieving improved classification performance while still honoring the local privacy constraints.

\blue{Our discussion of the multi-server setting focuses on how locally trained $\varepsilon$-LDP classifiers can be re-evaluated and then combined via MRMA to improve each server's final decision rule.
In practice, distributed learning across multiple servers can involve additional considerations that are widely studied in the federated learning literature \citep{truex2020ldp,sun2020ldp}.
Our evaluation and MRMA procedures can be used in such settings, as they only require a collection of candidate $\varepsilon$-LDP classifiers and their utility estimates.}

\section{Theoretical Guarantees for Classification under LDP}\label{sec:thm}
In this section, we analyze how the LDP mechanism affects classification performance and demonstrate the effectiveness of our proposed techniques through their improvements in excess risk bounds. We begin with necessary definitions for Bayes classifiers and proceed to establish our theoretical guarantees.

Recall that $\eta(z)$ denotes the conditional probability function, and the corresponding Bayes classifier under the original (unperturbed) distribution $P$ is given by $h^*(z) = \text{sign}\{\eta(z) \ge 1/2\}$. The expected classification accuracy of $h^*$ is $\frac{1}{2} + \sE_{P_Z}(|\eta(z) - \frac{1}{2}|)$. 
We measure the performance of a classifier $h(z)$ under $P$ relative to $h^*(z)$ using the excess risk, defined as:
\begin{equation*}
    L_{P}(h,h^*) \;=\; \sE_{P}\bigl[\ell(h) \;-\; \ell(h^*)\bigr],
\end{equation*}
where $\ell(h) = I\{y\,h(z)\le 0\}$ is the $0$--$1$ loss and $I\{\cdot\}$ denotes the indicator function. Similarly, for the perturbed distribution $P^{\ve}$ with conditional probability function $\eta^{\ve}(z)$, the corresponding Bayes classifier is $h^{*\ve}(z)=\text{sign}\{\eta^{\ve}(z)\ge1/2\}$. The excess risk of a classifier $h(z)$ under $P^{\ve}$ is defined as:
\begin{equation*}
    L_{P^{\ve}}\!\bigl(h,h^{*\ve}\bigr)
    \;=\;
    \sE_{P^{\ve}}\bigl[\ell(h) \;-\; \ell(h^{*\ve})\bigr].
\end{equation*}
Let $f^{(b)}(z)$ be a weak classifier trained from the noised data set $\{(z^{\vez}_i,y^{\vey}_i)\}_{i=1}^{n_0}$. The next theorem bounds the excess risk of $f^{(b)}(z)$ under the unperturbed distribution $P$.

\begin{theorem}\label{thm:erini}
    Let $\gR$ be a fixed representation function from $\gX$ to $\gZ$. Then for any weak classifier $f^{(b)}$ trained with $n_0$ samples from $P^{\ve}$,
    \begin{align*}
        L_P\bigl(f^{(b)},h^*\bigr)
        \;\le\;&
        L_{P^{\ve}}\!\bigl(f^{(b)},h^{*\ve}\bigr)
        \;+\;
        \frac{3}{4}\,d_{\mathrm{TV}}(P_Z,P_Z^{\ve})
        \;+\;
        \sE_{P_Z}\Bigl[\bigl|\eta(z)-\eta^{\ve}(z)\bigr|\Bigr]\\
        &\;+\;
        \sE_{P_Z}\bigl[\bigl|\eta(z)-\tfrac12\bigr| \;-\;\bigl|\eta^{\ve}(z)-\tfrac12\bigr|\bigr],
    \end{align*}
    where 
    \[
        d_{\mathrm{TV}}(P_Z,P_Z^{\ve})
        \;=\;
        2\,\sup_{B\subseteq \gZ}\Bigl|P_Z(B)-P_Z^{\ve}(B)\Bigr|
    \]
    is the total variation distance between $P_Z$ and $P_Z^{\ve}$.
\end{theorem}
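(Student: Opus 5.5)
The plan is to prove the bound by a three-term telescoping of risks, with no appeal to earlier results beyond the definitions. Write $R_P(h)=\sE_P[\ell(h)]$ and $R_{P^{\ve}}(h)=\sE_{P^{\ve}}[\ell(h)]$. Then
\begin{align*}
L_P(h,h^*) \;=\;& \bigl[R_P(h)-R_{P^{\ve}}(h)\bigr]+\bigl[R_{P^{\ve}}(h)-R_{P^{\ve}}(h^{*\ve})\bigr]\\
&+\bigl[R_{P^{\ve}}(h^{*\ve})-R_P(h^*)\bigr],
\end{align*}
with $h=f^{(b)}$. The middle bracket is exactly $L_{P^{\ve}}(f^{(b)},h^{*\ve})$. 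The two outer brackets should produce the remaining three terms, with $\tfrac12 d_{\mathrm{TV}}$ from the first and $\tfrac14 d_{\mathrm{TV}}$ from the third. Since $f^{(b)}$ is treated as a fixed measurable classifier, the fact that it was trained on $n_0$ noised samples plays no role.

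\textbf{First bracket.} Condition on the feature: $R_P(h)=\sE_{P_Z}[c_\eta(z)]$ with pointwise risk $c_\eta(z)=\eta(z)I\{h(z)\le 0\}+(1-\eta(z))I\{h(z)>0\}$, and similarly $R_{P^{\ve}}(h)=\sE_{P_Z^{\ve}}[c_{\eta^{\ve}}(z)]$. Add and subtract $\sE_{P_Z}[c_{\eta^{\ve}}(z)]$.
\begin{itemize}
\item The piece $\sE_{P_Z}[c_\eta-c_{\eta^{\ve}}]$ is pointwise $\pm(\eta-\eta^{\ve})$, so it is bounded by $\sE_{P_Z}|\eta-\eta^{\ve}|$.
\item The piece $(\sE_{P_Z}-\sE_{P_Z^{\ve}})[c_{\eta^{\ve}}]$ integrates a function with values in $[0,1]$.
\end{itemize}
For such a function the difference of expectations is at most $\sup_B|P_Z(B)-P_Z^{\ve}(B)|=\tfrac12 d_{\mathrm{TV}}$, using the paper's normalization of $d_{\mathrm{TV}}$ with the factor $2$.

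\textbf{Third bracket.} Use the Bayes risk formulas $R_P(h^*)=\tfrac12-\sE_{P_Z}|\eta-\tfrac12|$ and $R_{P^{\ve}}(h^{*\ve})=\tfrac12-\sE_{P_Z^{\ve}}|\eta^{\ve}-\tfrac12|$. Their difference is
\begin{align*}
\sE_{P_Z}\bigl[|\eta-\tfrac12|-|\eta^{\ve}-\tfrac12|\bigr]+(\sE_{P_Z}-\sE_{P_Z^{\ve}})\bigl[|\eta^{\ve}-\tfrac12|\bigr].
\end{align*}
The first part is the last term in the statement. The second integrates a function with values in $[0,\tfrac12]$, so it is at most $\tfrac12\cdot\tfrac12 d_{\mathrm{TV}}=\tfrac14 d_{\mathrm{TV}}$. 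Summing gives $\tfrac34 d_{\mathrm{TV}}$.

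\textbf{Main obstacle.} The one point needing care is that $\eta^{\ve}$ is defined as a conditional probability under $P_Z^{\ve}$, yet it is integrated against $P_Z$. I would treat $\eta^{\ve}$ as a fixed measurable function on $\gZ$, as the statement implicitly does. I would also verify the elementary inequality $|\sE_P\phi-\sE_Q\phi|\le (\sup\phi-\inf\phi)\sup_B|P(B)-Q(B)|$ via the layer-cake representation.
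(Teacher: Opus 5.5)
Your proposal is correct and follows essentially the same route as the paper: the same three-term telescoping $I_1+I_2+I_3$, the Bayes-risk identity $\tfrac12-\sE|\eta-\tfrac12|$, and bounds of $\tfrac12 d_{\mathrm{TV}}$ on $I_1$ and $\tfrac14 d_{\mathrm{TV}}$ on $I_3$. The only differences are cosmetic: the paper splits the densities into the overlap $p_0=\min\{p,p^{\ve}\}$ plus residuals, where you add and subtract $\sE_{P_Z}[c_{\eta^{\ve}}]$ (which has the small bonus of not needing densities), and your $c_\eta$ has a trivial slip, since with $\ell(h)=I\{y\,h(z)\le 0\}$ the loss for $y=-1$ is $I\{h(z)\ge 0\}$ rather than $I\{h(z)>0\}$, though this leaves intact the two facts you use, namely $|c_\eta-c_{\eta^{\ve}}|\le|\eta-\eta^{\ve}|$ and $c_{\eta^{\ve}}\in[0,1]$.
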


In Theorem \ref{thm:erini}, the term $L_{P^{\ve}}\!\bigl(f^{(b)},h^{*\ve}\bigr)$ captures the excess risk of $f^{(b)}$ under the privatized distribution $P^{\ve}$. 
Since $f^{(b)}$ is trained on data from $P^{\ve}$, this term reflects the inherent difficulty of the learning task under $P^{\ve}$
\blue{and depends on the representation function $\gR$, the choice of classification algorithm, and the training sample size $n_0$;}
see related theoretical studies in \cite{svm2008,sang2022reproducing,ko2023excessriskconvergencerates}. 
\blue{The following three terms quantify discrepancies between the original and privatized distributions and do not depend on the training sample size~$n_0$.} 
\blue{The total variation distance term $d_{\mathrm{TV}}(P_Z,P_Z^{\ve})$ measures the discrepancy between the marginal distributions of $Z$ and $Z^{\vez}$. Similar excess risk bounds involving a total variation distance term can be found in \cite{ben2006analysis}. To further illustrate the magnitude of this term, Appendix~\ref{appsim:dtv_visualization} provides a numerical illustration showing how $d_{\mathrm{TV}}(P_Z,P_Z^{\ve})$ varies with the dimension $d$ and the privacy level~$\varepsilon_z$.}
The term $\sE_{P_Z}\bigl[|\eta(z)-\eta^{\ve}(z)|\bigr]$ quantifies how much the conditional distributions drift from $\eta$ to $\eta^{\ve}$, which can be seen as the discrepancy between the conditional distributions of $Y|Z$ and $Y^\vey|Z$. 
The last term highlights the effect of deviation around the decision boundary $1/2$, quantifying the increased uncertainty from the original distribution to perturbed distribution, where a narrower distance from $1/2$ means more uncertainty. A similar term appears in the excess risk bound derived by \cite{qin2024adaptive}.

Recall from Section~\ref{ssec:mrma} that model reversal flips a classifier’s sign if it has estimated accuracy less than $0.5$ under the original distribution $P$. 
Let $n_1$ be the sample size used for performance evaluation and $r^{(b)}$ be the {true} classification accuracy of $f^{(b)}$ under the distribution $P$. The following theorem shows how model reversal reduces the excess risk with high probability.

\begin{theorem}\label{thm:ermr}
    Let $\gR$ be a fixed representation function from $\gX$ to $\gZ$. Suppose $f^{*(b)}$ is obtained by selecting between $\{f^{(b)},-f^{(b)}\}$ via model reversal using $n_1$ evaluation samples. Then, with probability at least 
    $\Phi\!\Bigl(\sqrt{n_1}\,\frac{|r^{(b)}-1/2|}{\sqrt{r^{(b)}(1-r^{(b)})}}\Bigr)$ 
    (where $\Phi$ is the cumulative distribution function of normal distribution),
    \begin{align*}
        L_P\bigl(f^{*(b)},h^*\bigr)
        \;\le\;&
        L_{P^{\ve}}\!\bigl(f^{(b)},h^{*\ve}\bigr)
        \;+\;
        \frac{3}{4}\,d_{\mathrm{TV}}(P_Z,P_Z^{\ve})\\
        &\;+\;
        2\,\sE_{P_Z}\Bigl[\bigl|\eta(z)-\tfrac12\bigr|\;-\;\bigl|\eta^{\ve}(z)-\tfrac12\bigr|\Bigr].
    \end{align*}
\end{theorem}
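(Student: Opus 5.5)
The plan has two parts. First, show that with the stated probability model reversal picks the better of $f^{(b)}$ and $-f^{(b)}$. Second, bound the excess risk of that better classifier deterministically, reusing the decomposition behind Theorem~\ref{thm:erini} but with the conditional-drift term $\sE_{P_Z}|\eta-\eta^{\ve}|$ removed.

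\textbf{Step 1: the reversal is correct with the stated probability.} Write $f_{\max}$ for whichever of $\pm f^{(b)}$ has true accuracy $\max\{r^{(b)},1-r^{(b)}\}$ under $P$. Model reversal returns $f_{\max}$ exactly when the estimate $\tilde r^{(b)}$ and the truth $r^{(b)}$ lie on the same side of $1/2$. By Theorem~\ref{thm:ldp2}, $\tilde r^{(b)}$ is an unbiased average of $n_1$ i.i.d.\ terms. Taking the evaluation responses as (approximately) Bernoulli$(r^{(b)})$, i.e.\ using the variance $r^{(b)}(1-r^{(b)})/n_1$, the normal approximation gives
\[
\sP\bigl(\operatorname{sign}(\tilde r^{(b)}-\tfrac12)=\operatorname{sign}(r^{(b)}-\tfrac12)\bigr)\approx \Phi\!\Bigl(\sqrt{n_1}\,\tfrac{|r^{(b)}-1/2|}{\sqrt{r^{(b)}(1-r^{(b)})}}\Bigr).
\]
This matches the stated probability. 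On this event $f^{*(b)}=f_{\max}$, so the rest of the argument is deterministic.

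\textbf{Step 2: a deterministic bound for $f_{\max}$.} Since $\ell(-h)=1-\ell(h)$ almost surely, we have $L_P(f_{\max},h^*)=\min\{L_P(f^{(b)},h^*),\,L_P(-f^{(b)},h^*)\}$. It is convenient to write the excess risk pointwise as $L_P(h,h^*)=\sE_{P_Z}\bigl[2|\eta-\tfrac12|\,I\{\operatorname{sign}h\neq h^*\}\bigr]$, and to split $\gZ$ into the agreement set $A=\{h^*=h^{*\ve}\}$ and its complement $A^c$.
\begin{itemize}
\item On $A$, the contribution of $f^{(b)}$ is controlled, as in the proof of Theorem~\ref{thm:erini}, by its excess risk under $P^{\ve}$. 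The change of measure from $P_Z$ to $P_Z^{\ve}$ costs the $\tfrac34 d_{\mathrm{TV}}$ term.
\item On $A^c$, $\eta-\tfrac12$ and $\eta^{\ve}-\tfrac12$ have opposite signs. The key inequality is then $|\eta-\eta^{\ve}|=|\eta-\tfrac12|+|\eta^{\ve}-\tfrac12|$, which I plan to bound by $2(|\eta-\tfrac12|-|\eta^{\ve}-\tfrac12|)+\ldots$.
\end{itemize}

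\textbf{Step 3: eliminating the drift term (main obstacle).} The hard part is removing $\sE|\eta-\eta^{\ve}|$ altogether. I will try two routes.
\begin{itemize}
\item \emph{Direct route.} Use the minimum over the two signs to get $\min\{a,b\}\le \tfrac12(a+b)$. The sum $L_P(f^{(b)},h^*)+L_P(-f^{(b)},h^*)$ equals $\sE_{P_Z}2|\eta-\tfrac12|$, and this combines with the $P^{\ve}$ excess risk of $f^{(b)}$, which is at least the $P^{\ve}$ error minus the Bayes accuracy term $\tfrac12+\sE|\eta^{\ve}-\tfrac12|$. Combining gives the doubled margin term $2\sE[|\eta-\tfrac12|-|\eta^{\ve}-\tfrac12|]$.
\item \emph{Fallback.} If that is not tight enough, split by cases. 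If $r^{(b)}\ge1/2$, no flip occurs and I bound as in Theorem~\ref{thm:erini}, absorbing the drift term on $A$ (where $|\eta-\eta^{\ve}|$ is small relative to the margins) and on $A^c$ via the identity above. If $r^{(b)}<1/2$, I apply the same argument to $-f^{(b)}$ against the reversed Bayes rule $-h^{*\ve}$ under $P^{\ve}$. I expect the reversed case on $A^c$, in particular tracking the constants so they land exactly at $2$ and $\tfrac34$, to be the delicate point.
\end{itemize}
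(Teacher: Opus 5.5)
Your Step 1 is the paper's own argument: the same normal approximation with variance $r^{(b)}(1-r^{(b)})/n_1$. Note that this ignores the factor $(2q-1)^{-2}$ from randomized response in Theorem~\ref{thm:ldp2}, so it gives an approximation rather than a lower bound on the probability.

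The genuine gap is Step 3, which you flag yourself, and neither route closes it.
\begin{itemize}
\item The direct route only yields $\min\{L_P(f^{(b)},h^*),L_P(-f^{(b)},h^*)\}\le\sE_{P_Z}|\eta-\tfrac12|$, which is the excess risk of a fair coin. This bound does not depend on $f^{(b)}$ at all. It therefore cannot imply the target, whose right-hand side is $0$ when $P^{\ve}=P$ and $f^{(b)}=h^*$.
\item The fallback rests on pointwise inequalities you do not have. On $A$, $|\eta-\eta^{\ve}|=\bigl||\eta-\tfrac12|-|\eta^{\ve}-\tfrac12|\bigr|$ is at most $|\eta-\tfrac12|-|\eta^{\ve}-\tfrac12|$ only where $|\eta^{\ve}-\tfrac12|\le|\eta-\tfrac12|$. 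On $A^c$, bounding $|\eta-\tfrac12|+|\eta^{\ve}-\tfrac12|$ by $2(|\eta-\tfrac12|-|\eta^{\ve}-\tfrac12|)$ needs $3|\eta^{\ve}-\tfrac12|\le|\eta-\tfrac12|$.
\item Because reversal is a single global flip, whenever $A$ and $A^c$ both carry mass, one of them keeps the full drift $|\eta-\tfrac12|+|\eta^{\ve}-\tfrac12|$ whichever sign is chosen.
\end{itemize}

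In fact, at the generality at which you (and the paper) argue, Step 3 is false. Here $P^{\ve}$ is arbitrary and only the decomposition of Theorem~\ref{thm:erini} is used. Take $P_Z=P_Z^{\ve}$ uniform on $[0,1]$, $\eta\equiv\tfrac12+c$, and $\eta^{\ve}=\tfrac12+c$ on $[0,\tfrac12)$ and $\tfrac12-c$ on $[\tfrac12,1]$, with $f^{(b)}=h^{*\ve}$. The right-hand side of the theorem is $0$, yet $L_P(f^{(b)},h^*)=L_P(-f^{(b)},h^*)=c$. So the conclusion fails with probability one, although $r^{(b)}=\tfrac12$ and the theorem promises probability at least $\Phi(0)=\tfrac12$. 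This $P^{\ve}$ is not generated by Equation~\eqref{eq:eLDP}, but neither argument uses that mechanism's structure.

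The idea the paper uses instead of your $A$ versus $A^c$ split is a label-flip reduction. View $-f^{(b)}$ as trained on the flipped distribution $\tilde P^{\ve}$ with $\tilde\eta^{\ve}=1-\eta^{\ve}$. This distribution has the same marginal, the same margin term, and $L_{\tilde P^{\ve}}(-f^{(b)},-h^{*\ve})=L_{P^{\ve}}(f^{(b)},h^{*\ve})$. Applying Theorem~\ref{thm:erini} to both $\pm f^{(b)}$ then leaves, for the better one, the drift term $\min\{\sE_{P_Z}|\eta-\eta^{\ve}|,\sE_{P_Z}|\eta+\eta^{\ve}-1|\}$. That much is rigorous, and it is the bound you can actually prove in general.

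The paper then asserts that this minimum equals $\sE_{P_Z}[|\eta-\tfrac12|-|\eta^{\ve}-\tfrac12|]$. Write $a=\eta-\tfrac12$ and $b=\eta^{\ve}-\tfrac12$. Pointwise, $\min\{|a-b|,|a+b|\}=\bigl||a|-|b|\bigr|$. However, a minimum of expectations dominates the expectation of the minimum. So the step is valid exactly when two conditions hold: $ab$ is $P_Z$-a.s.\ nonnegative or $P_Z$-a.s.\ nonpositive, and $|b|\le|a|$ a.s. Both hold for label-only randomized response, where $\eta^{\ve}-\tfrac12=(2q^{\vey}-1)(\eta-\tfrac12)$.

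If you add that hypothesis, the flip reduction plus the pointwise identity finishes the proof in a few lines. Without it, your Step 3 cannot be completed.
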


Compared with the bound in Theorem~\ref{thm:erini}, the term $\sE_{P_Z}\bigl[|\eta(z)-\eta^{\ve}(z)|\bigr]\in[0,1]$ is replaced in Theorem~\ref{thm:ermr} by $\sE_{P_Z}\Bigl[\bigl|\eta(z)-\tfrac12\bigr|\;-\;\bigl|\eta^{\ve}(z)-\tfrac12\bigr|\Bigr]\in[0,1/2]$. When all perturbed datasets are non-negative (i.e., those with $g^{\ve}(z)<1/2$), they share the same Bayes classifier as the original distribution, and $|\eta(z)-\eta^{\ve}(z)|=|\eta(z)-\tfrac12|-|\eta^{\ve}(z)-\tfrac12|$. In this case, the excess risk bounds in Theorems~\ref{thm:erini} and \ref{thm:ermr} coincide. In contrast, when some datasets are negative, model reversal leads to a tighter bound, demonstrating its added benefit in such scenarios. Moreover, since dataset utility is typically unknown in practice, Theorem~\ref{thm:ermr} guarantees that with a sufficiently large evaluation sample size \( n_1 \), model reversal can effectively mitigate the influence of less-informative data and improve overall model performance.

Next, we analyze the final averaged classifier $f^\dagger$ produced by combining $B$ reversed weak classifiers $\{f^{*(b)}\}_{b=1}^B$ with weights determined by Equation \eqref{eq:wsingle}. The following theorem provides a bound for its excess risk under $P$ as $B\to\infty$.

\begin{theorem}\label{thm:erma}
    Let $\gR$ be a fixed representation function from $\gX$ to $\gZ$. 
    For any \((z,y)\sim P\), let \( F_z^\ve \) denote the distribution of classification accuracy \( \tilde r_z^{*(b)} = \sP\{\text{sign}[f^{*(b)}(z)] = y\}\) under the random sampling of training and evaluation sets. Assume this distribution has support \([r_{z,0}, r_{z,1}]\) with non-degenerate mass near \( r_{z,1} \). Let \( f^\dagger \) be the model-averaged classifier defined in Equation~\eqref{eq:fsingle}, and let $B_0$ be the number of classifiers exceeding the cutoff $r_0$. 
    If $r_0$ is chosen such that $B_0/B \to 0$ as $B\to\infty$, then
    \begin{align*}
        \limsup_{B \to \infty} L_P\bigl(f^\dagger,h^*\bigr)
        \;\le\;&
        \blue{L_{P^{\ve}}\!\bigl(f^\dagger,h^{*\ve}\bigr)}
        \;+\;
        \frac{3}{4}\,d_{\mathrm{TV}}(P_Z,P_Z^{\ve})\\
        &\;+\; 2\,\sE_{P_Z}\Bigl[\bigl|\eta(z)-\tfrac12\bigr|+\tfrac12-r_{z,1}\Bigr],
        \quad \text{in probability}.
    \end{align*}
\end{theorem}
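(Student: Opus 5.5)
The plan is to mimic the proof of Theorem~\ref{thm:ermr}, with one change: the pointwise accuracy of the averaged classifier $f^\dagger$ is now controlled through the upper endpoint $r_{z,1}$ of the accuracy distribution $F_z^\ve$.

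\textbf{Step 1: pointwise decomposition.} Write the excess risk at a point $z$ as
\[
\bigl|2\eta(z)-1\bigr|\, I\{\text{sign} f^\dagger(z)\neq h^*(z)\},
\]
and change measure from $P_Z$ to $P_Z^{\ve}$ exactly as in Theorem~\ref{thm:erini}. This change of measure produces the terms $L_{P^{\ve}}(f^\dagger,h^{*\ve})$ and $\tfrac34 d_{\mathrm{TV}}(P_Z,P_Z^{\ve})$. What is left is a conditional term. In Theorem~\ref{thm:ermr} it was bounded by $2\sE_{P_Z}[|\eta-\tfrac12|-|\eta^{\ve}-\tfrac12|]$, which comes from $\tfrac12+|\eta-\tfrac12|$ minus the accuracy of the reversed classifier, that accuracy being at least $\tfrac12+|\eta^\ve-\tfrac12|$. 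Here I will instead bound the pointwise error of $f^\dagger$ directly:
\[
\sP\{\text{sign} f^\dagger(z)\neq y\}-\sP\{h^*(z)\neq y\}\le \bigl(1-r^\dagger_z\bigr)-\bigl(\tfrac12-|\eta(z)-\tfrac12|\bigr)=\tfrac12+|\eta(z)-\tfrac12|-r^\dagger_z,
\]
where $r^\dagger_z$ is the pointwise accuracy of $f^\dagger$. Carrying the factor $2$ from the conditional-drift part as in Theorem~\ref{thm:ermr} gives the term $2\sE_{P_Z}[|\eta(z)-\tfrac12|+\tfrac12-r^\dagger_z]$.

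\textbf{Step 2: show $r^\dagger_z\ge r_{z,1}-o_p(1)$ as $B\to\infty$ for each $z$.} By Theorem~\ref{thm:ldp2} the estimates $\tilde r^{*(b)}$ are unbiased with variance $O(1/n_1^{(b)})$. So the weights \eqref{eq:wsingle} put their mass on classifiers whose accuracy exceeds $r_0$. The condition $B_0/B\to0$, together with non-degenerate mass of $F_z^\ve$ near $r_{z,1}$, lets $r_0$ be taken in the upper tail, so that $B_0\to\infty$ while $r_0\uparrow r_{z,1}$. Every classifier with positive weight then has accuracy in $[r_0,r_{z,1}]$.

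Passing from the accuracies of the weak classifiers to the accuracy of their weighted combination is the crux. I would argue through sign agreement: conditional on $z$, the sign of a weighted average of classifiers each correct with probability at least $r_0>\tfrac12$ is correct with probability at least the weighted average accuracy. A simple route is a Markov-type argument applied to the weighted vote of signs, using the fact that $w_b\ge0$ and $\sum_b w_b=1$. If needed, I will assume the $f^{*(b)}$ are comparably scaled, or read \eqref{eq:fsingle} as a weighted vote as the paper allows. Letting $B\to\infty$ then yields $\liminf r^\dagger_z\ge r_{z,1}$ in probability.

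\textbf{Step 3: integrate.} Since everything is bounded by $1$, dominated convergence lets me integrate the pointwise bound over $P_Z$ and take the $\limsup$, which gives the stated inequality in probability.

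\textbf{Main obstacle.} The hardest step is the aggregation claim in Step~2, that the averaged classifier is at least as accurate pointwise as the top-tail weak classifiers. In general this holds only for votes, and even then only under independence or monotonicity conditions. I would first try to exploit the fact that the top-tail classifiers agree with probability tending to $1$ near the top of the support, so their weighted sign inherits accuracy at least $r_0$. Failing that, I would state it as a consequence of the support assumption, treating $r_{z,1}$ as the limit accuracy of the combined rule.
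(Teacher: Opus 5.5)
Your proposal has a genuine gap, and it sits in Step~2, where all of the argument's content lies. Step~1 does no real work. With $r^\dagger_z=\sP\{Y=\text{sign}f^\dagger(z)\mid Z=z\}$ one has the identity $L_P(f^\dagger,h^*)=\sE_{P_Z}[\tfrac12+|\eta(z)-\tfrac12|-r^\dagger_z]$, so your displayed bound holds simply because the added terms are nonnegative; the change of measure and the ``carried'' factor $2$ contribute nothing.

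Step~2 fails in two places. First, the claim that every positively weighted classifier has accuracy in $[r_0,r_{z,1}]$ conflates global and pointwise accuracy. The weights \eqref{eq:wsingle} depend only on the scalar estimates $\tilde r^{*(b)}$ of \emph{global} accuracy, with one cutoff $r_0$ for all $z$, whereas $F_z^{\ve}$ and $r_{z,1}$ are pointwise. High global accuracy says nothing about the sign of $f^{*(b)}$ at a given $z$, and a single $r_0$ cannot increase to $r_{z,1}$ for every $z$ when $r_{z,1}$ varies. The global version also needs more than you have. With $n_1$ fixed, the variance bound of Theorem~\ref{thm:ldp2} does not shrink as $B\to\infty$, and $\tilde r^{*(b)}=\max\{\tilde r^{(b)},1-\tilde r^{(b)}\}$ is biased upward as an estimate of the accuracy of $f^{*(b)}$, not unbiased. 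So the classifiers passing a high cutoff are increasingly those with favourable evaluation noise, and controlling all $B$ estimates at once needs roughly $n_1\gg\log B$.

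Second, your aggregation lemma is false in general. Take three equally weighted voters whose errors occur exactly in pairs, each pair with probability $1/6$, and no error otherwise. Each voter is correct with probability $2/3$, but the majority is correct only with probability $1/2$. Markov's inequality yields only $r^\dagger_z\ge 2\bar r_z-1$. Inserting $2r_{z,1}-1$ into Step~1 overshoots the required $2(\tfrac12+|\eta(z)-\tfrac12|-r_{z,1})$ by exactly the Bayes error $\tfrac12-|\eta(z)-\tfrac12|$, which is positive unless $\eta(z)\in\{0,1\}$. More basically, a vote follows the weighted majority of the selected classifiers, not the upper endpoint of $F_z^{\ve}$. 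In the one reading where the lemma holds (probability over $Y$ given $z$, classifiers fixed), its hypothesis forces every positively weighted classifier to agree with $h^*(z)$, which is exactly what the global weights cannot guarantee.

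Your fallback, taking $r_{z,1}$ as the limiting accuracy of the combined rule, assumes the conclusion. Under the literal definition, no argument from the stated hypotheses alone can avoid this. There $\tilde r_z^{*(b)}\in\{\eta(z),1-\eta(z)\}$, so $r^\dagger_z\ge r_{z,1}-o_p(1)$ says $f^\dagger$ agrees with $h^*$ wherever some weak classifier does with positive probability. Here is a counterexample:
\begin{itemize}
\item Take linear weak classifiers, $Z\sim U(-1,1)$, $\eta=0.9$ on $|z|<\tfrac12$ and $0.1$ elsewhere, with both signs occurring at every $z$. Then $r_{z,1}=0.9$ everywhere.
\item But $f^\dagger$ is linear, so $L_P(f^\dagger,h^*)\ge0.2$ and $\sE_{P_Z}[r^\dagger_z]\le0.7$ for every $B$ and every cutoff.
\item If only labels are privatized, then $d_{\mathrm{TV}}=0$ and $L_{P^{\ve}}(f^\dagger,h^{*\ve})=(2q^{\vy}-1)L_P(f^\dagger,h^*)$, so the stated bound itself fails in that case.
\end{itemize}

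The paper's proof does not close this step either. It sidesteps sign aggregation by linearly averaging latent regression functions, $\eta^{\dagger\ve}=\sum_b w_b\eta^{*(b)}$. It then asserts $|\eta^{\dagger\ve}-\tfrac12|\to r_{z,1}-\tfrac12$ by treating the truncation as if it acted on the pointwise $\tilde r_z^{*(b)}$. Finally, it transfers the bound of Theorem~\ref{thm:ermr} to $f^\dagger$ via a convex-combination remark. To make either argument a proof you must state an explicit hypothesis. One option: with probability tending to one, at $P_Z$-almost every $z$, all positively weighted classifiers share a sign and have pointwise accuracy at least $r_{z,1}-o(1)$. Then $r^\dagger_z\ge r_{z,1}-o_p(1)$ is immediate and your Steps~1 and~3 finish the argument. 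This hypothesis is essentially the conclusion, but it makes explicit what is being assumed.
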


This result shows that the excess risk of the model-averaged classifier is asymptotically bounded in probability bounded by a quantity that depends on the best-performing classifiers, i.e., those near the upper end \( r_{z,1} \) of the accuracy distribution. As the number of classifiers \( B \) increases, the weight assigned to lower-performing models vanishes under the thresholding scheme in Equation~\eqref{eq:wsingle}, leading the ensemble to concentrate on top-performing classifiers. This effectively implements a form of regularized extreme value selection. The convergence rate of the bound depends on the amount of probability mass that \( F_z^\ve \) places near \( r_{z,1} \); see~\cite{leadbetter2012extremes} for related results from extreme value theory. The third term in the bound reflects the remaining gap between the best achievable accuracy \( r_{z,1} \) and the Bayes-optimal level \( |\eta(z) - \tfrac{1}{2}| + \tfrac{1}{2} \). Compared to Theorem~\ref{thm:ermr}, this bound is tighter when high-quality classifiers exist and can be identified reliably through evaluation. These insights are consistent with our empirical findings in Section~\ref{sec:exp}, where model averaging improves performance, especially in high-noise regimes.

While the above results focus on a single-server setting, they naturally extend to multi-server scenarios introduced in Section~\ref{ssec:multi}. In a heterogeneous multi-server environment, each server can treat other servers’ classifiers as additional weak classifiers and apply model reversal and model averaging locally. The same type of bounds hold, subject to analogous conditions for data partition and accuracy estimation.

Deciding how to split data between the training and evaluation sets, as well as how many weak classifiers to train, is crucial. The theoretical results above provide guidance: a sufficiently large training sample size \( n_0 \) is necessary to ensure that each weak classifier is adequately learned, achieving lower excess risk \( L_{P^{\ve}}(f^{(b)}, h^{*\ve}) \); a larger evaluation sample size \( n_1 \) improves the precision of utility estimates, thereby strengthening model reversal; and a larger number of weak classifiers \( B \) leads to a more stable aggregated classifier, enhancing the effect of model averaging. Meanwhile, our empirical findings (see Appendix~\ref{appsim:paras}) suggest that in high-noise regimes, it may be more beneficial to allocate more samples to evaluation rather than training.

\section{Functional Data Classification under LDP}\label{sec:func}
Advances in sensing technologies have made it increasingly feasible to collect data densely sampled over temporal or spatial domains, giving rise to what is commonly referred to as functional data \citep{ramsay2005}. Examples include physiological signals such as heart rate and electrocardiogram readings from wearable devices, GPS traces from smartphones that capture mobility patterns, and sensor data from activity trackers monitoring daily routines or sleep cycles. 
These data streams, while informative, can also expose sensitive information related to an individual's health, behavior, and lifestyle. This makes privacy preservation particularly critical in biomedical and health-monitoring applications, where temporal or spatial patterns may reveal personal characteristics or preferences \citep{stisen2015smart}. 
Despite its importance, privacy-preserving learning with functional data remains largely unexplored. As a concrete demonstration of our approach, this section presents the full pipeline for constructing a classifier for functional data under LDP. To the best of our knowledge, this is the first framework for functional classification under LDP.

Suppose there is a single server with access to \(N\) clients. Each client holds a square-integrable functional covariate \(x(t)\), defined on a compact domain \(\gI=[0,1]\), and a corresponding binary label \(y\in\{-1,1\}\). Our goal is to learn a functional linear classifier 
\[
   f(x) \;=\;\alpha \;+\;\int_0^1 x(t)\,\beta(t)\,dt,
\]
such that we predict \(\hat{y}(x) = \text{sign}\{f(x)\}\). In accordance with \cite{yang2020local}, an LDP algorithm typically proceeds in four stages: encoding, perturbation, aggregation, and estimation. Below, we describe how each stage applies to functional data.

\subsection{Encoding and Perturbation}\label{sec:encoding}
For each client with a functional covariate \(x(t)\) and label \(y\) in the training set, the data reported to the server is transformed through the following steps.

\textbf{Dimensionality Reduction.}
Functional data are often observed on a fine grid of time points (or locations), leading to high-dimensional vectors. To reduce dimensionality in a privacy-preserving way, we project \(x(t)\) onto a finite set of basis functions \(\{\phi_k(t)\}_{k=1}^d\), which is corresponding to the representation process in Section \ref{ssec:nota}. Concretely,
\begin{equation}\label{eq:encoding}
    x(t) \;=\;
    \sum_{k=1}^d \;z_k\,\phi_k(t) \;+\;\xi(t),
\end{equation}
where \(\bz = (z_1,\dots,z_d)^{\top}\in\mathbb{R}^d\) are the basis coefficients and \(\xi(t)\) is a residual term excluded from the model. 
\blue{The choice of basis functions \(\Phi=(\phi_1,\dots,\phi_d)\) (e.g., B-splines, Fourier bases) and the dimension $d$ are specified by the server, where $d$ controls the representation fidelity and also affects the magnitude of the subsequent privatization for a fixed privacy budget.}
This projection serves two key purposes:
\begin{enumerate}
    \item Functional data fitting. In practice, \(x(t)\) is discretely observed at $T$ time points, \(\bm{x} = (x(t_1),\ldots,x(t_T))^\top\). For smooth families like B-splines of order \(m\), fitting accuracy improves at the optimal rate of \( T^{-2m/(2m+1)} \) as the number of knots grows like \( T^{1/(2m+1)} \) \citep{eubank1999nonparametric}, which indicates a relatively slow rate of increase for $d$.
    \item Privacy enhancement. Truncating at \( d \) dimensions mitigates overfitting to person-specific variations, thereby enhancing privacy. It also significantly reduces communication overhead compared to transmitting the full functional data.
\end{enumerate}
\blue{When the chosen basis cannot adequately represent the underlying signal, or when the classification rule is too complex to be well approximated by a basis expansion, the separate perturbation mechanism in Equation~\eqref{eq:eLDP} may lead to substantial information loss. This mechanism can be viewed as inducing a form of measurement error in both covariates and labels, and classification performance may deteriorate further in such settings \citep{li2024update}. More generally, Equation~\eqref{eq:eLDP} can be replaced by more sophisticated LDP mechanisms, together with the corresponding algorithms for constructing weak classifiers \citep{berrett2019classification,ma2024optimal}, while our proposed evaluation, model reversal, and model averaging procedures remain applicable.}

\blue{
\textbf{Rescaling.}
To bound sensitivity for LDP, we rescale each component of \(z\) into a fixed range, typically \([-1,1]\). We consider two simple examples:
\begin{align*}
    \text{(Tanh Transformation)} &\quad z_k^* \;=\;\tanh(z_k),\\
    \text{(Max-Abs Normalization)} &\quad z_k^* \;=\;
    \frac{z_k}{\max_{k'} \lvert{z_{k'}}\rvert}\, .
\end{align*}
Here, \(\bm{z}^*=(z_1^*,\dots,z_d^*)^\top\in[-1,1]^d\).
The tanh transformation is applied componentwise and introduces a nonlinear squashing effect, whereas the Max-Abs normalization is a linear rescaling that preserves coefficient signs and the relative magnitudes across coordinates. Other bounded transformations can also be used to map the feature vector into a fixed bounded range and thereby control sensitivity under the chosen LDP mechanism. In Appendix~\ref{appsim:single}, we compare these two rescaling choices and observe similar classification accuracy in our experimental setting. Therefore, for concreteness, the experiments in Section~6 report results based on the tanh transformation.
}

\textbf{Perturbation.} 
After rescaling, each client adds noise according to an LDP mechanism. Using a Laplace mechanism \citep{dwork2006calibrating}, they report
\begin{equation}\label{eq:xpert}
    \bz^{(\varepsilon_z)} \;=\; \bz^* \;+\; \boldsymbol{\delta}^{(\varepsilon_z)},
\end{equation}
where \(\boldsymbol{\delta}^{(\varepsilon_z)}\in\mathbb{R}^d\) has independent and identically distributed (i.i.d.) Laplace entries with scale \(\lambda = d\,\Delta / \vz\) and \(\Delta=2\) (since each \(z_k^*\in[-1,1]\)). Hence each coordinate’s privacy budget is effectively \(\vz/d\). 
To protect the binary label \(y\), we apply the randomized response mechanism \citep{warner1965randomized}:
\begin{equation}\label{eq:ypert}
    \sP\bigl(y^{(\varepsilon_y)} = y\bigr)
    \;=\;
    \frac{\,e^{\,\vy}\,}{\,1 + e^{\,\vy}\,},
\end{equation}
where \(\vy = \varepsilon - \vz\). In the experiment, we choose \(\vy = \varepsilon/(d+1) = \vz/d\). Theorem~\ref{thm:ldp1} shows that the entire encoding and perturbation procedure satisfies \(\varepsilon\)-LDP.

\begin{theorem}\label{thm:ldp1}
    Let \(\gM_f\) be the privacy mechanism taking \(\bigl(x(t),y\bigr)\) as input and returning \(\bigl(\bz^{(\varepsilon_z)},y^{(\varepsilon_y)}\bigr)\) via Equations \eqref{eq:xpert}--\eqref{eq:ypert}. Then \(\gM_f\) satisfies \(\varepsilon\)-LDP.
\end{theorem}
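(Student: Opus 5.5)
The mechanism $\gM_f$ is a composition of three pieces: a deterministic preprocessing map, a Laplace mechanism applied to the features, and randomized response applied to the label. The plan is to bound the likelihood ratio of each piece separately and then multiply.

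\textbf{Step 1: preprocessing.} The map $x(t)\mapsto \bz^*$ is deterministic: basis projection as in Equation~\eqref{eq:encoding}, followed by tanh or Max-Abs rescaling. By post-processing immunity, it suffices to prove that the map $(\bz^*,y)\mapsto(\bz^{(\varepsilon_z)},y^{(\varepsilon_y)})$ is $\varepsilon$-LDP on the domain $[-1,1]^d\times\{-1,1\}$. Any two inputs $(x_1,y_1),(x_2,y_2)$ are sent to points in this set, so a likelihood-ratio bound that is uniform over the set transfers directly to $\gM_f$.

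\textbf{Step 2: the Laplace part.} Write the output density as $p(v\mid \bz^*)=\prod_{k=1}^d \frac{1}{2\lambda}\exp(-|v_k-z_k^*|/\lambda)$. For two inputs $\bz^*_1,\bz^*_2\in[-1,1]^d$, apply the triangle inequality coordinatewise:
\[
\frac{p(v\mid \bz_1^*)}{p(v\mid \bz_2^*)}\le \exp\Bigl(\sum_k |z^*_{1k}-z^*_{2k}|/\lambda\Bigr).
\]
Each coordinate difference is at most $\Delta=2$, so the $\ell_1$ distance is at most $d\Delta$. With $\lambda=d\Delta/\varepsilon_z$, the ratio is at most $e^{\varepsilon_z}$. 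For continuous outputs the definition's "$\sP(\gM(u)=v)$" should be read as a density, or equivalently as a bound over measurable output sets obtained by integrating the density bound. This is the only point requiring care, and it is standard.

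\textbf{Step 3: the label part.} Randomized response with $q=e^{\varepsilon_y}/(1+e^{\varepsilon_y})$ gives output probabilities in $\{q,1-q\}$. Their ratio is at most $q/(1-q)=e^{\varepsilon_y}$.

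\textbf{Step 4: composition.} The noises $\boldsymbol\delta^{(\varepsilon_z)}$ and the label flip are drawn independently. The joint output probability therefore factorizes as $p(v\mid\bz^*)\,\sP(y^{(\varepsilon_y)}=w\mid y)$, and the joint ratio is at most $e^{\varepsilon_z}e^{\varepsilon_y}=e^{\varepsilon}$, using $\varepsilon_y=\varepsilon-\varepsilon_z$. If only one component is protected (the other budget being $\infty$), that case is covered the same way, since the unprotected factor is absent.

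The main obstacle is minor. One must verify that Max-Abs normalization keeps every coordinate in $[-1,1]$, which holds by construction; the degenerate case $\bz=0$ needs a convention such as mapping to $0$. With that settled, the sensitivity bound $d\Delta$ in Step 2 holds regardless of which rescaling is used.
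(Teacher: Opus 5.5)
Your proposal is correct and follows essentially the same route as the paper's proof. Both factor the output density into the randomized-response ratio, bounded by $q/(1-q)=e^{\varepsilon_y}$, and the $d$ coordinatewise Laplace ratios, each bounded by $e^{\varepsilon_z/d}$ since $|z^*_{1,k}-z^*_{2,k}|\le\Delta=2$ and $\lambda=d\Delta/\varepsilon_z$; multiplying gives $e^{\varepsilon_z+\varepsilon_y}=e^{\varepsilon}$. One small terminological point: your Step 1 reduction is not post-processing immunity, since the deterministic map acts on the input, but the justification you actually give---that the likelihood-ratio bound holds uniformly over $[-1,1]^d\times\{-1,1\}$---is exactly what is needed.
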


It is worth noting that we employ the Laplace mechanism here due to its simplicity and ease of interpretation, which facilitates our subsequent demonstrations of the proposed methods. As shown in \cite{duchi2018minimax}, adding Laplacian noise to (appropriately truncated) observations is an optimal privacy mechanism in one-dimensional mean estimation under LDP. However, this optimality does not directly extend to higher-dimensional settings. For additional insights on such mechanisms, we refer readers to \cite{nguyen2016collecting,duchi2018minimax,wang2019collecting,xiao2023geometry}.

\subsection{Classifier Construction with MRMA}\label{ssec:aggest}
Once the server collects $\{(\bz_i^\vez,y_i^\vey)\}_{i=1}^{N_0}$ from \(N_0\) training clients, it can train multiple {weak classifiers} as introduced in Section~\ref{ssec:eval}. Two general methods are outlined below.

\textbf{Method I.} It's straightforward to build a vector classifier by treating \(\bm{z}_i^{(\varepsilon_z)} \in \mathbb{R}^d\) as the input to a standard classification model (e.g., logistic regression, SVM). Denote the estimated intercept and coefficients by \(\hat\alpha\) and \(\hat{\bm{b}}=(\hat b_1,\dots,\hat b_d)^\top\in\mathbb{R}^d\), respectively. We then recover the slope function via $\widehat\beta(t)=\sum_{k=1}^d\hat b_k\phi_k(t)$, and define the corresponding functional classifier as
\begin{equation*}
   \hat{f}(x)
   \;=\;
   \hat{\alpha} \;+\;\int_0^1 x(t)\,\widehat{\beta}(t)\,dt.
\end{equation*}

\textbf{Method II.} Before building the classifier, we first reconstruct the function using the perturbed basis scores, i.e., treating $x^\vez_i(t)=\sum_{k=1}^d z^\vez_{i,k}\phi_k(t)$ as the perturbed functional covariate, where $z^\vez_{i,k}$ represents the $k$th element of $\bz^\vez_i$. Then, the classifier can be obtained using the functional conjugate gradient algorithm \citep[CG, ][]{kraus2019classification}, functional distance weighted discrimination \citep[DWD, ][]{sang2022reproducing}, or any other established functional classification method.

With these two ways of constructing classifiers, in the Appendix \ref{appsim:single}, we illustrate the effects of dimensionality reduction, rescaling, and perturbation on the misclassification rate of the classifiers. Experiment results indicate that the projection of functional data and rescaling of coefficient vectors have a small impact on the classifier's performance. And the performance of two types of transformations is similar in our context. Moreover, as the privacy budget $\varepsilon\to0$, the misclassification rates of the weak classifiers based on the perturbed data tend to 50\%.

\textbf{MRMA.} Based on the noised binary evaluations collected from the additional \( N_1 \) evaluation clients in Section~\ref{ssec:eval}, we apply the MRMA procedure introduced in Section~\ref{ssec:mrma}. Specifically, for a weak classifier \(f^{(b)}\), if its estimated accuracy \(\tilde{r}^{(b)} < 50\%\), we invert its parameters:
\[
  (\hat{\alpha}^{(b)},\,\widehat{\beta}^{(b)}(t))
  \;\longmapsto\;
  \bigl(-\hat{\alpha}^{(b)},\,-\,\widehat{\beta}^{(b)}(t)\bigr).
\]
Let \(\{f^{*(b)}\}_{b=1}^B\) be the {reversed} classifiers, and denote their weights by \(\{w_b\}_{b=1}^B\) as in Equation \eqref{eq:wsingle}. Our final averaged functional classifier becomes
\begin{equation*}
    f^\dagger(x)\;=\;\hat\alpha^\dagger\;+\;\int_0^1 x(t)\,\widehat\beta^\dagger(t)\,dt,
\end{equation*}
where $\hat\alpha^\dagger=\sum_{b=1}^Bw_b\,\hat\alpha^{*(b)}, \widehat\beta^\dagger(t)=\sum_{b=1}^Bw_b\,\widehat\beta^{*(b)}(t)$.

This example demonstrates how our LDP framework generalizes naturally to infinite-dimensional data via basis truncation, preserving both privacy and utility. It highlights the method’s flexibility in accommodating complex data structures.

\section{Experiments}\label{sec:exp}
In this section, we present experiments that primarily demonstrate the improvements in classification accuracy achieved by applying our proposed techniques, compared to baselines that do not use them. These results validate the effectiveness of our approach. Differences in performance across classifier types are due to their intrinsic properties and are not the focus of this study.

In each experimental trial, the functional covariate $X(\cdot)$ is generated by $X(t)=\sum_{j=1}^{50}\xi_{j}$ $\zeta_j\phi_j(t)$ for $t\in[0,1]$, where $\xi_{j}$'s are independently drawn from $U(-\sqrt{3},\sqrt{3})$, $\zeta_j=(-1)^{j+1}j^{-1},j=1,\dots,50$, $\phi_1(t)=1$ and $\phi_j(t)=\sqrt2\cos((j-1)\pi t)$ with $j\ge2$. 
The binary response variable $Y$, taking values $1$ or $-1$, is generated using the following logistic model:
\begin{align*}
    &f\left(X\right)=\alpha_0+\int_0^1 X(t) \beta(t) d t, \quad \operatorname{Pr}\left(Y=1\right)=\frac{\exp \left\{f\left(X\right)\right\}}{1+\exp \left\{f\left(X\right)\right\}},
\end{align*}
where $\alpha_0=0.1$, $\beta(t)$ is the slope function, and $f(X)$ is referred to as the classification function. 
Results in Appendix \ref{appsim:single} show that the performance with \( d = 4, 5, 6 \) cubic B-Spline is comparable, and the performance based on Tanh or Max-Abs transformation is close. Thus we present results with $d=4$ and employ the tanh transformation.

\subsection{Single-Server Classification with MRMA}\label{secexp:RMA}
In this section, we demonstrate the improvements in classification accuracy brought about by model reversal and model averaging. 
We generate data for a server using the slope function \( \beta(t) = \sum_{j=1}^{50} 4(-1)^{j+1} j^{-2} \phi_j(t) \). 
Assuming the server has a total of \(N = 3000\) clients, we allocate \(N_0 = 500\) for training and the remaining $N_1=2500$ for evaluation. To construct classifiers, we sequentially draw \(n_0 = 50\) samples from the training data set without replacement, repeating this procedure $B=50$ times. At the same time, we partition the evaluation set into $B=50$ subsets of equal size, enabling us to evaluate the classification accuracy of each classifier using $n_1=50$ distinct samples. To assess the performance of the classifiers, we randomly generate a testing data set comprising $500$ samples during each trial, repeating this procedure $500$ times.

Figure \ref{fig:rma} showcases the misclassification rates along with error bars, for various classifiers across different \( \varepsilon \) levels. In this figure, ``Weak" denotes the average misclassification rate of \( B = 50 \) weak classifiers obtained through sampling. ``MR" represents the average misclassification rate of \( B \) weak classifiers after model reversal under LDP. ``MA" signifies the results when using model averaging on weak classifiers with cutoff value $r_0=0.8$, while ``MRMA" illustrates the results of applying both model reversal and model averaging. 
To compare with classic ensemble methods, we train $B$ weak classifiers under LDP. Each classifier is trained with $N/B=60$ instances from the total sample. We then obtain the results through majority voting and equal-weight averaging, denoted as ``Voting" and ``Averaging", respectively. And ``All data" denotes the classifier trained with $N$ clients directly.

The classifier ``All data", even if it is trained with $3000$ clients, shows almost no improvement over the classifier ``Weak" when $\varepsilon$ is small (indicating substantial noise interference). And classifiers ``Voting" and ``Averaging" also perform similarly. 
However, our proposed techniques, both model reversal and model averaging, significantly improve the performance of all types of weak classifiers. And MRMA further enhances the performance of SVM and CG classifiers substantially. 
Figure \ref{fig:size} in Appendix \ref{appsim:size} demonstrates that even when allocating more clients for training weak classifiers, MR and MA can still significantly enhance the classifiers' performance. 
From the client's perspective, this means that to achieve a target classification accuracy, our method allows for a higher level of privacy protection, that is, a smaller value of $\varepsilon$ can be used without sacrificing performance.
For further discussions regarding distinctions among different types of classifiers, please refer to Appendix \ref{appsim:single}.

\begin{figure}[t!]
\centering
\subfig{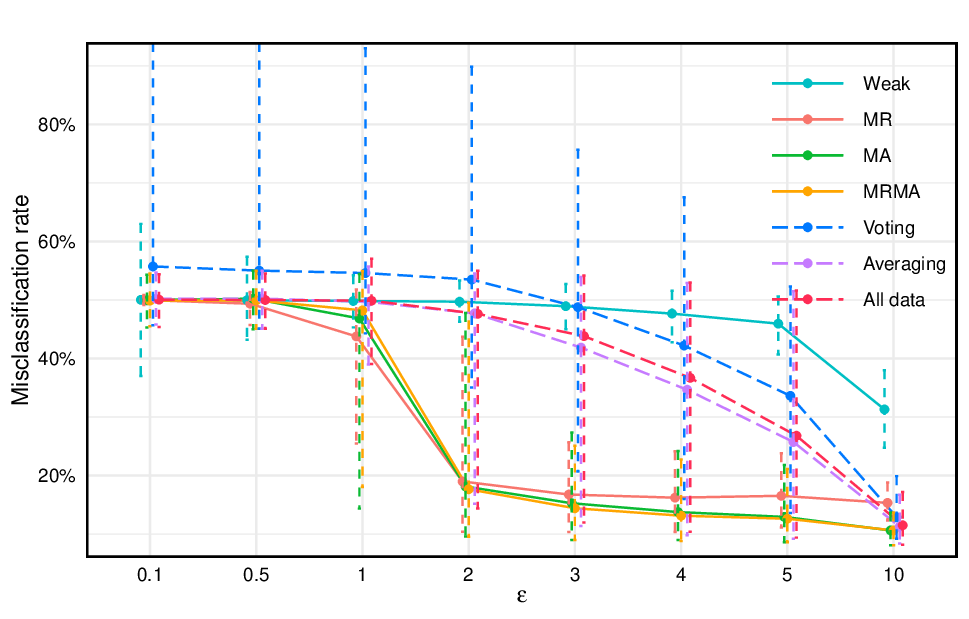}{a}{Logistic}{0.49}\hfill
\subfig{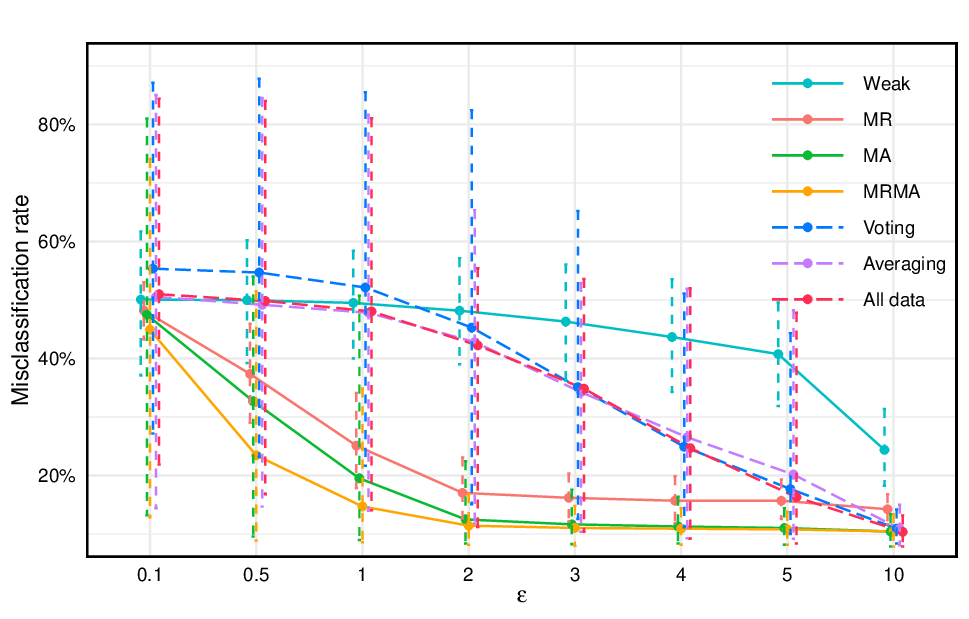}{b}{SVM}{0.49}
\subfig{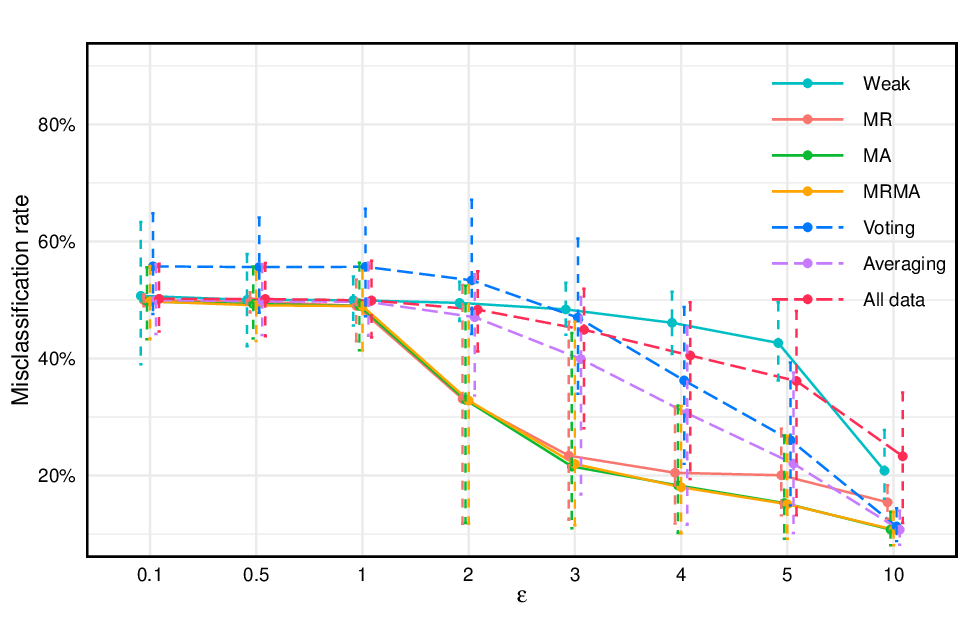}{c}{DWD}{0.49}\hfill
\subfig{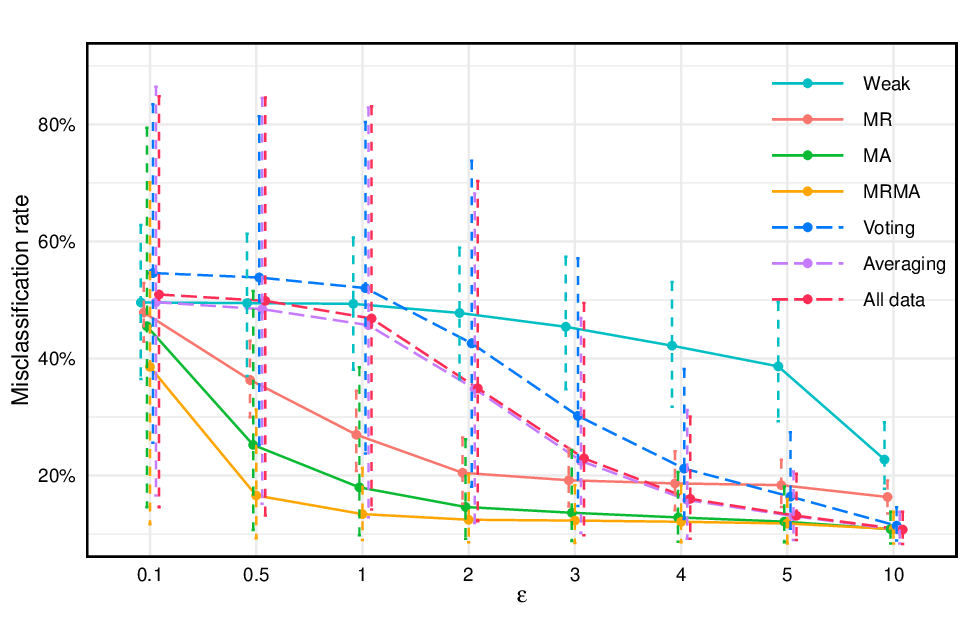}{d}{CG}{0.49}
\caption{The misclassification rates of classifiers with a single server under $\varepsilon$-LDP.}
\label{fig:rma}
\end{figure}

\subsection{Multi-Server Classification with MRMA under Heterogeneity}\label{secexp:FL}
In this section, we showcase the improvements in classification accuracy achieved through multi-server learning for individual servers. We consider a total of $K=25$ servers, divided into three groups, each characterized by a distinct slope function to introduce heterogeneity. Specifically,
\begin{itemize}
    \item Group 1: for $k=1,\dots,10$, \( \beta_k(t)=\sum_{j=1}^{50}\gamma_j(-1)^{j+1} j^{-2} \phi_j(t) \) with $\gamma_j\overset{i.i.d.}{\sim}U(-8,-2)$.
    \item Group 2: for $k=11,\dots,15$, $\beta_k(t)\sim\text{GP}(0,K(s,t))$ with $K(s,t)=\exp(-15|s-t|)$.
    \item Group 3: for $k=16,\dots,25$, \( \beta_k(t)=\sum_{j=1}^{50}\gamma_j(-1)^{j+1} j^{-2} \phi_j(t) \) with $\gamma_j\overset{i.i.d.}{\sim}U(2,8)$.
\end{itemize}
Here, \( \text{GP}(0, K(s,t)) \) represents a Gaussian process with zero mean and kernel \( K(s,t) \). It is essential to highlight that the slope functions of servers within the same group are not identical. Moreover, the directions of the slope functions in groups 1 and 3 are opposite, serving as a test to assess the potential negative transfer impact during multi-server learning. In group 2, the slope function is randomly generated, resulting in an approximate 50\% misclassification rate for classifiers built on servers in this group. This design is purposefully implemented to gauge its potential disruption to the multi-server learning process.

For each server, we generate \(N=3000\) clients, with an additional \(500\) clients designated for testing. Each server employs algorithms with the parameters \(N_0=500\), \(N_1=2500\), \(n_0=50\), and \(B=50\). 
Initially, each server independently obtains its own classifier using MRMA, with parameters set to \( \varepsilon_v = \varepsilon \) and \( r_0 = 0.8 \). Subsequently, all servers proceed to perform multi-server learning. 
The parameters for this phase are set as \( \varepsilon_v = \varepsilon_v^* = \varepsilon/2 \), with \( r_0^* = 0.8 \) to mitigate potential negative transfer effects. 
As expected, servers in group 2 exhibit misclassification rates around 50\%. The average misclassification rates for classifiers of servers in groups 1 and 3 under these two scenarios are illustrated in Figure \ref{fig:fl}. 
The results show that multi-server learning significantly improves the performance of both logistic and DWD classifiers, even with server heterogeneity. While SVM and CG classifiers already perform well within a single server setting, multi-server learning shows comparable or slightly better performance.

\begin{figure}[htbp]
\centering
\subfig{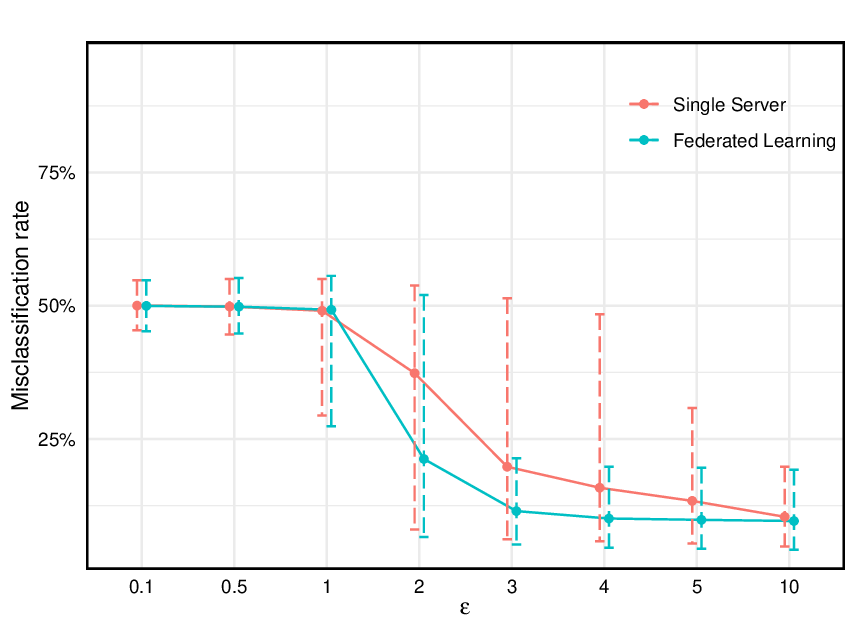}{a}{Logistic}{0.49}\hfill
\subfig{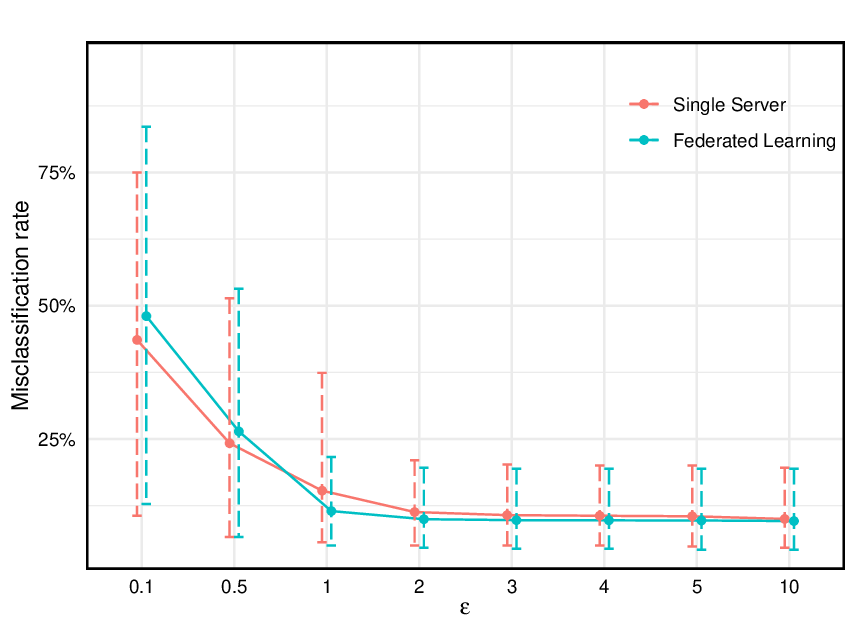}{b}{SVM}{0.49}
\subfig{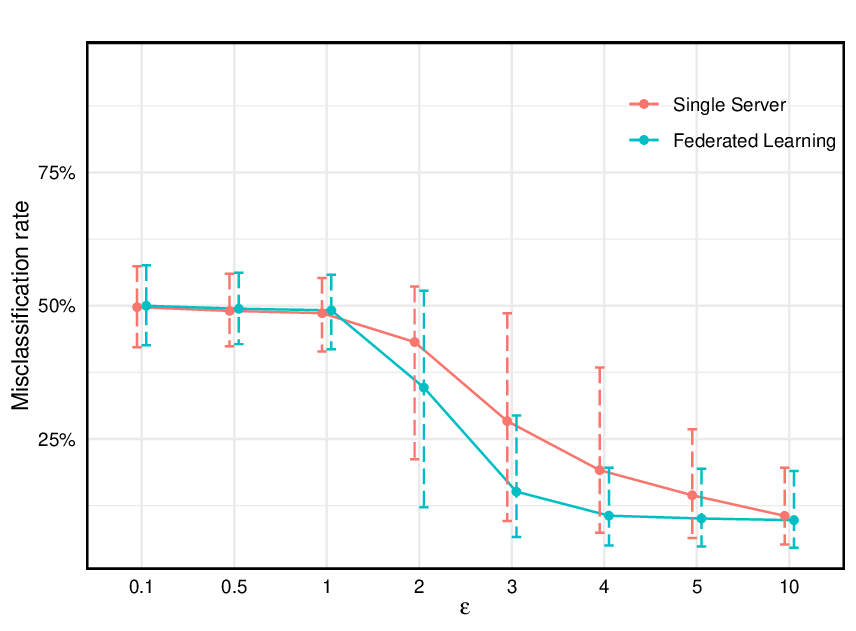}{c}{DWD}{0.49}\hfill
\subfig{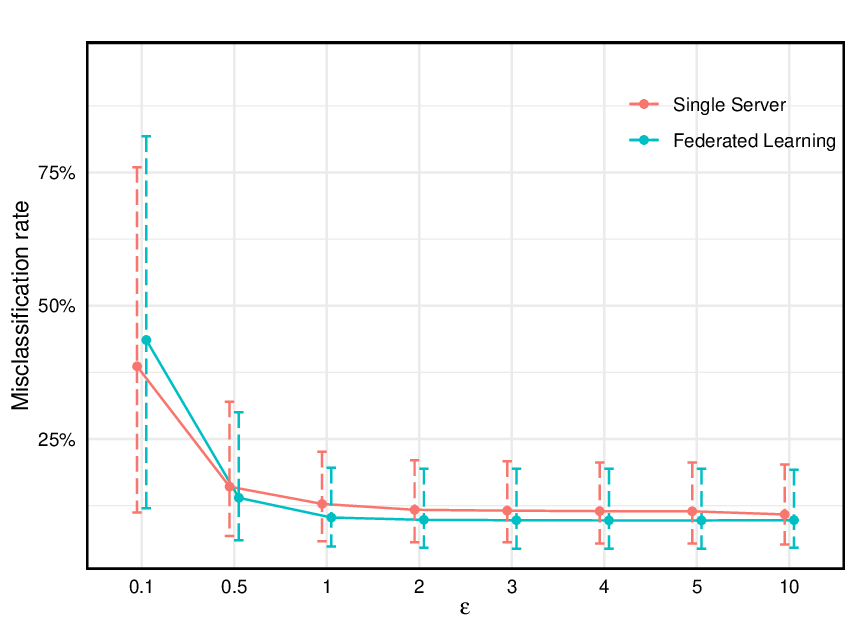}{d}{CG}{0.49}
\caption{The misclassification rates of classifiers with multi-server under $\varepsilon$-LDP.}
\label{fig:fl}
\end{figure}

\section{Real Application}\label{sec:real}
To demonstrate the practical utility of our proposed classification methods under LDP, we evaluate their performance on both vector-valued and functional datasets, each involving sensitive information that necessitates privacy protection. Section~\ref{ssec:real_vec} presents results from two real-world datasets with vector predictors related to diabetes and employee attrition, while Section~\ref{ssec:real_fun} focuses on applications involving functional predictors derived from physical activity and speech recordings.

\subsection{Vector-valued Predictors}\label{ssec:real_vec}
We first evaluate our methods on two publicly available datasets: a healthcare dataset related to diabetes risk and an employee attrition dataset. Both datasets contain sensitive information, making them suitable candidates for privacy-preserving classification. \blue{We compare our methods with the histogram-based classifier under LDP proposed by \citet{berrett2019classification}, referred to as ``Histogram''. We fix the number of splits to \(h^{-1}=4\) along each axis, since tuning \(h\) under LDP would require an additional privacy-preserving validation step and further complicate the procedure.}

The Diabetes dataset, published by \citet{pore2023diabetes}, comprises 2768 samples and includes eight health-related attributes used to predict whether an individual has diabetes. We randomly split the dataset into 2214 training samples (approximately 80\%) and 554 testing samples. To assess classifier performance under LDP, we repeat this process 500 times. In each repetition, we draw \(N_0 = 414\) samples from the training samples to serve as the training set, while the remaining \(N_1 = 1800\) samples are used for evaluation. Specifically, we randomly sample \(n_0 = 60\) observations (without replacement) from the training set to construct a classifier, repeating this procedure $B=30$ times to generate multiple weak classifiers. Then we partition the evaluation set into \(B = 30\) subsets of equal size, enabling us to evaluate the classification accuracy of each classifier using $n_1=60$ distinct samples.

The Employee dataset, introduced by \citet{elmetwally2023employee}, contains employment-related records for 4653 individuals, each described by eight variables, including education, job history, and workplace factors. The task is to predict whether an employee leaves the company. We split the dataset into 3722 training samples and 931 testing samples, again repeating the process 500 times. Within each repetition, we use \(N_0 = 722\) for training and \(N_1 = 3000\) for evaluation. Classifier construction and evaluation follow the same procedure as described for the Diabetes dataset, using \(n_0 = n_1 = 100\) and \(B = 30\).

Tables~\ref{tab:real_diabetes} and \ref{tab:real_employee} report the average misclassification rates and standard deviations across 500 repetitions under various privacy levels (\(\varepsilon\)) for both datasets. We set $r_0=0.7$ to ensure the existence of weak classifiers satisfying this criterion. Our MRMA-based classifier consistently achieves superior accuracy, particularly when \(\varepsilon\) is small, corresponding to stronger privacy guarantees. As \(\varepsilon\) increases, indicating weaker privacy constraints, the performance differences among our proposed methods, the ensemble strategies (``Voting'' and ``Averaging''), and the ``All data'' method become negligible, with all achieving similar accuracy and significantly outperforming the histogram-based approach. 
Notably, even when \(\varepsilon = 1000\), the histogram method remains less effective, likely due to the substantial Laplace noise introduced, whose scale is \(2^{d+1}/\varepsilon\), where \(d = 8\) is the number of predictors. This level of noise, although reduced for large \(\varepsilon\), still introduces enough distortion to degrade classification performance.

\begin{table}[h!]
\centering
\begin{tabular}{lcccccccc}
\toprule
$\varepsilon$ & Weak & MR & MA & MRMA & Voting & Averaging & All data & Histogram \\
\midrule
0.1   & 49.55 & 48.22 & 46.22 & \textbf{43.44} & 55.25 & 49.03 & 50.82 & 49.75 \\
      & (0.078) & (0.032) & (0.153) & (0.144) & (0.221) & (0.157) & (0.158) & (0.057) \\
0.5   & 49.23 & 40.23 & 37.32 & \textbf{34.49} & 53.93 & 45.61 & 46.33 & 49.75 \\
      & (0.066) & (0.031) & (0.095) & (0.030) & (0.226) & (0.152) & (0.153) & (0.057) \\
1     & 48.48 & 34.90 & 34.59 & \textbf{34.31} & 48.09 & 41.50 & 41.57 & 49.75 \\
      & (0.055) & (0.020) & (0.033) & (0.018) & (0.197) & (0.132) & (0.132) & (0.057) \\
5     & 42.95 & 34.16 & \textbf{34.31} & \textbf{34.31} & 34.38 & \textbf{34.31} & \textbf{34.31} & 49.64 \\
      & (0.047) & (0.017) & (0.018) & (0.018) & (0.019) & (0.018) & (0.018) & (0.057) \\
10    & 38.06 & \textbf{34.01} & 34.29 & 34.30 & 34.31 & 34.31 & 34.31 & 49.58 \\
      & (0.030) & (0.018) & (0.018) & (0.018) & (0.018) & (0.018) & (0.018) & (0.058) \\
1000  & 26.60 & 26.02 & 23.03 & 23.03 & 22.81 & 22.23 & \textbf{22.05} & 39.80 \\
      & (0.010) & (0.013) & (0.018) & (0.018) & (0.016) & (0.016) & (0.015) & (0.036) \\
\bottomrule
\end{tabular}
\caption{Mean and standard deviation (in parentheses) of misclassification rates for various classification methods, computed over 500 random sample splits of the Diabetes dataset and evaluated under different values of $\varepsilon$ for $\varepsilon$-LDP, with bold values indicating the best performance.}
\label{tab:real_diabetes}
\end{table}

\begin{table}[h!]
\centering
\small
\begin{tabular}{lcccccccc}
\toprule
$\varepsilon$ & Weak & MR & MA & MRMA & Voting & Averaging & All data & Histogram \\
\midrule
0.1 & 49.71 & 47.86 & 46.43 & \textbf{44.15} & 58.23 & 48.06 & 48.25 & 49.98 \\
 & (0.082) & (0.031) & (0.152) & (0.145) & (0.230) & (0.155) & (0.155) & (0.025) \\
0.5 & 49.02 & 37.73 & 35.61 & \textbf{34.43} & 52.22 & 43.50 & 43.63 & 49.98 \\
 & (0.062) & (0.027) & (0.061) & (0.014) & (0.207) & (0.141) & (0.142) & (0.025) \\
1 & 48.04 & 34.64 & 34.48 & \textbf{34.43} & 45.30 & 38.73 & 39.08 & 49.98 \\
 & (0.053) & (0.015) & (0.020) & (0.014) & (0.167) & (0.107) & (0.109) & (0.025) \\
5 & 41.19 & \textbf{34.36} & 34.39 & 34.42 & 34.44 & 34.43 & 34.43 & 49.97 \\
 & (0.040) & (0.017) & (0.015) & (0.015) & (0.014) & (0.014) & (0.014) & (0.025) \\
10 & 36.61 & \textbf{34.01} & 34.18 & 34.16 & 34.43 & 34.43 & 34.43 & 49.96 \\
 & (0.020) & (0.018) & (0.018) & (0.018) & (0.014) & (0.014) & (0.014) & (0.025) \\
1000 & 32.43 & 31.62 & 29.69 & 29.69 & 30.21 & 29.51 & \textbf{29.48} & 47.61 \\
 & (0.009) & (0.013) & (0.017) & (0.017) & (0.014) & (0.014) & (0.014) & (0.024) \\
\bottomrule
\end{tabular}
\caption{Mean and standard deviation (in parentheses) of misclassification rates for various classification methods, computed over 500 random sample splits of the Employee dataset and evaluated under different values of $\varepsilon$ for $\varepsilon$-LDP, with bold values indicating the best performance.}
\label{tab:real_employee}
\end{table}

\subsection{Functional Predictors}\label{ssec:real_fun}
We evaluate our LDP methods on two real-world classification tasks involving functional predictors. The first task uses wearable-sensor-based physical activity data to predict cardiovascular health status, while the second revisits the phoneme classification problem using frequency-domain speech features.

The first task draws on data from the National Health and Nutrition Examination Survey 2013--2014 cycle. We consider adult female participants aged 20 years or older, for whom both laboratory-measured high-density lipoprotein (HDL) cholesterol and minute-level physical activity recordings are available. HDL is known to be protective against cardiovascular disease, and its levels are influenced by physical activity. According to clinical guidelines, HDL is considered healthy for women when its concentration is at least 50 mg/dL \citep{turk2009cardiac}. Using this threshold, we define a binary classification task to distinguish participants with healthy versus unhealthy HDL levels. As the functional predictor, we use hour-level Monitor-Independent Movement Summary (MIMS) values averaged over seven days and projected onto a set of 6 cubic B-spline basis functions. The resulting dataset includes 2410 samples, with 1558 labeled as healthy and 852 as unhealthy. To evaluate classifier performance under LDP, we adopt a randomized split: 410 samples are held out as a test set, and the remaining 2000 are divided into a training set of size \(N_0 = 500\) and an evaluation set of size \(N_1 = 1500\). For each repetition, we subsample \(n_0 = n_1 = 50\) observations to train and evaluate classifiers across \(B = 30\) randomized subsets. This entire process is repeated 500 times.

\begin{figure}[h!]
\centering
\subfig{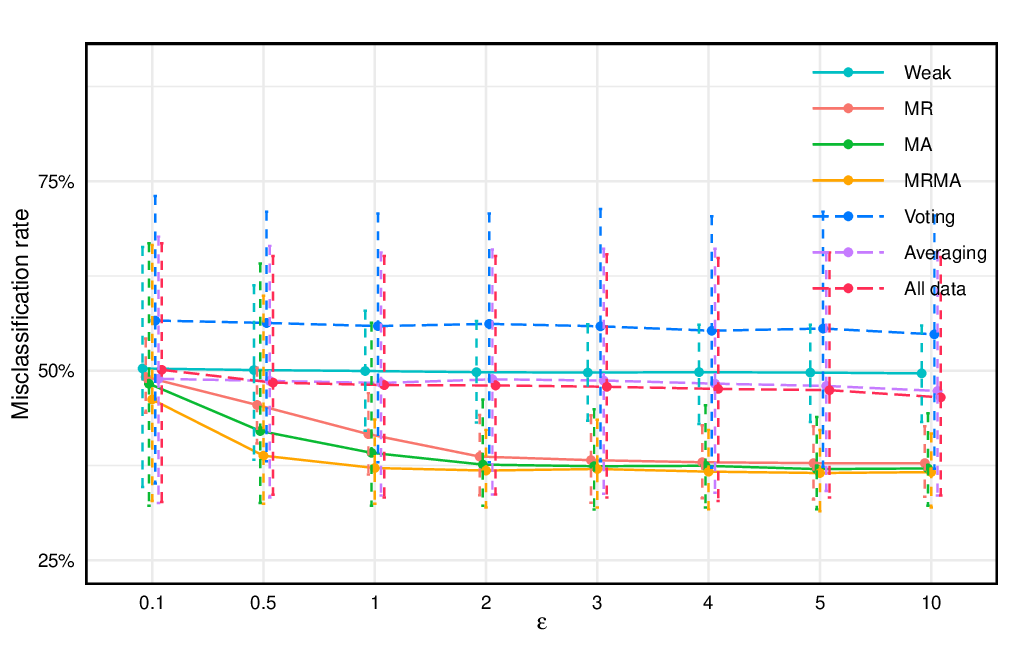}{a}{Logistic}{0.49}\hfill
\subfig{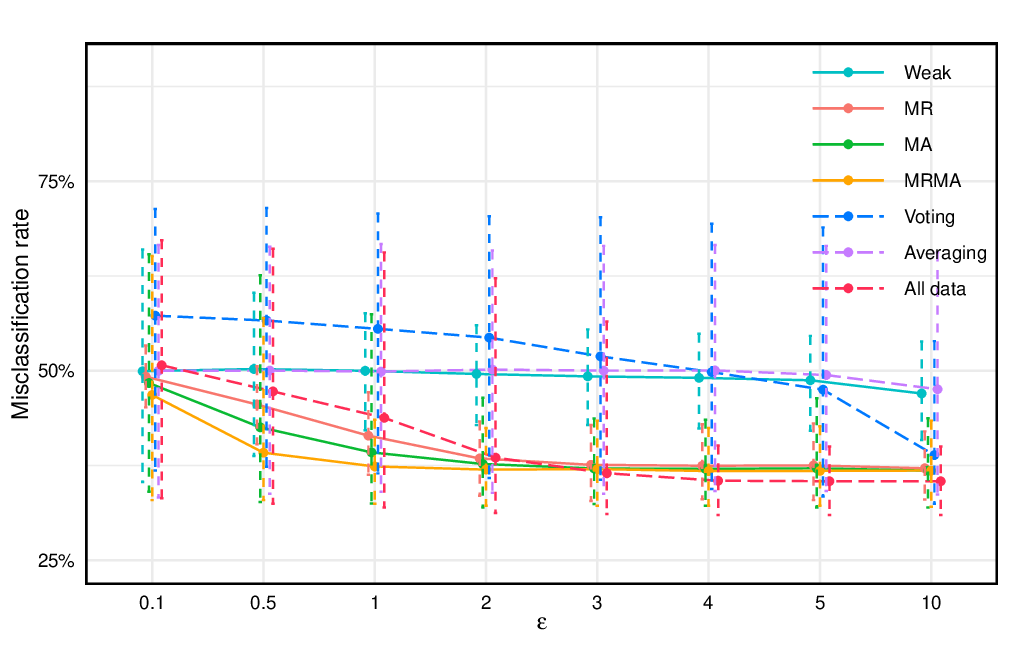}{b}{SVM}{0.49}
\subfig{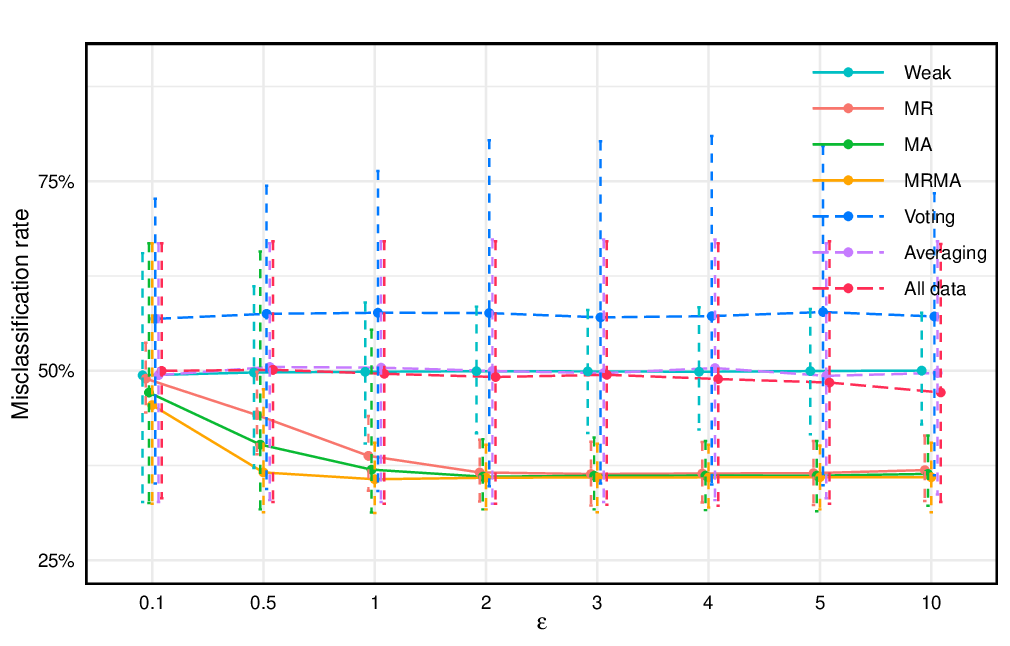}{c}{DWD}{0.49}\hfill
\subfig{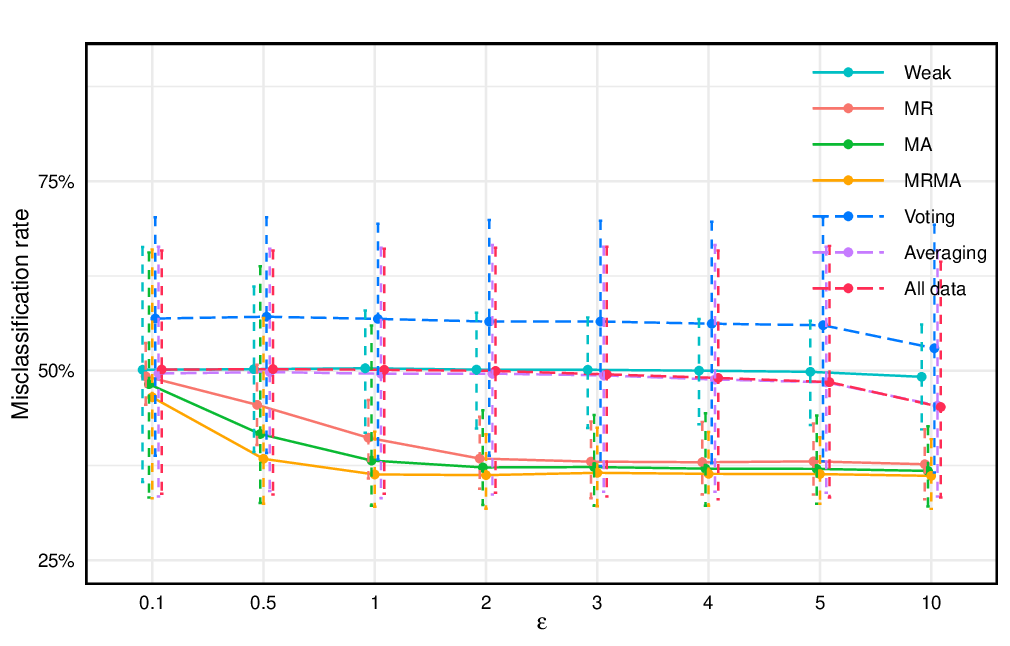}{d}{CG}{0.49}
\caption{The misclassification rates of classifiers with a single server under $\varepsilon$-LDP on the physical activity dataset.}
\label{fig:real_mims}
\end{figure}

The second task uses the Phonemes dataset derived from the TIMIT Acoustic-Phonetic Continuous Speech Corpus \citep{garofolo1993timit}. Speech frames in this dataset are extracted from the continuous speech of 50 male speakers, and log-periodograms are constructed from recordings sampled at 256 equispaced frequencies and projected onto a set of 4 Fourier basis functions. The classification task focuses on distinguishing between the phonemes ``sh'' and ``iy'', represented by 1163 and 872 curves, respectively. The log-periodogram functions are approximated using Fourier basis expansion. As in the MIMS example, we adopt a randomized split of 535 test samples, a training set of \(N_0 = 300\), and an evaluation set of \(N_1 = 1200\), with \(n_0 = n_1 = 50\), and \(B = 24\), repeating the entire process 500 times.

\begin{figure}[h!]
\centering
\subfig{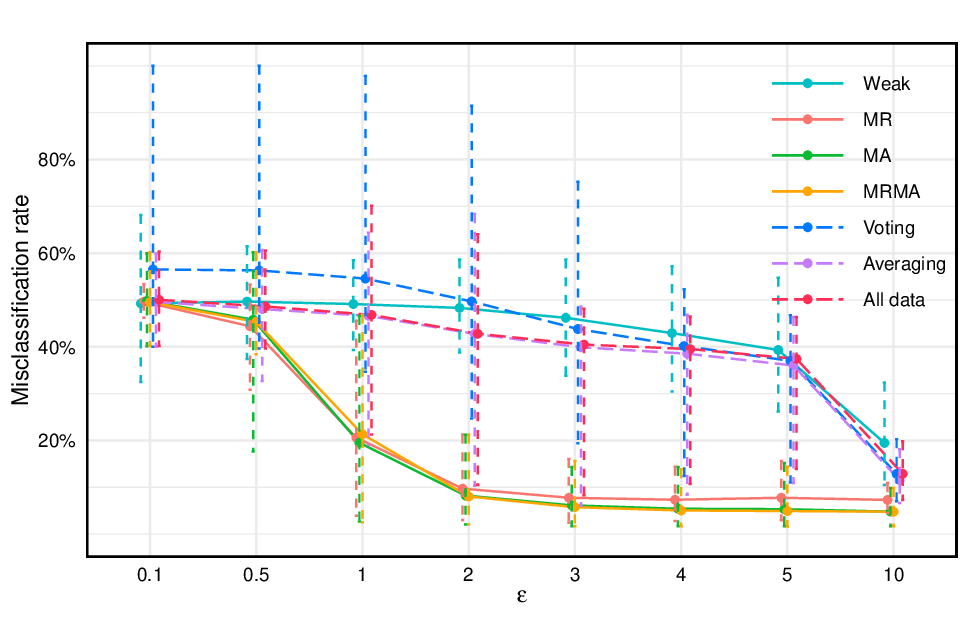}{a}{Logistic}{0.49}\hfill
\subfig{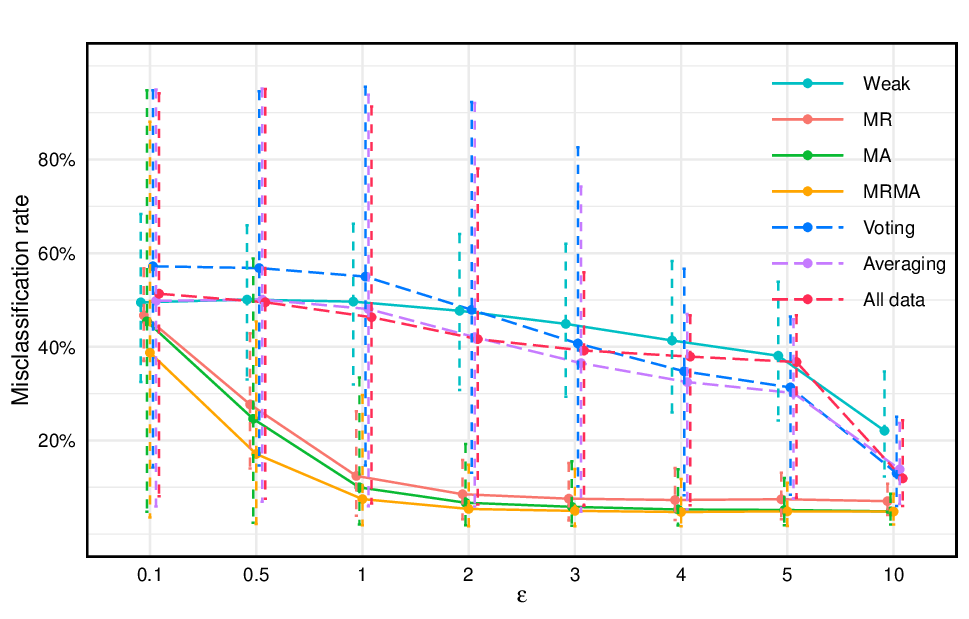}{b}{SVM}{0.49}
\subfig{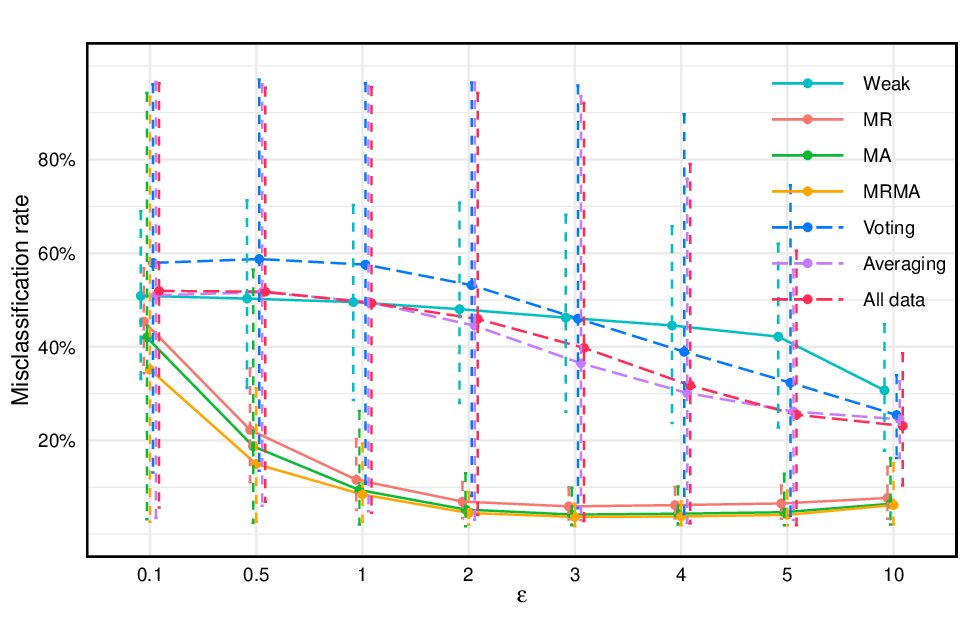}{c}{DWD}{0.49}\hfill
\subfig{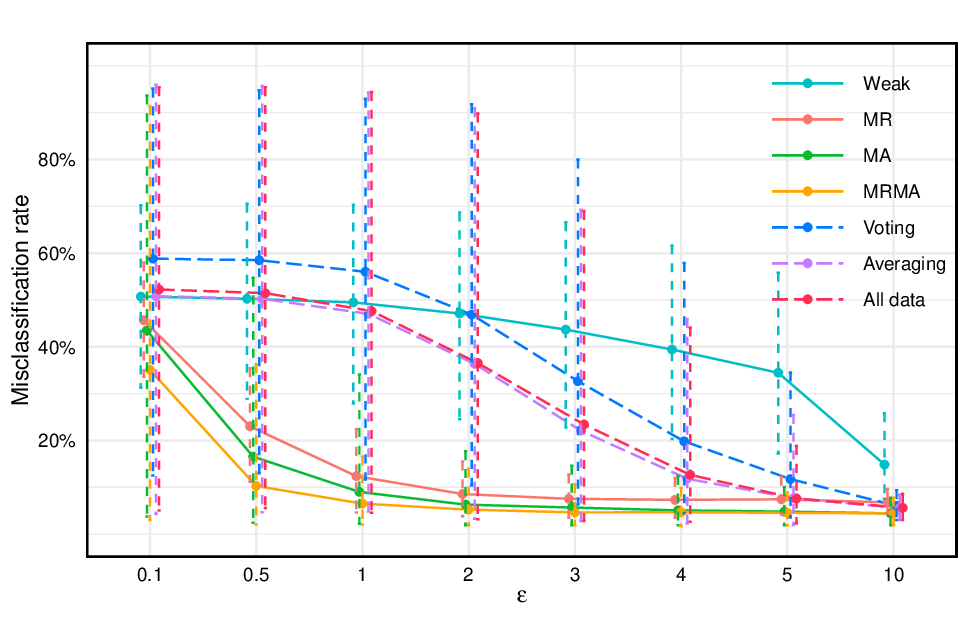}{d}{CG}{0.49}
\caption{The misclassification rates of classifiers with a single server under $\varepsilon$-LDP on the Phonemes dataset.}
\label{fig:real_speech}
\end{figure}

Visualizations of the functional predictors in these two applications are provided in Figures~\ref{fig:realline_mims}-\ref{fig:realline_speech} in the Appendix. These figures display both the raw functional observations and the curves reconstructed after finite basis projection and transformation. Despite some smoothing and loss of fine-grained detail, the figures demonstrate that the key temporal patterns and population-level group differences are well preserved. Figures~\ref{fig:real_mims}-\ref{fig:real_speech} report their misclassification rates. The parameter \( r_0 \) is set to 0.7 for the physical activity dataset and 0.9 for the Phonemes dataset. In both examples, classifiers based on model reversal and model averaging achieve substantial performance gains compared to other classifiers.

\section{Conclusion}\label{sec:conc}
We proposed a novel framework for classification under LDP, addressing key challenges in data utility and model performance. By reinterpreting private learning as a transfer learning problem, we introduced a utility evaluation mechanism based on privatized binary feedback, enabling accurate assessment of classifier performance without accessing unperturbed data. 
Building on this, we developed two techniques: model reversal, which rescues underperforming classifiers by inverting decision boundaries, and model averaging, which combines multiple weak classifiers using utility-based weights. Together, these methods form a robust strategy, MRMA, that significantly enhances classification accuracy under LDP constraints. 
We provided theoretical excess risk bounds demonstrating the effectiveness of these techniques, and showed that the framework generalizes naturally to functional data settings through basis projection. Empirical evaluations on both simulated and real-world datasets confirmed substantial improvements over baseline approaches. 
Future directions include further theoretical analysis of specific classifiers under particular LDP mechanisms, as well as extending the framework to multi-class classification, regression, and other structured data modalities under LDP constraints.





\appendix
\renewcommand{\thefigure}{A.\arabic{figure}}
\renewcommand{\thetable}{A.\arabic{table}}
\setcounter{figure}{0}
\setcounter{table}{0}

\section{Additional Results}
In this section, we provide additional results from our experiments to demonstrate the impacts of encoding and perturbation, compare various sample allocation strategies, and confirm that the performance of weak classifiers improves slightly with an increase in training sample size.

\subsection{Encoding and Perturbation}\label{appsim:single}
In this section, we discuss the effects of dimensionality reduction, rescaling, and perturbation introduced in Section \ref{sec:encoding}, on the misclassification rates of different types of classifiers.

\begin{table}[b!]
\begin{center}
\begin{tabular}{c|c|ccc|ccc|ccc}
& & & $d=4$ & & & $d=5$ & & & $d=6$ & \\
& INI & Coefs & Tanh & MAN & Coefs & Tanh & MAN & Coefs & Tanh & MAN \\ 
\hline \\
Logistic & - & 17.81 & 12.32 & 11.68 & 20.66 & 13.55 & 12.81 & 23.19 & 15.21 & 13.86 \\ 
SVM & - & 10.85 & 11.20 & 11.03 & 11.29 & 11.67 & 11.27 & 11.69 & 12.07 & 11.48 \\ 
DWD & 11.01 & 11.02 & 10.64 & 10.61 & 11.01 & 10.60 & 10.58 & 11.01 & 10.61 & 10.59 \\ 
CG & 15.42 & 11.02 & 11.47 & 11.31 & 11.34 & 11.86 & 11.43 & 11.65 & 12.00 & 11.64 \\
\end{tabular}
\caption{The misclassification rate of classifiers based on actual data (INI), coefficients obtained after dimensionality reduction (Coefs), and coefficients rescaled by either the Tanh transformation (Tanh) or Max-Abs normalization (MAN).}\label{tbl:encoding}
\end{center}
\end{table}

To investigate the impacts of dimensionality reduction and rescaling, we generate data based on the model described in Section \ref{sec:exp}, and the sample sizes of the training and testing data sets are $50$ and $500$, respectively. During the dimensionality reduction, cubic B-spline with equidistant knots are employed, and we apply the methods for varying numbers of basis functions, \(d = 4, 5, 6\). Table \ref{tbl:encoding} showcases the misclassification rates of classifiers based on the actual data (INI), coefficients obtained after dimensionality reduction (Coefs), and coefficients rescaled by either the Tanh transformation (Tanh) or Max-Abs normalization (MAN). This is based on the results from $500$ repeated experiments.

Table \ref{tbl:encoding} shows that there are slight variations in the performance of different types of classifiers. Overall, classifiers with $d=4,5,6$ exhibit comparable results, and the impact of dimension reduction and rescaling on classifier performance is small. Also, the performances based on Tanh and Max-Abs transformations are very similar. Furthermore, the CG classifier based on actual data and the logistic classifier based on coefficients obtained after dimensionality reduction perform relatively poorer. This may be related to the inherent characteristics of the classifiers, which is beyond the scope of this paper. Our primary focus is on the changes in classifier performance before and after consider LDP and the improvements brought about by different techniques.

\begin{figure}[b!]
\centering
\subfig{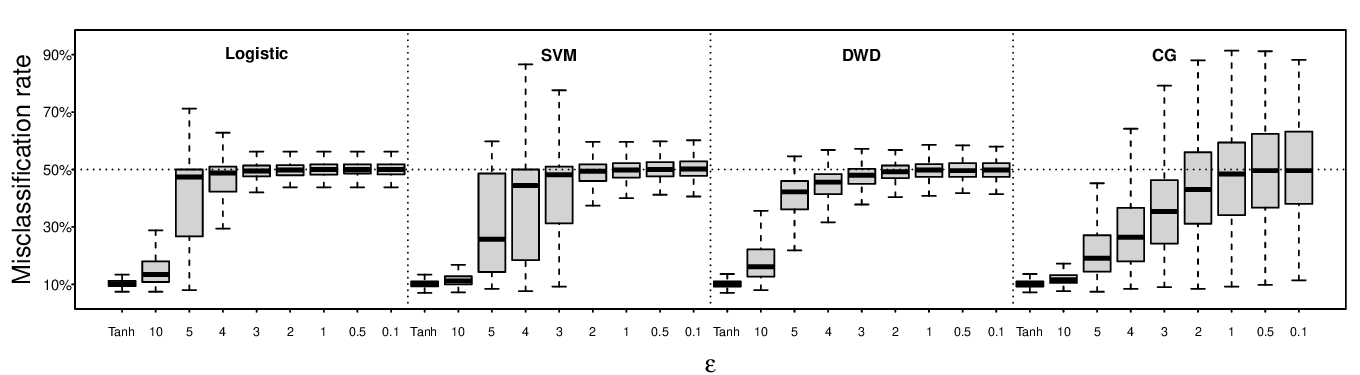}{a}{Tanh transformation}{1}
\subfig{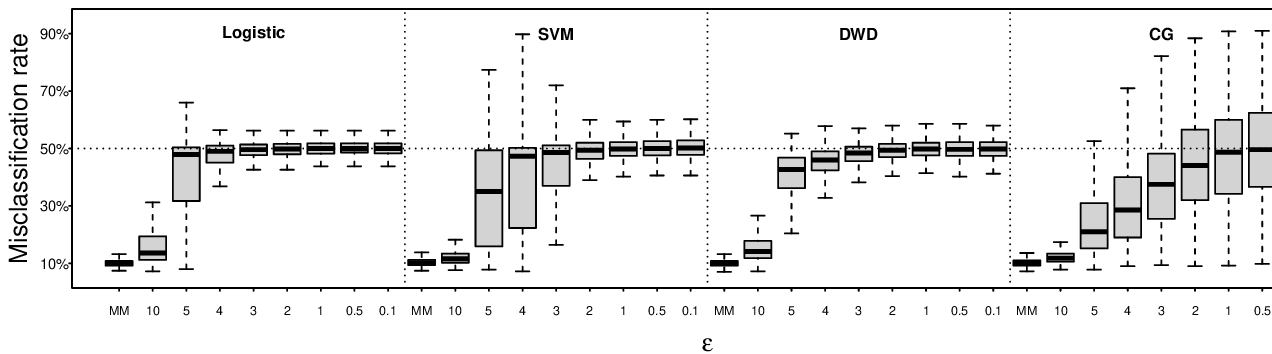}{b}{Max-Abs transformation}{1}
\caption{The boxplot of the misclassification rates of classifiers with Tanh and Max-Abs transformations under $\varepsilon$-LDP.}
\label{fig:pert}
\end{figure}

To further explore the impacts of perturbation, we generate data in accordance with the model in Section \ref{sec:exp}, and both the training and testing data set sample sizes are $500$. Table \ref{tbl:encoding} demonstrates that classifiers with \(d=4,5,6\) exhibit comparable results.Therefore, we employ $d=4$ cubic B-spline with equidistant knots to introduce as little noise as possible. 
We consider eight distinct privacy budget levels, specifically, \( \epsilon = 0.1,0.5,1,\dots,5,10 \). Figure \ref{fig:pert} displays the misclassification rates of the four types of classifiers, based on both un-perturbed rescaled coefficients (i.e., ``Tanh", ``MAN") and perturbed rescaled coefficients across various levels of \( \epsilon \).

\begin{figure}[b!]
\centering
\subfig{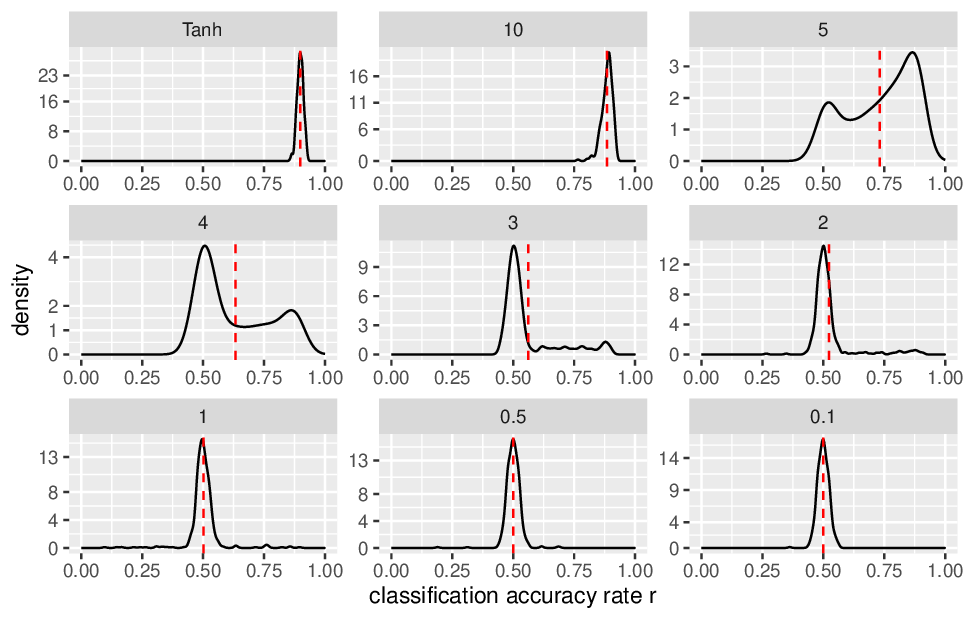}{a}{Logistic}{0.49}\hfill
\subfig{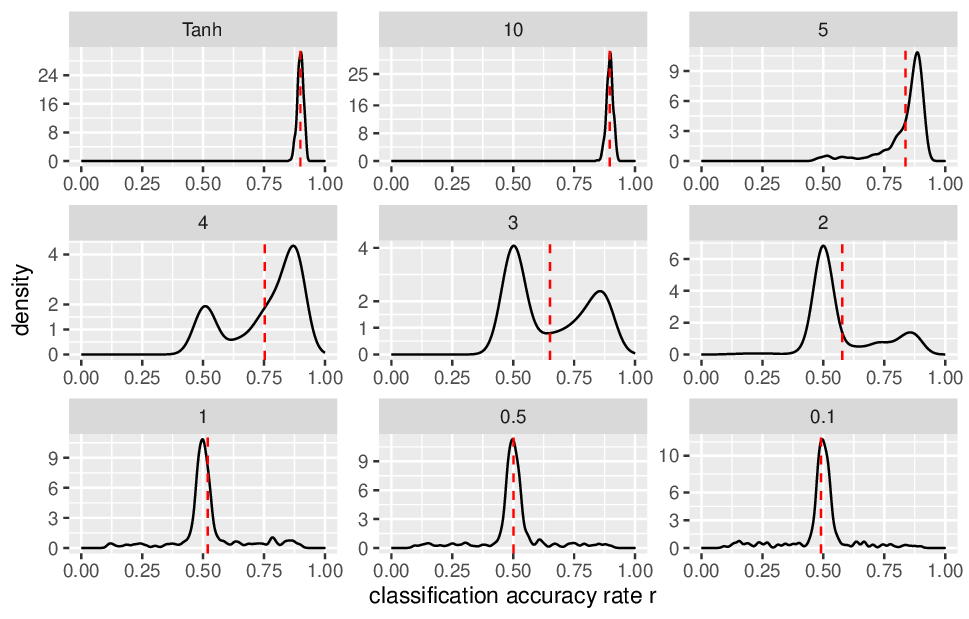}{b}{SVM}{0.49}
\subfig{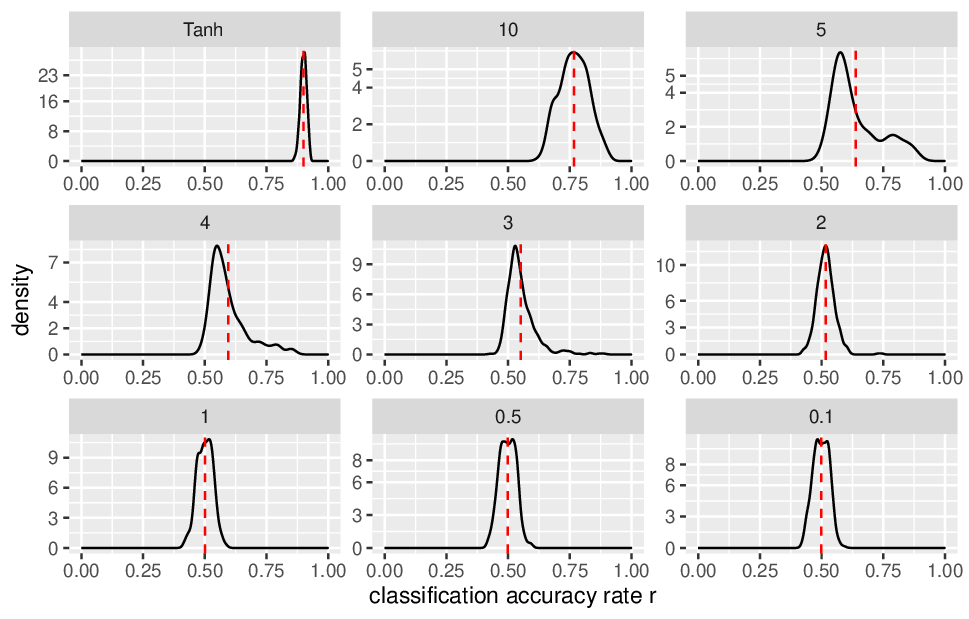}{c}{DWD}{0.49}\hfill
\subfig{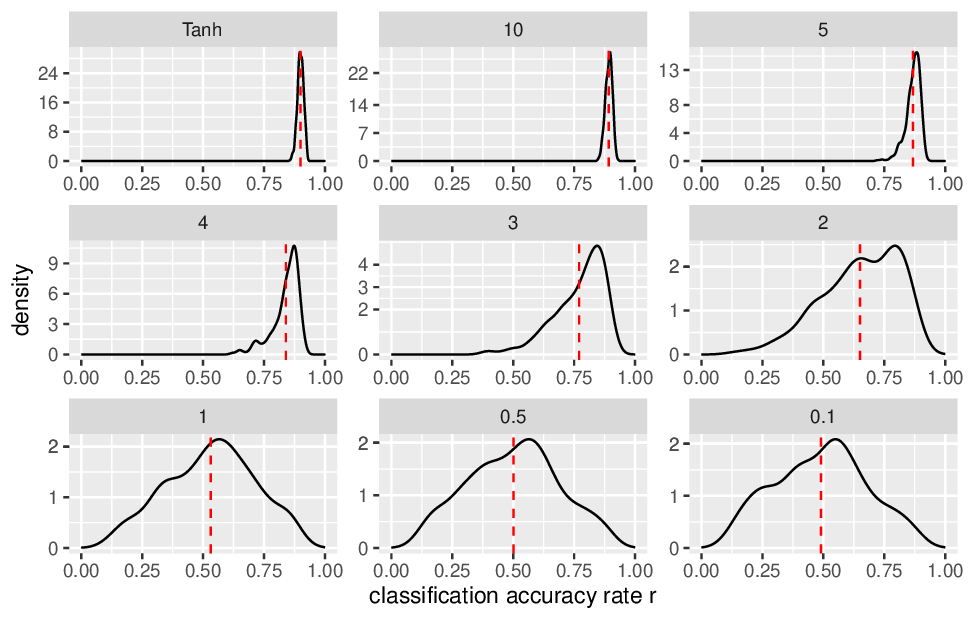}{d}{CG}{0.49}
\caption{The empirical distributions of classification accuracy for classifiers trained with $N=3000$ clients and perturbed by Tanh transformation, where \( \epsilon = 0.1,0.5,1,\dots,5,10 \), and the red dashed line represents the mean accuracy rate.}
\label{fig:density}
\end{figure}

In Figure \ref{fig:pert}, with decreasing $\varepsilon$, the misclassification rates of different types of classifiers tend to 50\%. At the same time, there is a variation in the performance of classifiers under the influence of noise. Notably, the misclassification rate of CG classifier remains highly volatile at smaller $\varepsilon$ values, ranging between 10\% and 90\%, instead of converging around 50\% as other methods do. This indicates that the performance can be enhanced through model reversal and model averaging. Additionally, it can be observed from Figure \ref{fig:pert} that classifiers based on both Tanh and Max-Abs transformations exhibit very similar performances. Only results based on the Tanh transformation will be presented hereafter.

In Theorem~\ref{thm:ermr}, we quantify how model reversal improves the excess risk bound of a classifier. In Figure~\ref{fig:density}, we show the empirical distributions of classification accuracy for the different classifiers under our experimental settings.
Figure \ref{fig:density} illustrates that different classifiers exhibit varying degrees of sensitivity to noise. Among them, the DWD classifier is the most affected, followed by logistic and SVM classifiers. In contrast, the CG classifier is relatively less impacted by noise interference. As \(\varepsilon\) decreases, which corresponds to increased noise, the classification accuracy distributions for logistic, SVM, and DWD classifiers gradually converge around $0.5$. However, the distribution for the CG classifier remains more dispersed, indicating greater potential for improvement through model reversal.

\subsection{Sample Size Balancing}\label{appsim:paras}
In this section, we assess the performance of classifiers over varying values of the parameters \(N, N_0, N_1, n_0, n_1, B\). We generate data based on the settings presented in Section \ref{sec:exp}. Specifically, we consider five distinct combinations of these parameter values, which are listed in Table \ref{tbl:cases}. Figures \ref{fig:ss} and \ref{fig:ssrev} display the misclassification rates of model-averaged classifiers using the cutoff value $r_0=0.6$, with and without model reversal, respectively, for the different parameter combinations.

\begin{table}[h!]
\begin{center}
\begin{tabular}{ccccccc}
Case & $N$ & $N_0$ & $N_1$ & $n_0$ & $n_1$ & $B$  \\ 
\hline \\
1 & 3000 & 500 & 2500 & 100 & 100 & 25 \\
2 & 5500 & 500 & 5000 & 100 & 100 & 50 \\
3 & 5500 & 500 & 5000 & 50 & 100 & 50 \\
4 & 3000 & 500 & 2500 & 100 & 50 & 50 \\
5 & 3000 & 500 & 2500 & 50 & 50 & 50 \\
\end{tabular}
\caption{Five different combinations of parameters \(N, N_0, N_1, n_0, n_1, B\)}\label{tbl:cases}
\end{center}
\end{table}

\begin{figure}[t!]
\centering
\subfig{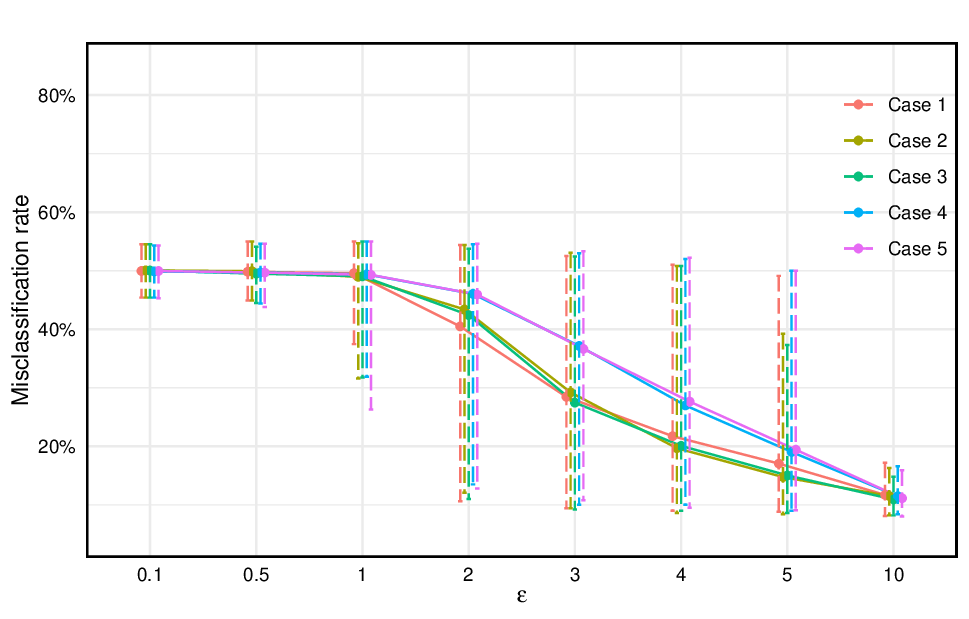}{a}{Logistic}{0.49}\hfill
\subfig{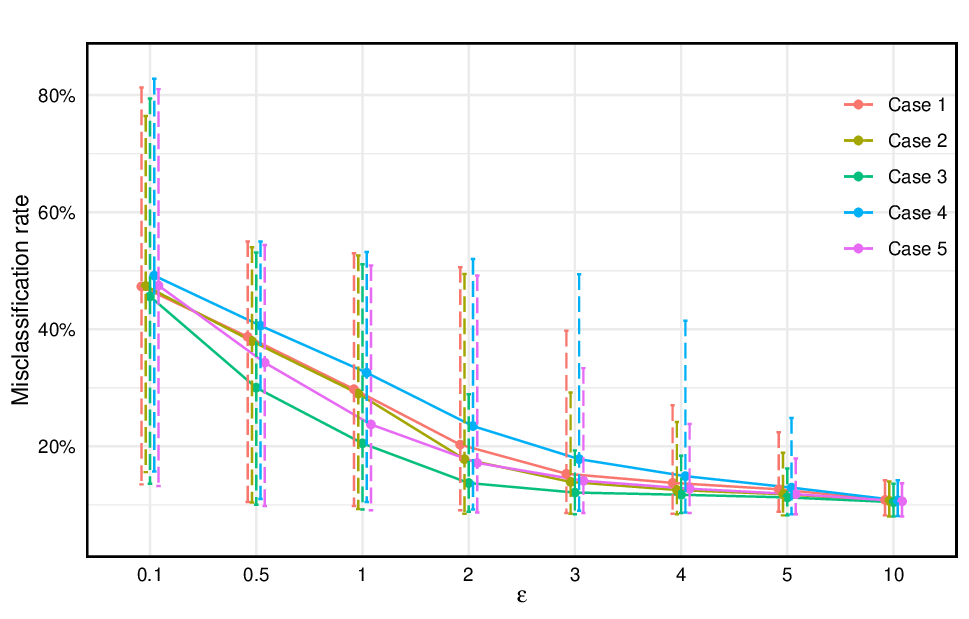}{b}{SVM}{0.49}
\subfig{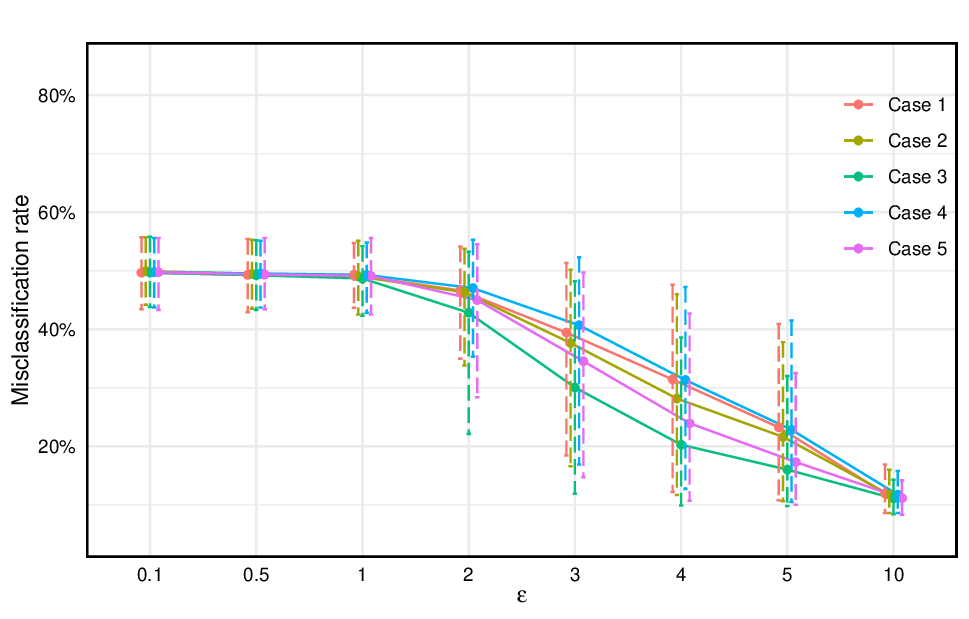}{c}{DWD}{0.49}\hfill
\subfig{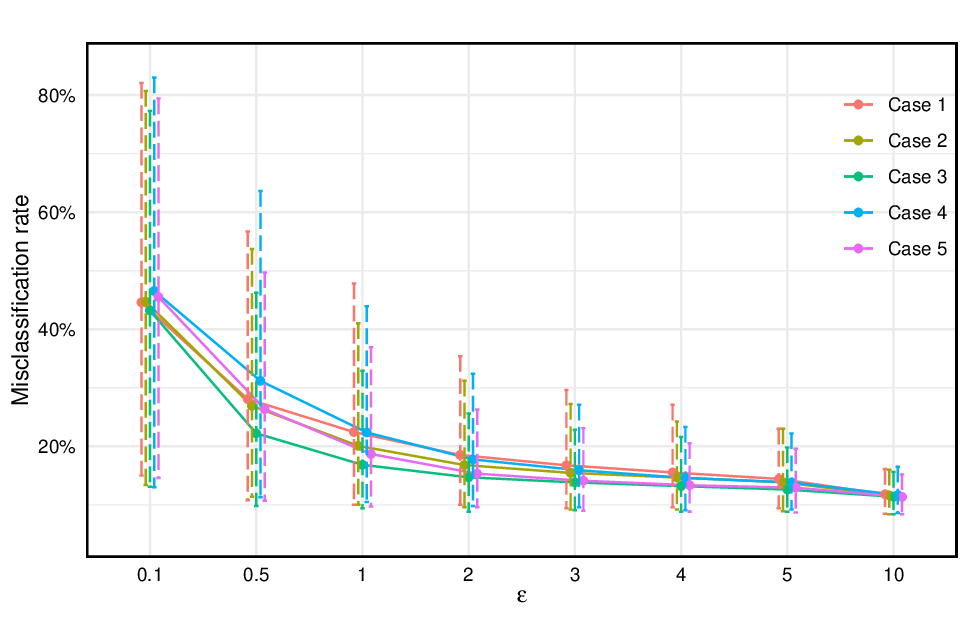}{d}{CG}{0.49}
\caption{The misclassification rates classifiers with model averaging under $\varepsilon$-LDP.}
\label{fig:ss}
\end{figure}

In Figure \ref{fig:ss}, the differences arising from various parameter combinations on different types of classifiers manifest across distinct intervals of $\varepsilon$. Overall, Case 2 performs slightly better than Case 1, indicating that increasing the number of weak classifiers \( B \) and the sample size of the evaluation set \( N_1 \) can enhance the performance. The preference for a relatively smaller \( n_0 \) is evident as Case 3 outperforms Case 2, and Case 5 outperforms Case 4. Additionally, Case 2 is distinctly superior to Case 4, and Case 3 is markedly better than Case 5, indicating a preference for a larger \( n_1 \). Importantly, by comparing Case 1 and Case 4, we discern that, given \( N_0 \) and \( N_1 \), the allocation should favor a relatively smaller \( B \) and a larger \( n_1 \). 
This result is consistent with expectations, as a larger \( n_1 \) aids in more accurately estimating the performance of individual weak classifiers, thereby facilitating a more accurate model averaging.

Compared to Figure \ref{fig:ss}, Figure \ref{fig:ssrev} incorporates model reversal prior to model averaging. It can be observed that the results of the various parameter combinations in Figure \ref{fig:ssrev} are consistent with those in Figure \ref{fig:ss}. Notably, the introduction of model reversal has significantly enhanced the performance of both SVM and CG classifiers, particularly under more stringent privacy protection levels, characterized by smaller intervals of \( \varepsilon \).

\begin{figure}[h!]
\centering
\subfig{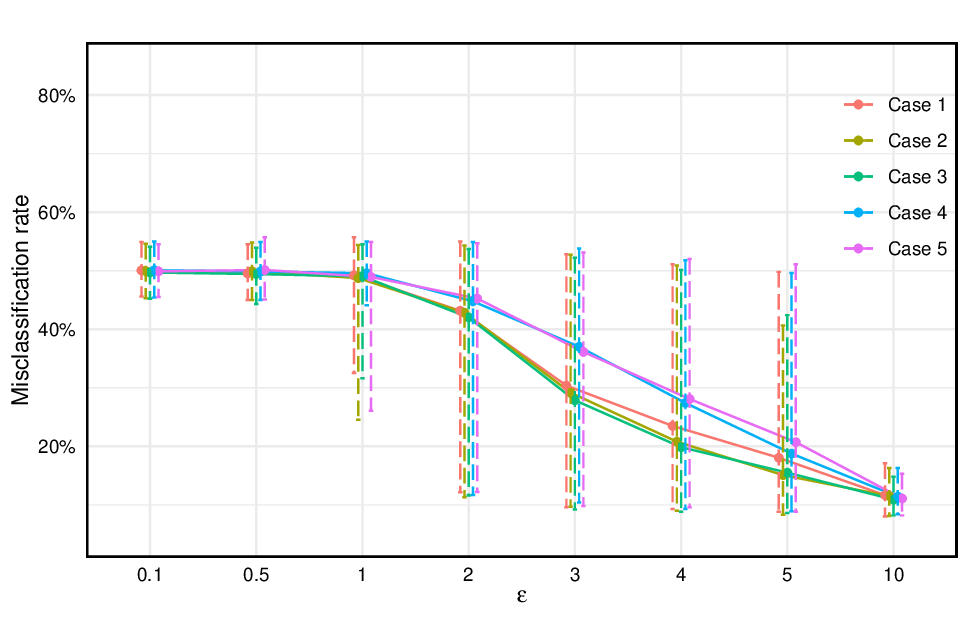}{a}{Logistic}{0.49}\hfill
\subfig{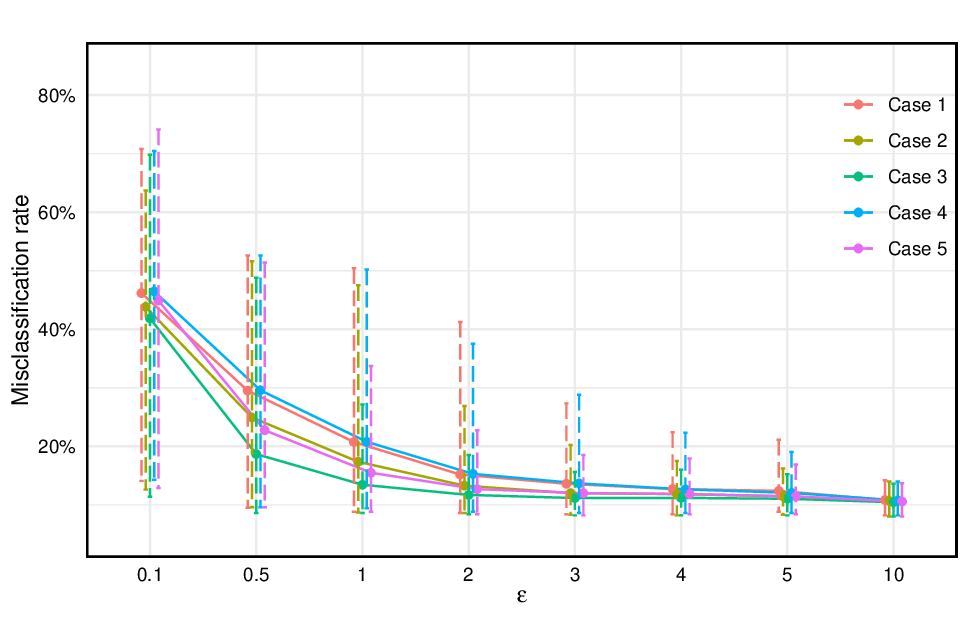}{b}{SVM}{0.49}
\subfig{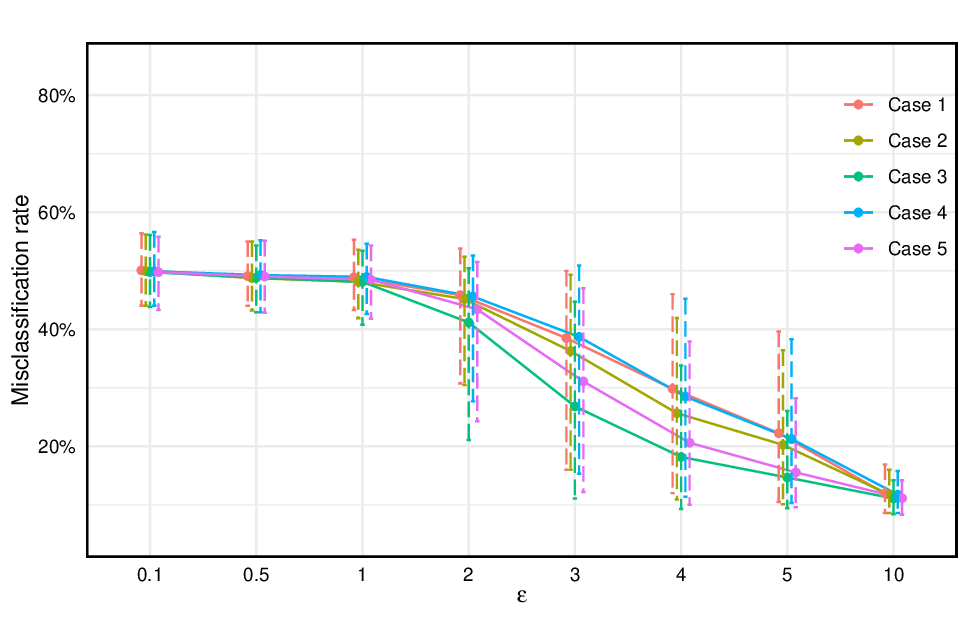}{c}{DWD}{0.49}\hfill
\subfig{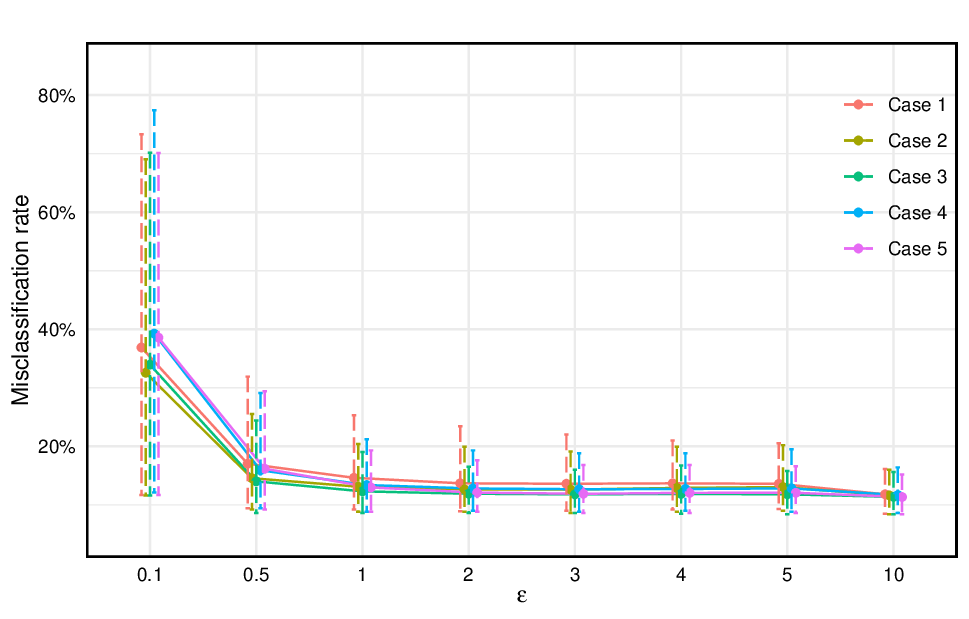}{d}{CG}{0.49}
\caption{The misclassification rates of classifiers with model reversal and model averaging under $\varepsilon$-LDP.}
\label{fig:ssrev}
\end{figure}

\subsection{Sample Size per Weak Classifier}\label{appsim:size}
To demonstrate how the number of clients used to train weak classifiers influences the efficacy of MRMA in improving classifier performance, we consider four distinct settings, which are listed in Table \ref{tbl:cases2}. Figure \ref{fig:size} shows how the misclassification rates of the classifier CG vary with different parameter settings.

We can see from the results of cases 5, 6, and 7 in Figure \ref{fig:size} that using more clients to train weak classifiers improves their performance when \(\varepsilon\) is large, implying low noise levels. However, when \(\varepsilon\) is small, which is our primary concern, increasing the sample size has little effect on the weak classifiers. In contrast, the classifiers based on MRMA consistently achieve significant improvements under different cases. Moreover, when we compare cases 7 and 8 in Figure \ref{fig:size}, where the training and evaluation data sets have different proportions, we find that allocating more data for evaluation, i.e., for MRMA, leads to better results than using it to enhance weak classifiers.

\begin{table}[h!]
\begin{center}
\begin{tabular}{ccccccc}
Case & $N$ & $N_0$ & $N_1$ & $n_0$ & $n_1$ & $B$  \\ 
\hline \\
5 & 3000 & 500 & 2500 & 50 & 50 & 50 \\
6 & 5000 & 2500 & 2500 & 250 & 50 & 50 \\
7 & 7500 & 5000 & 2500 & 500 & 50 & 50 \\
8 & 7500 & 2500 & 5000 & 250 & 100 & 50 \\
\end{tabular}
\caption{Four different combinations of parameters \(N, N_0, N_1, n_0, n_1, B\)}\label{tbl:cases2}
\end{center}
\end{table}

\begin{figure}[h!]
\centering
\includegraphics[width=0.7\textwidth]{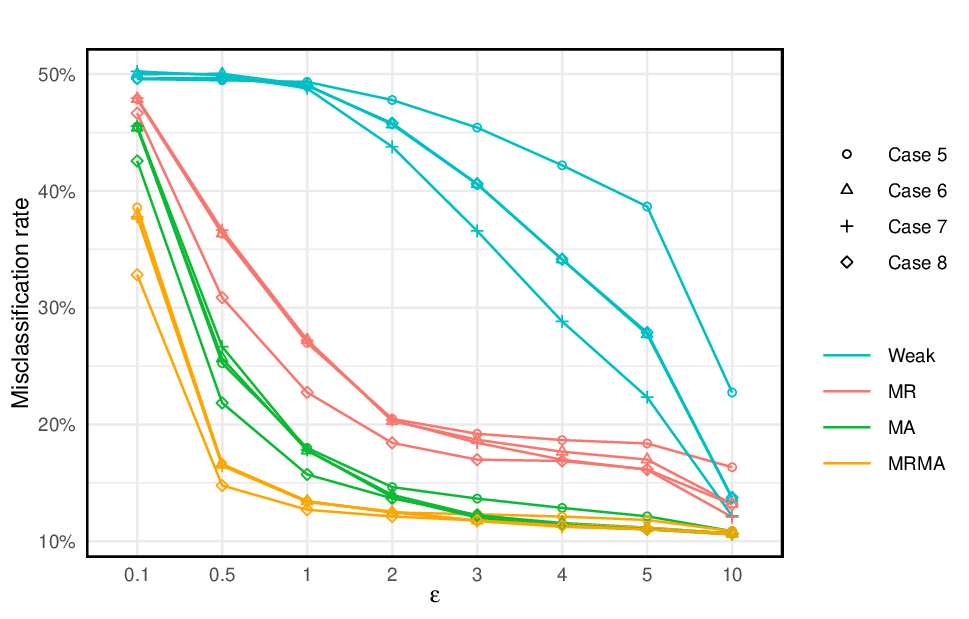}
\caption{The misclassification rates of classifier CG with a single server under $\varepsilon$-LDP.}
\label{fig:size}
\end{figure}

\subsection{Numerical Illustration of the Total Variation Distance}
\label{appsim:dtv_visualization}

The excess risk bounds in Theorems~\ref{thm:erini}-\ref{thm:erma} involve the term $d_{\mathrm{TV}}(P_Z,P_Z^{\ve})$, which measures the discrepancy between the marginal distributions of $Z$ and its privatized version $Z^{\vez}$. 
Closed-form expressions for this total variation distance are generally unavailable, even under simple additive-noise mechanisms. To provide intuition regarding the behavior of this term, we present a simple numerical illustration.

We consider $Z \sim \mathrm{Unif}([-1,1]^d)$ and apply the Laplace mechanism \citep{dwork2006calibrating}, which adds i.i.d.\ noise $\delta_k \sim \mathrm{Laplace}(0,\lambda_z)$ to each coordinate, where $\lambda_z = d(b-a)/\varepsilon_z$ with $[a,b]=[-1,1]$. The distribution of $Z^{\vez} = Z + \delta$ typically has no simple closed form, so we approximate $d_{\mathrm{TV}}(P_Z,P_Z^{\ve})$ via Monte Carlo: the empirical distributions are estimated on a common grid, and the total variation distance is computed as
\[
d_{\mathrm{TV}}(P,Q) 
\approx 
\frac{1}{2}\sum_{k} |\hat{P}_k - \hat{Q}_k|.
\]

Figure~\ref{fig:dtv_laplace} reports the estimated values for several dimensions $d \in \{1,2,5,10\}$ and a range of privacy levels $\varepsilon_z$. The results reveal two intuitive patterns. 
First, as $\varepsilon_z \to 0$ (i.e., stronger privacy and heavier noise), the discrepancy between $P_Z$ and $P_Z^{\ve}$ increases, and the total variation distance approaches~1. This corresponds to the case where the privatized distribution becomes almost disjoint from the original distribution.  
Second, for fixed~$\varepsilon_z$, the distance increases with the dimension~$d$, because the per-coordinate privacy budget effectively decreases as $d$ grows. This is reflected in the scale parameter $\lambda_z = d(b-a)/\varepsilon_z$, which increases with $d$ when the total privacy level $\varepsilon_z$ is fixed, and thus induces larger Laplace noise in each coordinate.

\begin{figure}[h!]
    \centering
    \includegraphics[width=0.75\linewidth]{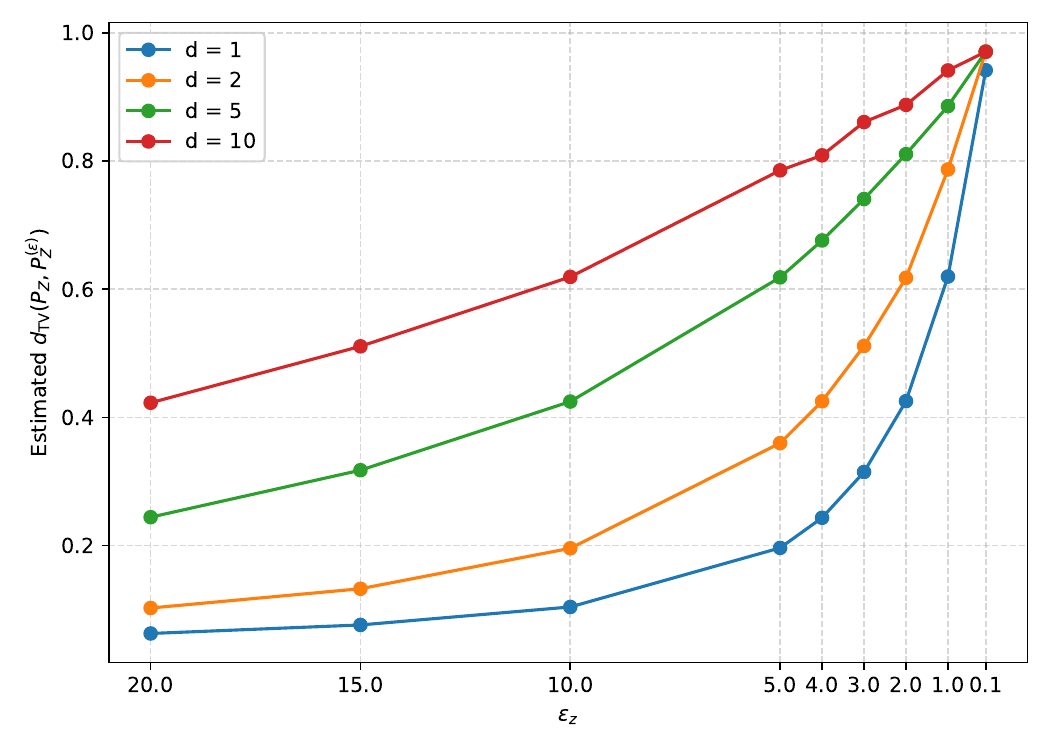}
    \caption{Estimated total variation distance $d_{\mathrm{TV}}(P_Z,P_Z^{\ve})$ under the Laplace mechanism for $Z \sim \mathrm{Unif}([-1,1]^d)$, across dimensions $d \in \{1,2,5,10\}$ and a range of privacy levels~$\varepsilon_z$.}
    \label{fig:dtv_laplace}
\end{figure}

\subsection{Visualization of Functional Predictors}\label{app:real_vis}
Figures~\ref{fig:realline_mims}-\ref{fig:realline_speech} provide visualizations of the functional predictors used in the physical activity and speech datasets, respectively, under different stages of processing: raw functional data, basis projection, and two types of transformations. 
In Figure~\ref{fig:realline_mims}, we display the group mean MIMS curves for participants with healthy (HDL~$\geq$~50) and unhealthy (HDL~$<$~50) cholesterol levels, along with 10 randomly selected individual curves shown in gray. We limit the number of individual curves to reduce visual clutter, as the raw MIMS functions exhibit substantial variability across participants. 
Figure~\ref{fig:realline_speech} presents 200 randomly selected individual log-periodogram curves from the speech dataset along with the group means. Despite some smoothing and loss of fine-scale detail, these visualizations demonstrate that both temporal fluctuations and population-level differences are preserved throughout the functional representation pipeline.

\begin{figure}[h!]
\centering
\subfig{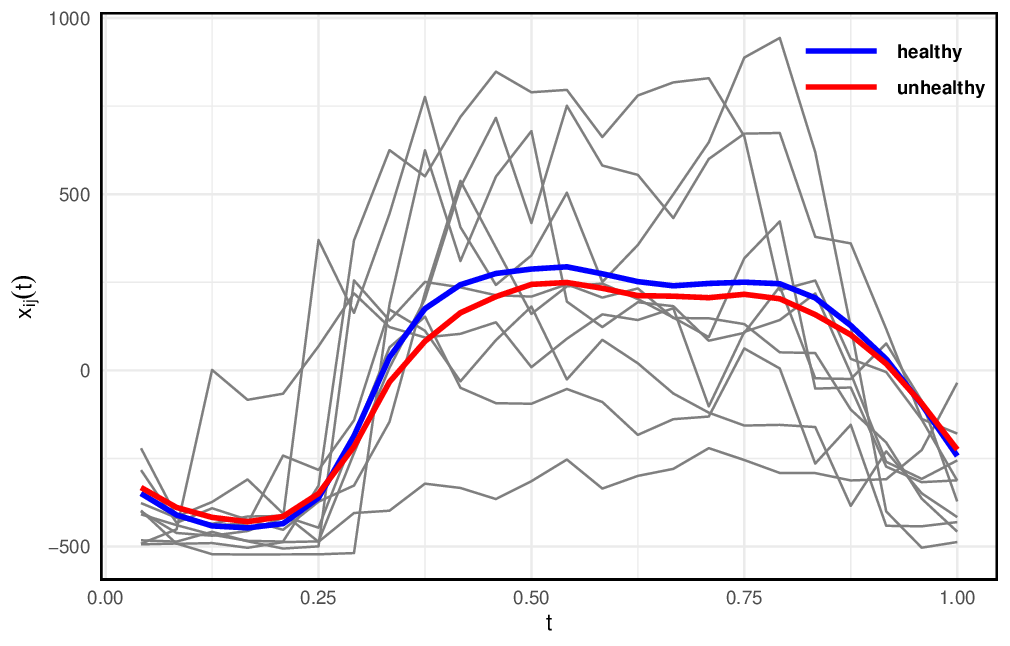}{a}{Raw functional observations}{0.49}\hfill
\subfig{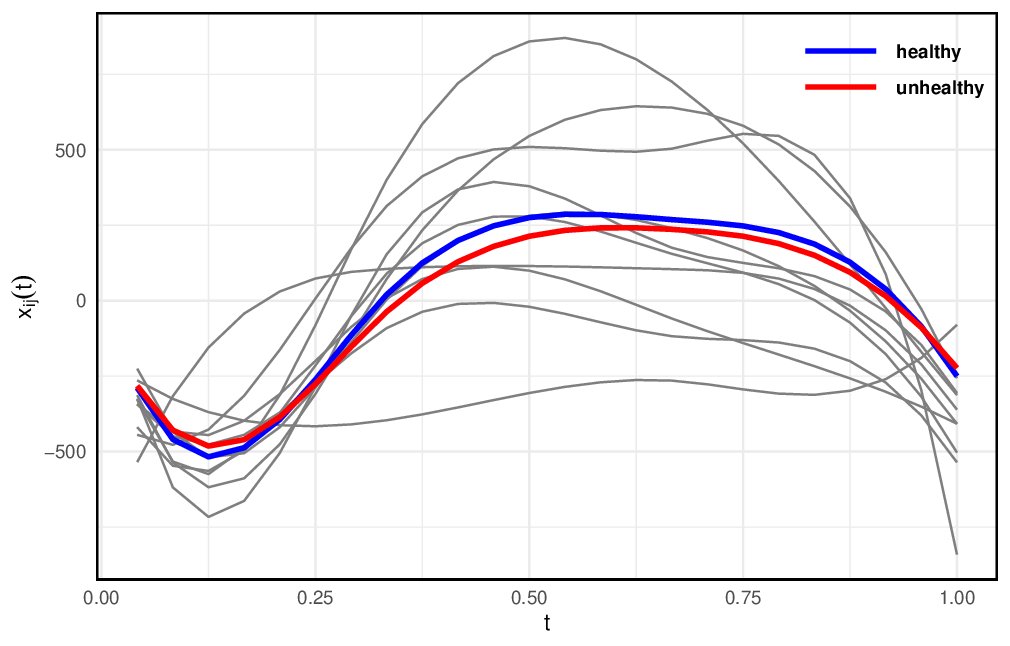}{b}{Reconstruction after projection}{0.49}
\subfig{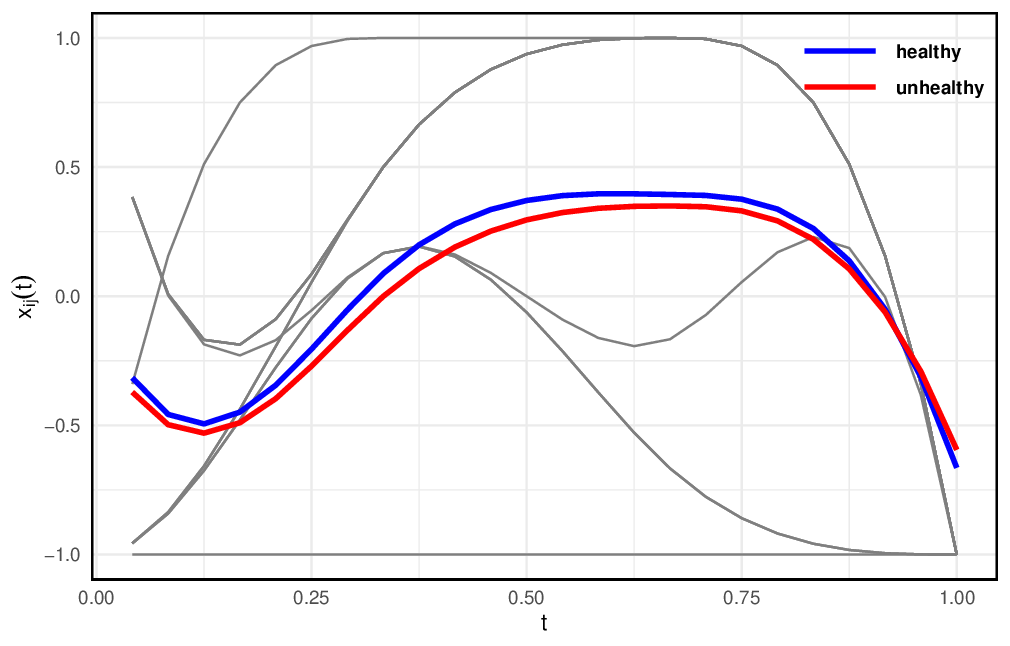}{d}{Reconstruction after projection and Tanh transformation}{0.485}
\subfig{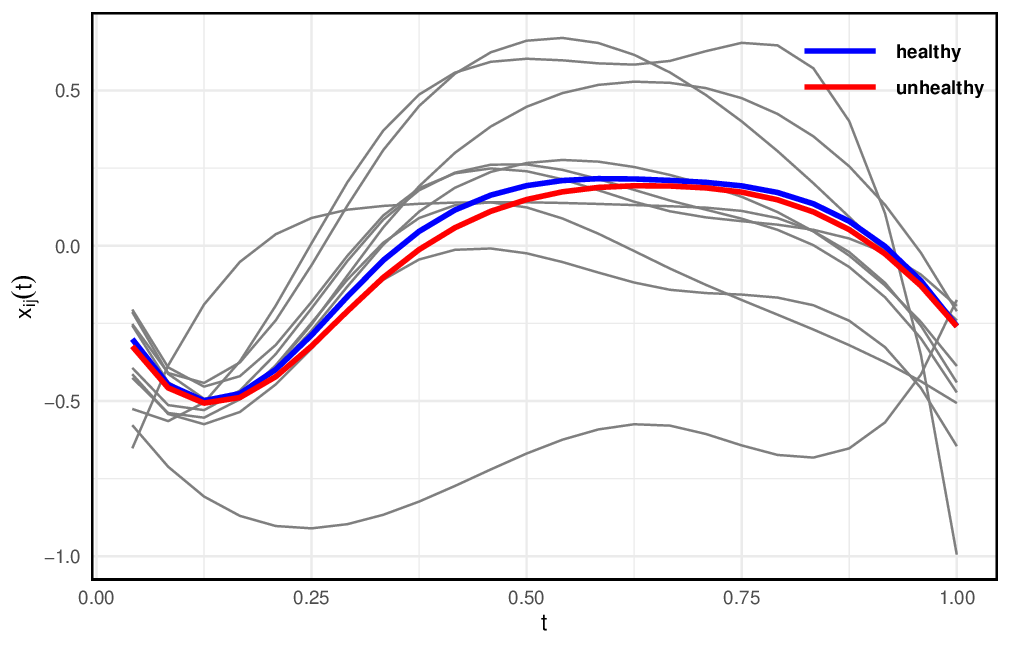}{c}{Reconstruction after projection and Max-Abs transformation}{0.485}\hfill
\caption{Visualization of the mean MIMS curves for the ``healthy'' (HDL~$\geq$~50) and ``unhealthy'' (HDL~$<$~50) groups in the physical activity dataset, with 10 randomly selected individual curves overlaid in gray. Each subplot shows the curves under different stages of processing the raw functional data.}
\label{fig:realline_mims}
\end{figure}

\begin{figure}[h!]
\centering
\subfig{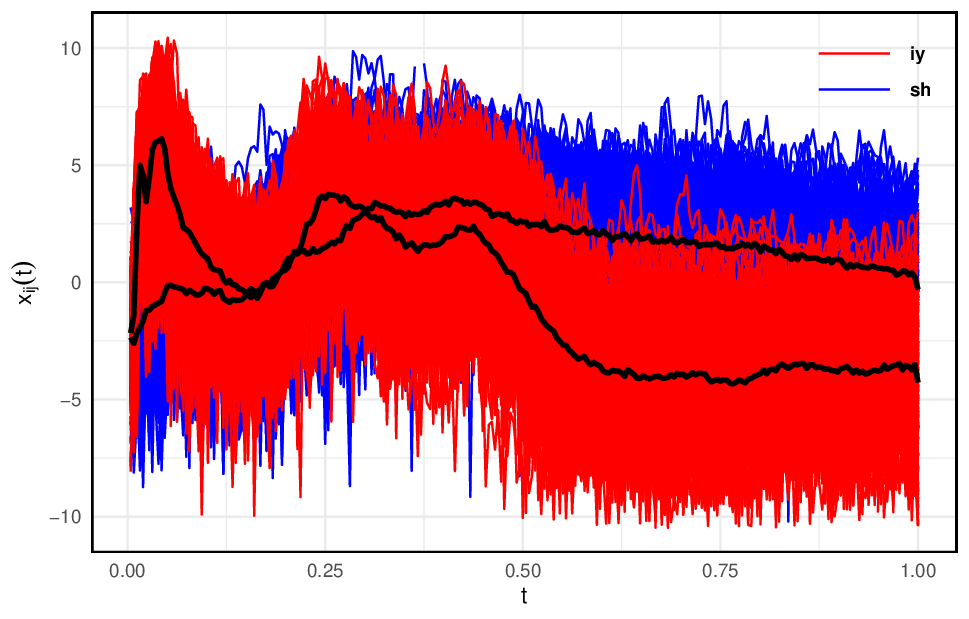}{a}{Raw functional observations}{0.49}\hfill
\subfig{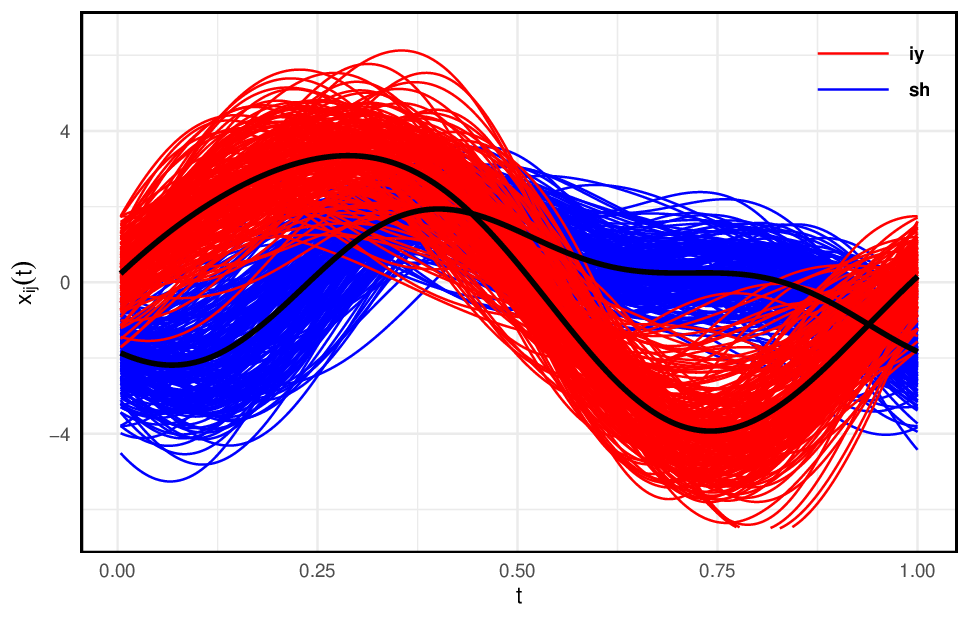}{b}{Reconstruction after projection}{0.49}
\subfig{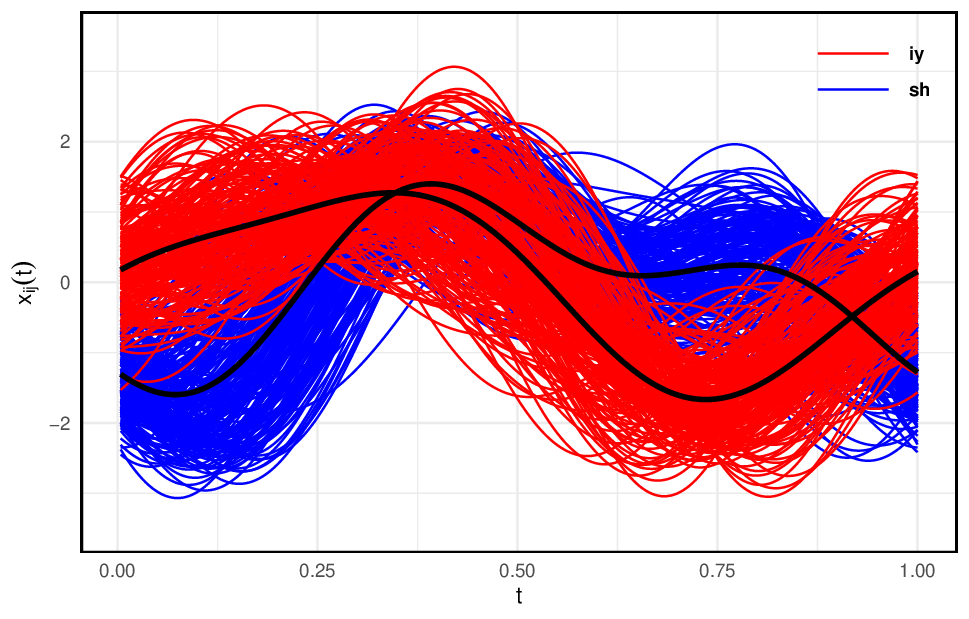}{d}{Reconstruction after projection and Tanh transformation}{0.485}
\subfig{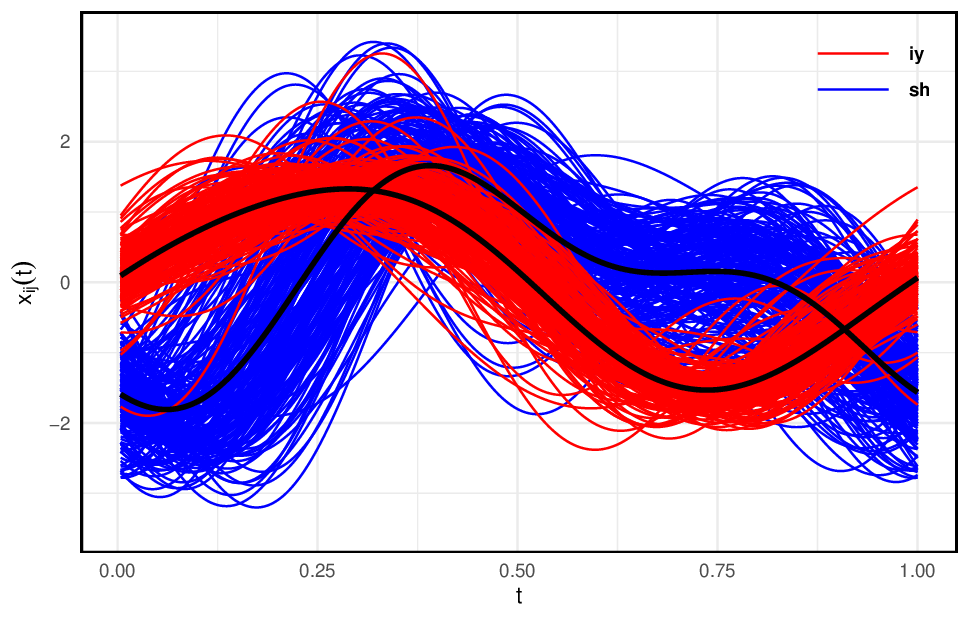}{c}{Reconstruction after projection and Max-Abs transformation}{0.485}\hfill
\caption{Visualization of 200 randomly selected functional observations from groups ``sh" and ``iy" in the phonemes data set. The two black lines in each figure represent the mean functions of the two groups of curves.}
\label{fig:realline_speech}
\end{figure}

\section{Theoretical Analysis}
\begin{proof}[of Proposition \ref{prop:gfun}]
We begin by noting that $g^{\ve}(z_0) = \eta(z_0)\eta^{\ve}(z_0) + (1 - \eta(z_0))(1 - \eta^{\ve}(z_0))$, which yields the first expression in Equation~\eqref{eq:gfun}. 

To compute $\eta^{\ve}(z_0) - \frac{1}{2}$, we have that
\begin{equation*}
    \eta^{\ve}(z_0) = \frac{P^{\ve}(Z^{\vez} = z_0, Y^{\vey} = 1)}{P_{Z}^{\ve}(Z^{\vez} = z_0)},
\end{equation*}
where
\begin{align*}
    P_{Z}^{\ve}(Z^{\vez} = z_0) &= \sE_{P_Z} \left(D^{\vez}(\delta_z^{\vez} = z_0 - z)\right), \\
    P^{\ve}(Z^{\vez} = z_0, Y^{\vey} = 1) &= \sE_{P_Z} \left(D^{\vez}(\delta_z^{\vez} = z_0 - z) \left(q^{\vey}\eta(z) + (1 - q^{\vey})(1 - \eta(z))\right)\right).
\end{align*}
After some algebraic manipulation, we arrive at the expression given in Equation~\eqref{eq:etainfo}.
\end{proof}

\begin{proof}[of Theorem \ref{thm:ldp2}]
    Let $u=(x,y)$ represent the possible observation of a client, and $w=r'$ represent the possible report of a client. Let $f_{w|u}(w|u)$ be the conditional density of $w$ given $u$. Then for any $w\in\{0,1\}$, we have
\begin{align*}
    \frac{f_{w|u}(w|u_1)}{f_{w|u}(w|u_2)}=\frac{P(r'|r_1)}{P(r'|r_2)}\le e^{\varepsilon_v}.
\end{align*}
Thus $\gM$ satisfies $\varepsilon_v$-local differential privacy. Note that $r^{(b)}$ is the classification accuracy of the classifier $f^{(b)}$, then
\begin{equation*}
    \sE(\hat r^{(b)})=qr^{(b)}+(1-q)(1-r^{(b)}),
\end{equation*}
where $q=e^{\varepsilon_v}/(1+e^{\varepsilon_v})$. Thus we have $\sE(\tilde r^{(b)})=r^{(b)}$, and $\text{Var}(\tilde r^{(b)})=\text{Var}(\hat r^{(b)})/(2q-1)^2 \le ((e^{\varepsilon_v}+1)/(e^{\varepsilon_v}-1))^2/(4n_1^{(b)})$. This concludes the proof.
\end{proof}

\begin{proof}[of Theorem \ref{thm:erini}] 
By definition, we have
\begin{align*}
L_P(f^{(b)}, h^*) 
=\, & \sE_P\{\ell(f^{(b)}) - \ell(h^*)\} \\
=\, & \{\sE_P \ell(f^{(b)}) - \sE_{P^{\ve}} \ell(f^{(b)})\} 
+ \sE_{P^{\ve}}\{\ell(f^{(b)}) - \ell(h^{*\ve})\} 
+ \{\sE_{P^{\ve}} \ell(h^{*\ve}) - \sE_P \ell(h^*)\} \\
\triangleq\, & I_1 + I_2 + I_3.
\end{align*}

To analyze \( I_1 = \sE_P \ell(f^{(b)}) - \sE_{P^{\ve}} \ell(f^{(b)}) \), let \( p(z) \) and \( p^{\ve}(z) \) denote the density functions of \( Z \) and \( Z^{\ve} \), respectively. Define
\[
p_0(z) = \min\{p(z), p^{\ve}(z)\}, \quad
p_1(z) = p(z) - p_0(z), \quad
p^{\ve}_1(z) = p^{\ve}(z) - p_0(z).
\]
Then,
\begin{align*}
I_1 =\, & \sE_{P_Z}\left\{\eta(z)\ell(f^{(b)} \mid y = 1) + (1 - \eta(z))\ell(f^{(b)} \mid y = -1)\right\} \\
& - \sE_{P^{\vez}_Z}\left\{\eta^{\ve}(z)\ell(f^{(b)} \mid y = 1) + (1 - \eta^{\ve}(z))\ell(f^{(b)} \mid y = -1)\right\} \\
=\, & \int p_0(z) \left\{\eta(z) - \eta^{\ve}(z)\right\} 
\left\{\ell(f^{(b)} \mid y = 1) - \ell(f^{(b)} \mid y = -1)\right\} dz \\
& + \int p_1(z) \ell(f^{(b)}) dz - \int p^{\ve}_1(z) \ell(f^{(b)}) dz \\
\le\, & \sE_{P_Z}\left\{|\eta(z) - \eta^{\ve}(z)|\right\} + \int p_1(z) dz.
\end{align*}

Next, consider \( I_3 = \sE_{P^{\ve}} \ell(h^{*\ve}) - \sE_P \ell(h^*) \). Note that
\begin{align*}
\sE_P \ell(h^*) 
=\, & \sE_{P_Z}\left\{I\{\eta(z) \ge 1/2\} P_{Y|Z}(y = -1) + I\{\eta(z) < 1/2\} P_{Y|Z}(y = 1)\right\} \\
=\, & \sE_{P_Z}\left\{I\{\eta(z) \ge 1/2\} (1 - \eta(z)) + I\{\eta(z) < 1/2\} \eta(z)\right\} \\
=\, & \sE_{P_Z}\left\{\frac{1}{2} - \left|\eta(z) - \frac{1}{2}\right|\right\},
\end{align*}
and similarly,
\[
\sE_{P^{\ve}} \ell(h^{*\ve}) = \sE_{P^{\ve}_Z}\left\{\frac{1}{2} - \left|\eta^{\ve}(z) - \frac{1}{2}\right|\right\}.
\]
Thus,
\begin{align*}
I_3 =\, & \sE_{P_Z}\left\{|\eta(z) - \tfrac{1}{2}|\right\} - \sE_{P^{\ve}_Z}\left\{|\eta^{\ve}(z) - \tfrac{1}{2}|\right\} \\
=\, & \int p_0(z) \left\{|\eta(z) - \tfrac{1}{2}| - |\eta^{\ve}(z) - \tfrac{1}{2}|\right\} dz \\
& + \int p_1(z) |\eta(z) - \tfrac{1}{2}| dz - \int p^{\ve}_1(z) |\eta^{\ve}(z) - \tfrac{1}{2}| dz \\
\le\, & \sE_{P_Z}\left\{|\eta(z) - \tfrac{1}{2}| - |\eta^{\ve}(z) - \tfrac{1}{2}|\right\} + \frac{1}{2} \int p_1(z) dz.
\end{align*}

By the definition of total variation distance, we have 
\[
d_{\text{TV}}(P_Z, P^{\ve}_Z) = 2 \int p_1(z) dz.
\]
Combining the bounds for \( I_1 \) and \( I_3 \), we obtain the result stated in the theorem.
\end{proof}

\begin{proof}[of Theorem \ref{thm:ermr}]
Consider the noised dataset $\{(z_i^{\vez}, y_i^{\vey})\}_{i=1}^{n_0}$ and the classifier $f^{(b)}$ trained on it. We define a pseudo-noised dataset $\{(z_i^{\vez}, \tilde y_i^{\vey})\}_{i=1}^{n_0}$ where $\tilde y_i^{\vey} = -y_i^{\vey}$, and correspondingly define $\tilde f^{(b)} = -f^{(b)}$. Then $\tilde f^{(b)}$ can be interpreted as the classifier trained on this pseudo-noised dataset.

Let $\tilde P^{\ve}$ denote the joint distribution of $(Z^{\vez}, \tilde Y^{\vey})$. Then the marginal distribution of $Z^{\vez}$ under $\tilde P^{\ve}$ remains the same as that under $P^{\ve}$, i.e., $\tilde P^{\ve}_Z = P^{\ve}_Z$. The conditional distribution satisfies $\tilde P^{\ve}_{Y|Z} = 1 - P^{\ve}_{Y|Z}$, and the corresponding Bayes classifier becomes $\tilde h^{*\ve}(z) = \text{sign}\{\tilde\eta^{\ve}(z) \ge 1/2\} = -h^{*\ve}(z)$, where $\tilde\eta^{\ve}(z)=1-\eta^{\ve}(z)$.

Applying Theorem~\ref{thm:erini}, we obtain
\begin{align*}
L_P(-f^{(b)}, h^*) = L_P(\tilde f^{(b)}, h^*) \le\ & L_{\tilde P^{\ve}}(\tilde f^{(b)}, \tilde h^{*\ve}) 
+ \frac{3}{4} d_{\text{TV}}(P_Z, \tilde P^{\ve}_Z) 
+ \sE_{P_Z} \left(|\eta(z) - \tilde \eta^{\ve}(z)|\right) \\
& + \sE_{P_Z} \left(|\eta(z) - \tfrac{1}{2}| - |\tilde \eta^{\ve}(z) - \tfrac{1}{2}|\right) \\
=\ & L_{P^{\ve}}(f^{(b)}, h^{*\ve}) 
+ \frac{3}{4} d_{\text{TV}}(P_Z, P^{\ve}_Z) 
+ \sE_{P_Z} \left(|\eta(z) - \tilde \eta^{\ve}(z)|\right) \\
& + \sE_{P_Z} \left(|\eta(z) - \tfrac{1}{2}| - |\eta^{\ve}(z) - \tfrac{1}{2}|\right).
\end{align*}

Furthermore, we note that
\[
\min\left\{
\sE_{P_Z} \left(|\eta(z) - \tilde \eta^{\ve}(z)|\right), 
\sE_{P_Z} \left(|\eta(z) - \eta^{\ve}(z)|\right)
\right\} 
= \sE_{P_Z} \left(|\eta(z) - \tfrac{1}{2}| - |\eta^{\ve}(z) - \tfrac{1}{2}|\right).
\]
This implies
\begin{align*}
\min\left\{L_P(f^{(b)}, h^*), L_P(-f^{(b)}, h^*)\right\} \le\ &
L_{P^{\ve}}(f^{(b)}, h^{*\ve}) 
+ \frac{3}{4} d_{\text{TV}}(P_Z, P^{\ve}_Z) \\
& + 2 \sE_{P_Z} \left(|\eta(z) - \tfrac{1}{2}| - |\eta^{\ve}(z) - \tfrac{1}{2}|\right).
\end{align*}

Let \( r^{(b)} \) denote the true classification accuracy of the Bayes classifier \( h^{*\ve} \) under the distribution \( P \), and let \( \tilde r^{(b)} \) denote the estimated classification accuracy of \( f^{(b)} \) based on \( n_1 \) evaluation samples. Define the selected classifier \( f^{*(b)} \in \{f^{(b)}, -f^{(b)}\} \) as the one with higher estimated accuracy: specifically, \( f^{*(b)} = f^{(b)} \) if \( \hat{c}_{n_1} \ge 1/2 \), and \( f^{*(b)} = -f^{(b)} \) otherwise. 
Then, the probability that model reversal correctly selects the better classifier satisfies
\begin{align*}
\sP\left[L_P(f^{*(b)}, h^*) = \min\left\{L_P(f^{(b)}, h^*), L_P(-f^{(b)}, h^*)\right\}\right] 
=\, & \sP\left\{\left(\tilde r^{(b)} - \tfrac{1}{2}\right)\left(r^{(b)} - \tfrac{1}{2}\right) \ge 0\right\} \\
\approx\, & \Phi\left\{ \sqrt{n_1} \frac{|r^{(b)} - 1/2|}{\sqrt{r^{(b)}(1 - r^{(b)})}} \right\},
\end{align*}
where \( \Phi \) is the cumulative distribution function of the standard normal distribution.
\end{proof}

\begin{proof}[of Theorem \ref{thm:erma}]
Let \( f^\dagger(z) = \sum_{b=1}^B w_b f^{*(b)}(z) \) be the model-averaged classifier, where each \( f^{*(b)} \in \{f^{(b)}, -f^{(b)}\} \) is selected via model reversal to maximize estimated accuracy, and the weights \( \{w_b\}_{b=1}^B \) sum to 1. 
Let \( \tilde r_z^{*(b)} = \sP\{\text{sign}[f^{*(b)}(z)] = y\} \) denote the classification accuracy of the \( b \)-th classifier at point \( z \). Let \( F_z^\ve \) be the distribution of \( \tilde r_z^{*(b)} \) under random training and evaluation sets, with support \([r_{z,0}, r_{z,1}]\). Define the aggregated mapping
\[
\eta^{\dagger\ve}(z) := \sum_{b=1}^B w_b \eta^{*(b)}(z),
\]
where \( \eta^{*(b)}(z) := \sP\{Y = 1 \mid Z=z, f^{*(b)}\} \) is the latent regression function implied by classifier \( f^{*(b)} \). Under standard assumptions, the classification accuracy of \( f^{*(b)}(z) \) is
\[
\tilde r_z^{*(b)} = \frac{1}{2} + \left| \eta^{*(b)}(z) - \frac{1}{2} \right|.
\]

As \( B \to \infty \), the empirical average of classifiers concentrates around those with higher accuracy, due to the weight truncation scheme that sets \( w_b = 0 \) if \( \tilde r_z^{*(b)} < r_0 \). Since the condition \( B_0/B \to 0 \) ensures that only a vanishing fraction of classifiers fall below the cutoff \( r_0 \), the weighted average function \( \eta^{\dagger\ve}(z) \) satisfies
\[
\left| \eta^{\dagger\ve}(z) - \frac{1}{2} \right| \;\overset{\sP}{\longrightarrow}\; r_{z,1} - \frac{1}{2}.
\]

Now, recall from Theorem~\ref{thm:ermr} that for each classifier \( f^{*(b)} \), the excess risk is bounded by
\[
L_P(f^{*(b)}, h^*) \le L_{P^{\ve}}(f^{*(b)}, h^{*\ve}) + \frac{3}{4} d_{\text{TV}}(P_Z, P_Z^{\ve}) + 2\,\sE_{P_Z}\left( \left| \eta(z) - \tfrac{1}{2} \right| - \left| \eta^{*(b)}(z) - \tfrac{1}{2} \right| \right).
\]

Taking a convex combination over \( b \), and using the fact that the minimum excess risk over \(\{f^{*(b)}\}\) is preserved in expectation under model averaging, we obtain
\begin{align*}
L_P(f^\dagger, h^*) 
&\le L_{P^{\ve}}(f^\dagger, h^{*\ve}) + \frac{3}{4} d_{\text{TV}}(P_Z, P_Z^{\ve}) \\
&\quad + 2\,\sE_{P_Z}\left( \left| \eta(z) - \tfrac{1}{2} \right| - \left| \eta^{\dagger,\ve}(z) - \tfrac{1}{2} \right| \right).
\end{align*}

As discussed, \( \left| \eta^{\dagger,\ve}(z) - \tfrac{1}{2} \right| \to r_{z,1} - \tfrac{1}{2} \) in probability, which establishes the theorem.
\end{proof}

\begin{proof}[of Theorem \ref{thm:ldp1}]
Let $u=(x(t),y)$ represent the possible observation of a client, and $v=(\bz^\vez,y^\vey)$ represent the possible report of a client. Let $f_{v|u}(v|u)$ be the conditional density of $v$ given $u$. Then for any possible output  $v$ of $\gM_f(u)$, by the sequential composition theorem \citep{mcsherry2007mechanism}, we have
\begin{align*}
    \frac{f_{v|u}(v|u_1)}{f_{v|u}(v|u_2)}=&\frac{P(y^\vey|y_1)}{P(y^\vey|y_2)}\prod_{k=1}^d\frac{f(z_k^\vez-z_{1,k}^*)}{f(z_k^\vez-z_{2,k}^*)}\\
    \le&\max(1,\frac{q}{1-q},\frac{1-q}{q})\prod_{k=1}^d\exp(\frac{\vz}{d\Delta}(|z_k^\vez-z_{1,k}^*|-|z_k^\vez-z_{2,k}^*|))\\
    \le&e^{\vy}\prod_{k=1}^d \exp(\frac{\vz}{d})=e^{\vz+\vy}=e^{\varepsilon},
\end{align*}
where $q=e^{\vy}/(1+e^{\vy})$. Thus $\gM_f$ satisfies $\varepsilon$-local differential privacy.
\end{proof}

\vskip 0.2in
\bibliography{refs}

@inproceedings{truex2020ldp,
  title={LDP-Fed: Federated learning with local differential privacy},
  author={Truex, Stacey and Liu, Ling and Chow, Ka-Ho and Gursoy, Mehmet Emre and Wei, Wenqi},
  booktitle={Proceedings of the third ACM international workshop on edge systems, analytics and networking},
  pages={61--66},
  year={2020}
}

@inproceedings{sun2020ldp,
  title={LDP-FL: Practical private aggregation in federated learning with local differential privacy},
  author={Sun, Lichao and Qian, Jianwei and Chen, Xun},
  booktitle = {Proceedings of the Thirtieth International Joint Conference on
               Artificial Intelligence, {IJCAI-21}},
  publisher = {International Joint Conferences on Artificial Intelligence Organization},
  editor    = {Zhi-Hua Zhou},
  pages     = {1571--1578},
  year      = {2021}
}

@article{li2024update,
  title={An update on measurement error modeling},
  author={Li, Mushan and Ma, Yanyuan},
  journal={Annual Review of Statistics and Its Application},
  volume={11},
  year={2024},
  publisher={Annual Reviews}
}

@article{svm2008,
  title={Statistical performance of support vector machines},
  author={Blanchard, Gilles and Bousquet, Olivier and Massart, Pascal},
  journal={The Annals of Statistics},
  volume={36},
  number={2},
  pages={489--531},
  year={2008},
  publisher={Institute of Mathematical Statistics}
}

@article{hall2001functional,
  title={A functional data—analytic approach to signal discrimination},
  author={Hall, Peter and Poskitt, Donald S and Presnell, Brett},
  journal={Technometrics},
  volume={43},
  number={1},
  pages={1--9},
  year={2001},
  publisher={Taylor \& Francis}
}

@article{dai2017optimal,
  title={Optimal Bayes classifiers for functional data and density ratios},
  author={Dai, Xiongtao and M{\"u}ller, Hans-Georg and Yao, Fang},
  journal={Biometrika},
  volume={104},
  number={3},
  pages={545--560},
  year={2017},
  publisher={Oxford University Press}
}

@article{leng2006classification,
  title={Classification using functional data analysis for temporal gene expression data},
  author={Leng, Xiaoyan and M{\"u}ller, Hans-Georg},
  journal={Bioinformatics},
  volume={22},
  number={1},
  pages={68--76},
  year={2006},
  publisher={Oxford University Press}
}

@article{saravanan2014review,
  title={Review on classification based on artificial neural networks},
  author={Saravanan, Kl and Sasithra, S},
  journal={Int J Ambient Syst Appl},
  volume={2},
  number={4},
  pages={11--18},
  year={2014}
}

@article{kotsiantis2007supervised,
  title={Supervised machine learning: A review of classification techniques},
  author={Kotsiantis, Sotiris B and Zaharakis, Ioannis and Pintelas, P and others},
  journal={Emerging artificial intelligence applications in computer engineering},
  volume={160},
  number={1},
  pages={3--24},
  year={2007},
  publisher={Amsterdam}
}

@article{kumari2017machine,
  title={Machine learning: A review on binary classification},
  author={Kumari, Roshan and Srivastava, Saurabh Kr},
  journal={International Journal of Computer Applications},
  volume={160},
  number={7},
  year={2017},
  publisher={Foundation of Computer Science}
}

@inproceedings{yilmaz2020naive,
  title={Naive Bayes classification under local differential privacy},
  author={Yilmaz, Emre and Al-Rubaie, Mohammad and Chang, J Morris},
  booktitle={2020 IEEE 7th International Conference on Data Science and Advanced Analytics (DSAA)},
  pages={709--718},
  year={2020},
  organization={IEEE}
}

@article{berrett2019classification,
  title={Classification under local differential privacy},
  author={Berrett, Thomas and Butucea, Cristina},
  journal={arXiv preprint arXiv:1912.04629},
  year={2019}
}

@inproceedings{blitzer2006domain,
  title={Domain adaptation with structural correspondence learning},
  author={Blitzer, John and McDonald, Ryan and Pereira, Fernando},
  booktitle={Proceedings of the 2006 conference on empirical methods in natural language processing},
  pages={120--128},
  year={2006}
}

@article{pan2009survey,
  title={A survey on transfer learning},
  author={Pan, Sinno Jialin and Yang, Qiang},
  journal={IEEE Transactions on knowledge and data engineering},
  volume={22},
  number={10},
  pages={1345--1359},
  year={2009},
  publisher={IEEE}
}

@article{qin2024adaptive,
    author = {Qin, Caihong and Xie, Jinhan and Li, Ting and Bai, Yang},
    title = {An Adaptive Transfer Learning Framework for Functional Classification},
    journal = {Journal of the American Statistical Association},
    volume = {0},
    number = {ja},
    pages = {1--22},
    year = {2024},
    doi = {10.1080/01621459.2024.2403788}
}

@article{ma2024optimal,
  title={Optimal Locally Private Nonparametric Classification with Public Data},
  author={Ma, Yuheng and Yang, Hanfang},
  journal={Journal of Machine Learning Research},
  volume={25},
  number={167},
  pages={1--62},
  year={2024}
}

@inproceedings{dwork2006calibrating,
  title={Calibrating noise to sensitivity in private data analysis},
  author={Dwork, Cynthia and McSherry, Frank and Nissim, Kobbi and Smith, Adam},
  booktitle={Theory of Cryptography: Third Theory of Cryptography Conference, TCC 2006, New York, NY, USA, March 4-7, 2006. Proceedings 3},
  pages={265--284},
  year={2006},
  organization={Springer}
}

@article{warner1965randomized,
  title={Randomized response: A survey technique for eliminating evasive answer bias},
  author={Warner, Stanley L},
  journal={Journal of the American Statistical Association},
  volume={60},
  number={309},
  pages={63--69},
  year={1965},
  publisher={Taylor \& Francis}
}

@article{kasiviswanathan2011can,
  title={What can we learn privately?},
  author={Kasiviswanathan, Shiva Prasad and Lee, Homin K and Nissim, Kobbi and Raskhodnikova, Sofya and Smith, Adam},
  journal={SIAM Journal on Computing},
  volume={40},
  number={3},
  pages={793--826},
  year={2011},
  publisher={SIAM}
}

@book{ramsay2005,
  title={Functional data analysis},
  author={Ramsay, James O and Silverman, Bernhard W},
  year={2005},
  publisher={Springer, New York}
}

@inproceedings{mcsherry2007mechanism,
  title={Mechanism design via differential privacy},
  author={McSherry, Frank and Talwar, Kunal},
  booktitle={48th Annual IEEE Symposium on Foundations of Computer Science (FOCS'07)},
  pages={94--103},
  year={2007},
  organization={IEEE}
}

@article{li2022transfer,
  title={Transfer learning for high-dimensional linear regression: Prediction, estimation and minimax optimality},
  author={Li, Sai and Cai, T Tony and Li, Hongzhe},
  journal={Journal of the Royal Statistical Society Series B: Statistical Methodology},
  volume={84},
  number={1},
  pages={149--173},
  year={2022},
  publisher={Oxford University Press}
}

@article{kraus2019classification,
  title={Classification of functional fragments by regularized linear classifiers with domain selection},
  author={Kraus, David and Stefanucci, Marco},
  journal={Biometrika},
  volume={106},
  number={1},
  pages={161--180},
  year={2019},
  publisher={Oxford University Press}
}

@article{sang2022reproducing,
  title={A Reproducing Kernel Hilbert Space Framework for Functional Classification},
  author={Sang, Peijun and Kashlak, Adam B and Kong, Linglong},
  journal={Journal of Computational and Graphical Statistics},
  pages={1--9},
  year={2022},
  publisher={Taylor \& Francis}
}

@article{yang2020local,
  title={Local differential privacy and its applications: A comprehensive survey},
  author={Yang, Mengmeng and Lyu, Lingjuan and Zhao, Jun and Zhu, Tianqing and Lam, Kwok-Yan},
  journal={arXiv preprint arXiv:2008.03686},
  year={2020}
}

@inproceedings{ye2020local,
  title={Local differential privacy: Tools, challenges, and opportunities},
  author={Ye, Qingqing and Hu, Haibo},
  booktitle={International conference on web information systems engineering},
  pages={13--23},
  year={2020},
  organization={Springer}
}

@article{duchi2018minimax,
  title={Minimax optimal procedures for locally private estimation},
  author={Duchi, John C and Jordan, Michael I and Wainwright, Martin J},
  journal={Journal of the American Statistical Association},
  volume={113},
  number={521},
  pages={182--201},
  year={2018},
  publisher={Taylor \& Francis}
}

@inproceedings{liu2020fedsel,
  title={Fedsel: Federated sgd under local differential privacy with top-k dimension selection},
  author={Liu, Ruixuan and Cao, Yang and Yoshikawa, Masatoshi and Chen, Hong},
  booktitle={Database Systems for Advanced Applications: 25th International Conference, DASFAA 2020, Jeju, South Korea, September 24--27, 2020, Proceedings, Part I 25},
  pages={485--501},
  year={2020},
  organization={Springer}
}

@inproceedings{wang2019collecting,
  title={Collecting and analyzing multidimensional data with local differential privacy},
  author={Wang, Ning and Xiao, Xiaokui and Yang, Yin and Zhao, Jun and Hui, Siu Cheung and Shin, Hyejin and Shin, Junbum and Yu, Ge},
  booktitle={2019 IEEE 35th International Conference on Data Engineering (ICDE)},
  pages={638--649},
  year={2019},
  organization={IEEE}
}

@article{lin2022transfer,
  title={Transfer learning for functional linear regression with structural interpretability},
  author={Lin, Haotian and Reimherr, Matthew},
  journal={arXiv preprint arXiv:2206.04277},
  year={2022}
}

@article{tian2022transfer,
  title={Transfer learning under high-dimensional generalized linear models},
  author={Tian, Ye and Feng, Yang},
  journal={Journal of the American Statistical Association},
  volume={0},
  number={0},
  pages={1--14},
  year={2022},
  publisher={Taylor \& Francis}
}

@incollection{torrey2010transfer,
  title={Transfer learning},
  author={Torrey, Lisa and Shavlik, Jude},
  booktitle={Handbook of research on machine learning applications and trends: algorithms, methods, and techniques},
  pages={242--264},
  year={2010},
  publisher={IGI global}
}

@article{cai2021transfer,
author = {Cai, T Tony and Wei, Hongji},
title = {Transfer learning for nonparametric classification: Minimax rate and adaptive classifier},
volume = {49},
journal = {The Annals of Statistics},
number = {1},
publisher = {Institute of Mathematical Statistics},
pages = {100 -- 128},
year = {2021}
}

@article{weiss2016survey,
  title={A survey of transfer learning},
  author={Weiss, Karl and Khoshgoftaar, Taghi M and Wang, DingDing},
  journal={Journal of Big data},
  volume={3},
  number={1},
  pages={1--40},
  year={2016},
  publisher={SpringerOpen}
}

@article{wang2019locally,
  title={Locally private high-dimensional crowdsourced data release based on copula functions},
  author={Wang, Teng and Yang, Xinyu and Ren, Xuebin and Yu, Wei and Yang, Shusen},
  journal={IEEE Transactions on Services Computing},
  volume={15},
  number={2},
  pages={778--792},
  year={2019},
  publisher={IEEE}
}

@article{nguyen2016collecting,
  title={Collecting and analyzing data from smart device users with local differential privacy},
  author={Nguy{\^e}n, Th{\^o}ng T and Xiao, Xiaokui and Yang, Yin and Hui, Siu Cheung and Shin, Hyejin and Shin, Junbum},
  journal={arXiv preprint arXiv:1606.05053},
  year={2016}
}

@inproceedings{arcolezi2021random,
  title={Random sampling plus fake data: Multidimensional frequency estimates with local differential privacy},
  author={Arcolezi, H{\'e}ber H and Couchot, Jean-Fran{\c{c}}ois and Al Bouna, Bechara and Xiao, Xiaokui},
  booktitle={Proceedings of the 30th ACM International Conference on Information \& Knowledge Management},
  pages={47--57},
  year={2021}
}

@article{arcolezi2022improving,
  title={Improving the utility of locally differentially private protocols for longitudinal and multidimensional frequency estimates},
  author={Arcolezi, H{\'e}ber H and Couchot, Jean-Fran{\c{c}}ois and Al Bouna, Bechara and Xiao, Xiaokui},
  journal={Digital Communications and Networks},
  year={2022},
  publisher={Elsevier}
}

@online{appleprivacy,
    author    = {Differential Privacy Team, Apple},
    title     = {Learning with Privacy at Scale},
    year      = {2017},
    url       = {https://docs-assets.developer.apple.com/ml-research/papers/learning-with-privacy-at-scale.pdf},
    note      = {Accessed: 20 Sep 2023}
}

@inproceedings{erlingsson2014rappor,
  title={Rappor: Randomized aggregatable privacy-preserving ordinal response},
  author={Erlingsson, {\'U}lfar and Pihur, Vasyl and Korolova, Aleksandra},
  booktitle={Proceedings of the 2014 ACM SIGSAC conference on computer and communications security},
  pages={1054--1067},
  year={2014}
}

@article{ding2017collecting,
  title={Collecting telemetry data privately},
  author={Ding, Bolin and Kulkarni, Janardhan and Yekhanin, Sergey},
  journal={Advances in Neural Information Processing Systems},
  volume={30},
  year={2017}
}

@article{horvath2015,
  title={An introduction to functional data analysis and a principal component approach for testing the equality of mean curves},
  author={Horv{\'a}th, Lajos and Rice, Gregory},
  journal={Revista Matem{\'a}tica Complutense},
  volume={28},
  number={3},
  pages={505--548},
  year={2015},
  publisher={Springer}
}

@article{pomann2016,
  title={A two sample distribution-free test for functional data with application to a diffusion tensor imaging study of multiple sclerosis},
  author={Pomann, Gina-Maria and Staicu, Ana-Maria and Ghosh, Sujit},
  journal={Journal of the Royal Statistical Society. Series C, Applied statistics},
  volume={65},
  number={3},
  pages={395},
  year={2016},
  publisher={NIH Public Access}
}

@book{eubank1999nonparametric,
  title={Nonparametric regression and spline smoothing},
  author={Eubank, Randall L},
  year={1999},
  publisher={CRC press}
}

@article{garofolo1993timit,
  title={Timit acoustic phonetic continuous speech corpus},
  author={Garofolo, John S},
  journal={Linguistic Data Consortium, 1993},
  year={1993}
}

@article{hall2013differential,
  title={Differential privacy for functions and functional data},
  author={Hall, Rob and Rinaldo, Alessandro and Wasserman, Larry},
  journal={The Journal of Machine Learning Research},
  volume={14},
  number={1},
  pages={703--727},
  year={2013},
  publisher={JMLR. org}
}

@inproceedings{mirshani2019formal,
  title={Formal privacy for functional data with gaussian perturbations},
  author={Mirshani, Ardalan and Reimherr, Matthew and Slavkovi{\'c}, Aleksandra},
  booktitle={International Conference on Machine Learning},
  pages={4595--4604},
  year={2019},
  organization={PMLR}
}

@inproceedings{jiang2023functional,
  title={Functional Renyi Differential Privacy for Generative Modeling},
  author={Jiang, Dihong and Sun, Sun and Yu, Yaoliang},
  booktitle={Thirty-seventh Conference on Neural Information Processing Systems},
  year={2023}
}

@article{lin2023differentially,
  title={Differentially Private Functional Summaries via the Independent Component Laplace Process},
  author={Lin, Haotian and Reimherr, Matthew},
  journal={arXiv preprint arXiv:2309.00125},
  year={2023}
}

@book{horvath2012inference,
  title={Inference for functional data with applications},
  author={Horv{\'a}th, Lajos and Kokoszka, Piotr},
  volume={200},
  year={2012},
  publisher={Springer Science \& Business Media}
}

@inproceedings{stisen2015smart,
  title={Smart devices are different: Assessing and mitigatingmobile sensing heterogeneities for activity recognition},
  author={Stisen, Allan and Blunck, Henrik and Bhattacharya, Sourav and Prentow, Thor Siiger and Kj{\ae}rgaard, Mikkel Baun and Dey, Anind and Sonne, Tobias and Jensen, Mads M{\o}ller},
  booktitle={Proceedings of the 13th ACM conference on embedded networked sensor systems},
  pages={127--140},
  year={2015}
}

@article{Bai2023,
  author    = {Bai, Yang and Qin, Caihong and Zhu, Huichen},
  title     = {Functional Two-Sample Test Based on Projection},
  journal   = {Statistica Sinica Preprint},
  year      = {2023},
  volume    = {},
  number    = {},
  pages     = {},
  doi       = {10.5705/ss.202023.0272},
  url       = {http://www.stat.sinica.edu.tw/statistica/}
}

@article{ben2006analysis,
  title={Analysis of representations for domain adaptation},
  author={Ben-David, Shai and Blitzer, John and Crammer, Koby and Pereira, Fernando},
  journal={Advances in neural information processing systems},
  volume={19},
  year={2006}
}

@misc{ko2023excessriskconvergencerates,
      title={On Excess Risk Convergence Rates of Neural Network Classifiers}, 
      author={Hyunouk Ko and Namjoon Suh and Xiaoming Huo},
      year={2023},
      eprint={2309.15075},
      archivePrefix={arXiv},
      primaryClass={stat.ML},
      url={https://arxiv.org/abs/2309.15075}, 
}

@article{liu2022gdp,
  title={Gdp vs. ldp: A survey from the perspective of information-theoretic channel},
  author={Liu, Hai and Peng, Changgen and Tian, Youliang and Long, Shigong and Tian, Feng and Wu, Zhenqiang},
  journal={Entropy},
  volume={24},
  number={3},
  pages={430},
  year={2022},
  publisher={MDPI}
}

@article{salami2023cryptographic,
  title={Cryptographic algorithms: A review of the literature, weaknesses and open challenges},
  author={Salami, Yashar and Khajevand, Vahid and Zeinali, Esmaeil},
  journal={J. Comput. Robot},
  volume={16},
  number={2},
  pages={46--56},
  year={2023}
}

@article{marcolla2022survey,
  title={Survey on fully homomorphic encryption, theory, and applications},
  author={Marcolla, Chiara and Sucasas, Victor and Manzano, Marc and Bassoli, Riccardo and Fitzek, Frank HP and Aaraj, Najwa},
  journal={Proceedings of the IEEE},
  volume={110},
  number={10},
  pages={1572--1609},
  year={2022},
  publisher={IEEE}
}

@article{gong2024practical,
  title={Practical solutions in fully homomorphic encryption: a survey analyzing existing acceleration methods},
  author={Gong, Yanwei and Chang, Xiaolin and Mi{\v{s}}i{\'c}, Jelena and Mi{\v{s}}i{\'c}, Vojislav B and Wang, Jianhua and Zhu, Haoran},
  journal={Cybersecurity},
  volume={7},
  number={1},
  pages={5},
  year={2024},
  publisher={Springer}
}

@article{peng2019danger,
  title={Danger of using fully homomorphic encryption: A look at Microsoft SEAL},
  author={Peng, Zhiniang},
  journal={arXiv preprint arXiv:1906.07127},
  year={2019}
}

@article{erlingsson2020encode,
  title={Encode, shuffle, analyze privacy revisited: Formalizations and empirical evaluation},
  author={Erlingsson, {\'U}lfar and Feldman, Vitaly and Mironov, Ilya and Raghunathan, Ananth and Song, Shuang and Talwar, Kunal and Thakurta, Abhradeep},
  journal={arXiv preprint arXiv:2001.03618},
  year={2020}
}

@article{huang2022lemmas,
  title={Lemmas of differential privacy},
  author={Huang, Yiyang and Canonne, Cl{\'e}ment L},
  journal={arXiv preprint arXiv:2211.11189},
  year={2022}
}

@inproceedings{geumlek2019profile,
  title={Profile-based privacy for locally private computations},
  author={Geumlek, Joseph and Chaudhuri, Kamalika},
  booktitle={2019 IEEE International Symposium on Information Theory (ISIT)},
  pages={537--541},
  year={2019},
  organization={IEEE}
}

@article{qin2023survey,
  title={A Survey of Local Differential Privacy and Its Variants},
  author={Qin, Likun and Wang, Nan and Qiu, Tianshuo},
  journal={arXiv preprint arXiv:2309.00861},
  year={2023}
}

@inproceedings{wang2017locally,
  title={Locally differentially private protocols for frequency estimation},
  author={Wang, Tianhao and Blocki, Jeremiah and Li, Ninghui and Jha, Somesh},
  booktitle={26th USENIX Security Symposium (USENIX Security 17)},
  pages={729--745},
  year={2017}
}

@article{hosna2022transfer,
  title={Transfer learning: a friendly introduction},
  author={Hosna, Asmaul and Merry, Ethel and Gyalmo, Jigmey and Alom, Zulfikar and Aung, Zeyar and Azim, Mohammad Abdul},
  journal={Journal of Big Data},
  volume={9},
  number={1},
  pages={102},
  year={2022},
  publisher={Springer}
}

@article{araveeporn2021higher,
  title={The higher-order of adaptive lasso and elastic net methods for classification on high dimensional data},
  author={Araveeporn, Autcha},
  journal={Mathematics},
  volume={9},
  number={10},
  pages={1091},
  year={2021},
  publisher={MDPI}
}

@article{rawat2017deep,
  title={Deep convolutional neural networks for image classification: A comprehensive review},
  author={Rawat, Waseem and Wang, Zenghui},
  journal={Neural computation},
  volume={29},
  number={9},
  pages={2352--2449},
  year={2017},
  publisher={MIT Press}
}

@inproceedings{xiao2023geometry,
  title={Geometry of sensitivity: twice sampling and hybrid clipping in differential privacy with optimal gaussian noise and application to deep learning},
  author={Xiao, Hanshen and Wan, Jun and Devadas, Srinivas},
  booktitle={Proceedings of the 2023 ACM SIGSAC Conference on Computer and Communications Security},
  pages={2636--2650},
  year={2023}
}

@misc{pore2023diabetes,
  author = {Nandita Pore},
  title = {Healthcare Diabetes Dataset},
  year = {2023},
  howpublished = {\url{https://www.kaggle.com/datasets/nanditapore/healthcare-diabetes/data}}
}

@misc{elmetwally2023employee,
  author = {Tawfik Elmetwally},
  title = {Employee Dataset},
  year = {2023},
  howpublished = {\url{https://www.kaggle.com/datasets/tawfikelmetwally/employee-dataset/data}}
}

@article{turk2009cardiac,
  title={Cardiac health: primary prevention of heart disease in women},
  author={Turk, Melanie Warziski and Tuite, Patricia K and Burke, Lora E},
  journal={The Nursing clinics of North America},
  volume={44},
  number={3},
  pages={315},
  year={2009}
}

@book{leadbetter2012extremes,
  title={Extremes and related properties of random sequences and processes},
  author={Leadbetter, Malcolm R and Lindgren, Georg and Rootz{\'e}n, Holger},
  year={2012},
  publisher={Springer Science \& Business Media}
}

@inproceedings{meehan2022privacy,
  title={Privacy implications of shuffling},
  author={Meehan, Casey and Chowdhury, Amrita Roy and Chaudhuri, Kamalika and Jha, Somesh},
  booktitle={International Conference on Learning Representations},
  year={2022}
}
\end{document}